\begin{document}
\title{Convergence Analysis for General Probability Flow ODEs of Diffusion Models in
Wasserstein Distances}




\author{%
 \name Xuefeng Gao \email xfgao@se.cuhk.edu.hk\\
 \addr Department of Systems Engineering and Engineering Management\\ The Chinese University of Hong Kong, Shatin, N.T. Hong Kong.
  \AND
\name Lingjiong Zhu  \email zhu@math.fsu.edu \\
 \addr Department of Mathematics\\ Florida State University, Tallahassee, FL, USA.
}

\maketitle

\begin{abstract}
Score-based generative modeling with probability flow ordinary differential equations (ODEs) has achieved remarkable success in a variety of applications. While various fast ODE-based samplers have been proposed in the literature and employed in practice, the theoretical understandings about convergence properties of the probability flow ODE are still quite limited. In this paper, we provide the first non-asymptotic convergence analysis for a general class of probability flow ODE samplers in 2-Wasserstein distance, assuming accurate score estimates and smooth log-concave data distributions. We then consider various examples and establish results on the iteration complexity of the corresponding ODE-based samplers. Our proof technique relies on spelling out explicitly the contraction rate for the continuous-time ODE and analyzing the discretization and score-matching errors using synchronous
 coupling; the challenge in our analysis mainly arises from the inherent non-autonomy of the probability flow ODE and the specific exponential integrator that we study. 
\end{abstract}

\begin{keywords}%
  Probability flow ODEs, diffusion models, Wasserstein convergence.
\end{keywords} 

\section{Introduction}

Score-based generative models (SGMs) \citep{Sohl2015, SongErmon2019, Ho2020, SongICLR2021}, or diffusion models, have achieved remarkable success in a range of applications, particularly in the realm of image and audio generation \citep{rombach2022high, ramesh2022hierarchical, popov2021grad}. These models employ a unique approach where samples from a target data distribution are progressively corrupted with noise through a forward process. Subsequently, the models learn to reverse this corrupted process in order to generate new samples.

In this paper, we aim to provide theoretical guarantees for the probability flow ODE (ordinary differential equation) implementation of SGMs, proposed initially in \cite{SongICLR2021}. The forward process in SGMs, denoted by
$(\mathbf{x}_{t})_{t\in[0,T]}$,  follows the stochastic differential equation (SDE):
\begin{equation}\label{OU:SDE}
\mathrm{d}\mathbf{x}_{t}=-f(t)\mathbf{x}_{t}\mathrm{d}t+g(t)\mathrm{d}\mathbf{B}_{t}, \quad  \mathbf{x}_{0}\sim p_{0},
\end{equation}
where both $f(t)$ and $g(t)$ are scalar-valued non-negative continuous functions of time $t$, $(\mathbf{B}_{t})$ is the standard $d$-dimensional Brownian motion, and $p_0$ is the $d-$dimensional (unknown) target data distribution. Two popular choices of forward processes in the literature are Variance Exploding (VE) SDEs and Variance Preserving (VP) SDEs \citep{SongICLR2021}; see Section~\ref{sec:prelim} for more details. 
If we denote $p_{t}(\mathbf{x})$ as the probability density function of $\mathbf{x}_t$ in \eqref{OU:SDE}, then \cite{SongICLR2021} showed that there exists an ODE:
\begin{align} 
&\frac{\mathrm{d}\tilde{\mathbf{x}}_{t}}{\mathrm{d}t}=f(T-t)\tilde{\mathbf{x}}_{t}+ \frac{1}{2}(g(T-t))^{2}\nabla_{\mathbf{x}}\log p_{T-t}(\tilde{\mathbf{x}}_{t}),
\nonumber
\\
&\tilde{\mathbf{x}}_{0}\sim p_{T},\label{eq:ODEReverse}
\end{align}
where the solution $\tilde{\mathbf{x}}_{t}$ at time $t\in[0,T]$, is distributed according to $p_{T-t}$. 
The ODE \eqref{eq:ODEReverse} is called the \textit{probability flow ODE}.  
Note that the probability flow ODE involves the \textit{score function}, $\nabla_{\mathbf{x}}\log p_{t}(\mathbf{x})$, which is unknown. In practice, it can be approximated using neural networks which are trained with appropriate score-matching techniques \citep{ hyvarinen2005estimation, vincent2011connection, song2020sliced}. Once the score function is estimated, one can sample $\tilde{\mathbf{x}}_{0}$ from a normal distribution to initialize the ODE, and numerically solve the ODE forward in time with any ODE solvers such as Euler \cite{SongICLR2021} or Heun's 2nd order method \cite{Karras2022}. The resulting sample generated at time $T$ can be viewed as an approximate sample from the target distribution since $\tilde{\mathbf{x}}_{T} \sim p_0$.

A large body of work on diffusion models has recently investigated various methods for faster generation of samples based on the probability flow ODE; see, e.g., \citep{lu2022dpm, Karras2022, zhang2023fast, zhao2023unipc, song2023consistency}. While these methods are quite effective in practice, the theoretical understandings of this probability flow ODE approach are still quite limited. To our best knowledge, \cite{chen2023restoration} established the first non-asymptotic convergence guarantees for the probability flow ODE sampler in the Kullback-Leibler (KL) divergence, but it did not provide concrete polynomial dependency on the dimension $d$ and $1/\epsilon$, where $\epsilon$ is the prescribed error between the data distribution and the generated distribution. \cite{chen2023probability} considered a specific VP-SDE as the forward process (where $f\equiv 1$ and $g\equiv\sqrt{2}$), and provided polynomial-time guarantees for some variants of the probability flow ODE in the total variation {(TV)} distance. Specifically, the algorithms they analyzed rely on the use of stochastic corrector steps, so the samplers are not fully deterministic. \cite{li2023towards} also considered a specific VP-SDE as the forward processes and analyzed directly a discrete-time version of probability flow ODEs and obtained convergence guarantees for fully deterministic samplers in TV. Very recently, 
\cite{li2024accelerating} extended \cite{li2023towards} and 
established nonasymptotic convergence guarantees in TV for the accelerated DDIM-type deterministic sampler.



These theoretical studies have mostly focused on convergence analysis of probability flow ODE samplers in TV distance between the data distribution and the generated distribution. However, practitioners are often interested in the 2-Wasserstein ($\mathcal{W}_{2}$) distance. For instance, in image-related tasks, Fr\'{e}chet Inception Distance (FID) is a widely adopted performance metric for evaluating the quality of generated samples, where FID measures the $\mathcal{W}_{2}$ distance between the distribution of generated images and the distribution of real images \cite{heusel2017gans}. We also note that TV distance does not upper bound 2-Wasserstein distance on $\mathbb{R}^{d}$ (see e.g. \cite{GibbsSu2002}).
In addition, previous works \citep{chen2023probability, li2023towards, li2024accelerating} have studied specific choices of $f$ and $g$ in their convergence analysis of ODE-based samplers. However, it has been shown in \cite{SongICLR2021} that the empirical performance of probability flow ODE samplers depends crucially on the choice of $f$ and $g$ in \eqref{eq:ODEReverse}, which indicates the importance of selecting these parameters (noise schedules) of diffusion models.  
This leads to the following question which we study in this paper:
\begin{center}
Can we establish Wasserstein convergence guarantees for probability flow ODE samplers 
 with general functions $f$ and $g$? 
\end{center}

\textbf{Our Contributions.}  
\begin{itemize}
\item We establish non-asymptotic convergence guarantees for a general class of probability flow ODEs in 2-Wasserstein distance, assuming that the score function can be accurately learned and the data distribution has a smooth and strongly log-concave density (Theorem~\ref{thm:discrete:2}). In particular, we allow general functions $f$ and $g$ in the probability flow ODE \eqref{eq:ODEReverse}, and our results apply to both VP and VE SDE models.  
Theorem~\ref{thm:discrete:2} directly translates to an upper bound on the iteration complexity, which is the number of sampling steps needed for the ODE sampler to yield $\epsilon-$accuracy in 2-Wasserstein distance between the data distribution and the generative distribution of the SGMs. 

\item We specialize our result to ODE samplers with specific functions $f$ and $g$ that are commonly used in the literature, and we find the complexity of VE-SDEs is worse than that of VP-SDEs for the examples we analyze (see Table~\ref{table:summary:complexity} for details). This theoretical finding is consistent with the empirical observation in \cite{SongICLR2021}, where they found that the empirical performance of probability flow ODE samplers depends on the choice of $f$ and $g$, and the sample quality for VE-SDEs is much worse than VP-SDEs for high-dimensional data. 

\item We obtain an iteration complexity bound $\widetilde{\mathcal{O}}\left( \sqrt{d}/\epsilon \right)$ of the ODE sampler for each of the three examples of VP-SDEs studied,
where $d$ is the dimension of the data distribution and $\widetilde{\mathcal{O}}$ ignores the logarithmic factors and hides dependency on other parameters. We also show that (see Proposition~\ref{prop:lower:bound}) under mild assumptions there are no other choices of $f$ and $g$ so that the iteration complexity can be better than $\widetilde{\mathcal{O}}\left( \sqrt{d}/\epsilon \right)$.  

\item 
Our main proof technique relies on spelling out an explicit contraction rate in the continuous-time ODE
and providing a careful analysis in controlling the discretization and score-matching errors.
Our proof technique is inspired by iteration complexity results in the context
of Langevin algorithms for sampling in the literature \citep{DK2017}; the smooth strong log-concavity of data distribution allows us to obtain contraction for the probability flow ODE using the synchronous coupling. Yet our analysis is more sophisticated
and subtle even under the smooth strong log-concavity assumption. 
First, in the literature of sampling strongly-log-concave target densities,
where Langevin algorithms are often used, the underlying dynamics is time-homogeneous,
and the strong log-concavity and smoothness assumptions can be directly used (see e.g. \cite{DK2017}), 
whereas for probability flow ODEs, it is non-autonomous,
and the strong-log-concavity and smoothness constants need to be carefully analyzed and spelled out
which are time-dependent. Second, the usage of exponential integrator
also makes the analysis more subtle, as compared to the Euler discretization of the ODE. We incorporate the exponential integrator and the interpolation in the discretization to design an continuous-time auxiliary ODE, and then control the error between this auxiliary ODE and the probability flow ODE. 
Finally, in order to further analyze the VE-SDE and VP-SDE examples in Section~\ref{sec:examples} (see Corollaries~\ref{cor:VE:5:main:paper}-\ref{cor:VP:6:main:paper}, Proposition~\ref{prop:lower:bound}),
the inherent non-autonomous property of the probability flow ODE makes the analysis much trickier and subtle. One has to perform
a series of inequalities based on the formula in Theorem~\ref{thm:discrete:2} in order to spell out the leading
order terms to obtain the iteration complexities.
\end{itemize}

\subsection{Related Work}

In addition to the deterministic sampler based on probability flow ODEs, another major family of diffusion samplers is based on discretization of the reverse-time SDE, which is obtained by reversing the forward process \eqref{OU:SDE} in time \citep{Anderson1982, cattiaux2023time}). This leads to SDE-based stochastic samplers due to the noise in the reverse-time SDE. Compared with SDE-based samplers,  ODE-based deterministic samplers are often claimed to converge much faster, at the cost of slightly inferior sample quality; see e.g. \cite{yang2022diffusion}. 
In recent years, there has been a significant surge in research focused on the convergence theory of SDE-based stochastic samplers for diffusion models, particularly when assuming access to precise estimates of the score function; see, e.g., \cite{block2020generative, de2021diffusion, de2022convergence, leeconvergence, lee2023convergence, chen2022improved, chen2022sampling, li2023towards, chen2023score, gao2023wasserstein, benton2023linear, tang24}. These studies have mostly focused on the convergence analysis of SDE-based stochastic samplers in TV or KL divergence. \citep{de2022convergence, chen2022improved, chen2022sampling} provided Wasserstein
convergence bound for the SDE-based sampler for the DDPM model in \cite{Ho2020} for bounded data distribution, in
which case the 2-Wasserstein distance can be bounded by the TV distance.
 \cite{gao2023wasserstein} established convergence guarantees for SDE-based samplers for a general class of SGMs in 2-Wasserstein distance, for unbounded smooth log-concave data distribution. Our study differs from these studies in that we consider deterministic samplers based on the probability flow ODE implementation of SGMs.

Our work is also broadly related to flow based methods or flow matching, see e.g. \cite{lipman2022flow, albergo2022building}. The flow matching framework is more general than the probability flow ODE, because it approximates a flow between two arbitrary distributions. In particular, flow matching reduces to probability flow ODE for diffusion models when the starting distribution is Gaussian. The theoretical analysis of flow matching methods are still limited, and there are currently very few error bounds in Wasserstein distance for flow matching methods with a fully deterministic sampling scheme. Two very recent studies on such bounds are \cite{benton2023error} and \cite{albergo2023stochastic}. While both studies have considered the errors due to the approximate flow/velocity, they do not consider the error arising from numerically solving the ODE. Hence, these results are not about the iteration complexity of the sampling scheme, which we study for the probability flow ODE. We also mention that \cite{cheng2023convergence} provided convergence guarantees for a progressive flow
model, which differs from score-based diffusion models in that the flow model they consider is deterministic in the forward (data-to-noise) process. Finally,
\cite{nie2024blessing} showed that the KL divergence between the marginal distributions of two probability flow ODEs (with estimated score functions) with mismatched prior distributions remains constant as the time approaches zero.  In contrast, we show the contraction of probability flow ODEs in Wasserstein distance for unbounded (strongly log-concave) data distributions.

\textbf{Notations.} 
\begin{itemize}
\item
For any $d$-dimensional random vector $\mathbf{x}$ with finite second moment, 
the $L_{2}$-norm of $\mathbf{x}$ is defined as
$\Vert\mathbf{x}\Vert_{L_{2}}=\sqrt{\mathbb{E}\Vert\mathbf{x}\Vert^{2}}$, 
where $\Vert\cdot\Vert$ denotes the Euclidean norm. 
\item 
We denote $\mathcal{L}(\mathbf{x})$ as the law of $\mathbf{x}$.
\item 
For any two Borel probability measures $\mu_{1},\mu_{2}\in\mathcal{P}_{2}(\mathbb{R}^{d})$, 
the space consisting of all the Borel probability measures 
on $\mathbb{R}^{d}$ with the finite second moment
(based on the Euclidean norm), 
the standard $2$-Wasserstein
distance \cite{villani2008optimal} is defined by
$$
\mathcal{W}_{2}(\mu_{1},\mu_{2}):=\sqrt{\inf\mathbb{E}\left[\Vert\mathbf{x}_{1}-\mathbf{x}_{2}\Vert^{2}\right]},
$$
where the infimum is taken over all joint distributions of the random vectors $\mathbf{x}_{1},\mathbf{x}_{2}$ with marginal distributions $\mu_{1},\mu_{2}$. 
\item 
A differentiable function $F$ from $\mathbb{R}^{d}$ to $\mathbb{R}$ is said to be
$\mu$-strongly convex and $L$-smooth (i.e. $\nabla F$ is $L$-Lipschitz) if for every $\mathbf{u},\mathbf{v}\in\mathbb{R}^{d}$,
\begin{align*}
\frac{\mu}{2}\Vert \mathbf{u}-\mathbf{v}\Vert^{2} 
\leq F(\mathbf{u})-F(\mathbf{v}) - \nabla F(\mathbf{v})^{\top}(\mathbf{u}-\mathbf{v})
\leq 
\frac{L}{2}\Vert\mathbf{u}-\mathbf{v}\Vert^{2}.
\end{align*}
\end{itemize}

\section{Preliminaries}\label{sec:prelim}

Recall $p_{0} \in \mathcal{P} (\mathbb{R}^d) $ denotes the unknown data distribution which has a density, where $\mathcal{P}(\mathbb{R}^{d})$ is the space of all probability measures on $\mathbb{R}^{d}$. Given i.i.d. samples from $p_{0}$, the problem of generative modelling is to generate new samples that (approximately) follow the data distribution.

We consider the probability flow ODE (see \eqref{eq:ODEReverse}) based implementation of SGMs for sample generation, see e.g. \cite{SongICLR2021}. The functions $f$ and $g$ in \eqref{eq:ODEReverse} can be general in our study, and in particular, our study covers the following two classes of models commonly used in the literature: (1) $f(t) \equiv 0$ and $g(t) = \sqrt{\frac{d[\sigma^2(t)]}{\mathrm{d}t}}$ for some nondecreasing function $\sigma(t)$, e.g., $g(t)=ae^{bt}$ for some positive constants $a, b$. The corresponding forward SDE \eqref{OU:SDE} is referred to as Variance Exploding (VE) SDE. (2) $f(t) = \frac{1}{2}\beta(t) $ and $g(t) = \sqrt{ \beta(t) }$ for some nondecreasing function $\beta(t)$, e.g., $\beta(t) = at +b$ for some positive constants $a, b.$ The corresponding forward SDE \eqref{OU:SDE} is referred to as Variance Preserving (VP) SDE.

To implement the ODE \eqref{eq:ODEReverse}, one needs to 
(a) sample from a tractable distribution (aka prior distribution) to initialize the ODE, 
(b) estimate the score function $\nabla_{\mathbf{x}}\log p_{t}(\mathbf{x})$, and 
(c) numerically solve/integrate the ODE.  
We next discuss these three issues (or error sources).

First, we discuss (a), the prior distribution. Note that $p_T$ is unknown. To provide an unifying analysis for probability flow ODEs with general $f$ and $g$ (including both VE and VP SDE models), we choose the prior distribution to be a normal distribution $\hat{p}_{T}$ given as follows: 
\begin{equation} \label{eq:hatp}
\hat{p}_{T}:=\mathcal{N}\left(0,\int_{0}^{T}e^{-2\int_{t}^{T}f(s)\mathrm{d}s}(g(t))^{2}\mathrm{d}t\cdot I_{d}\right),
\end{equation}
and $I_d$ is the $d$-dimensional identity matrix. To see why this is a reasonable choice, we note that the forward SDE \eqref{OU:SDE} has an explicit solution 
\begin{equation}\label{SDE:solution}
\mathbf{x}_{t}=e^{-\int_{0}^{t}f(s)\mathrm{d}s}\mathbf{x}_{0}+\int_{0}^{t}e^{-\int_{s}^{t}f(v)\mathrm{d}v}g(s)\mathrm{d}\mathbf{B}_{s}.
\end{equation}
Hence, we can the take the distribution of the Brownian integral $\int_{0}^{t}e^{-\int_{s}^{t}f(v)\mathrm{d}v}g(s)\mathrm{d}\mathbf{B}_{s}$ in \eqref{SDE:solution}, which is $\hat{p}_{T}$, as an approximation of $p_T$, so that 
\begin{equation*}
\mathcal{W}_{2}(p_{T},\hat{p}_{T})
\leq
e^{-\int_{0}^{T}f(s)\mathrm{d}s}\Vert\mathbf{x}_{0}\Vert_{L_{2}}. 
\end{equation*}
(See Lemma~\ref{lem:0} in the Appendix for the details.) 
Hence, we consider the ODE:
\begin{align}
&\frac{\mathrm{d}\mathbf{y}_{t}}{\mathrm{d}t}=f(T-t)\mathbf{y}_{t}+\frac{1}{2}(g(T-t))^{2}\nabla\log p_{T-t}(\mathbf{y}_{t}), 
\nonumber
\\
&\mathbf{y}_{0}\sim\hat{p}_{T}, \label{eq:zt}
\end{align}
as an approximation of the ODE \eqref{eq:ODEReverse} which starts from $p_T$. 


\begin{remark}
If the forward process is a VP-SDE with a stationary distribution which is normal, one can also take it as the prior distribution and 
our main result in this paper can be adapted to this setting. 
\end{remark}

We next consider (b) score matching, i.e., approximate the unknown true score function $\nabla_{\mathbf{x}}\log p_{t}(\mathbf{x})$ using a time-dependent score model $\boldsymbol{s}_{\theta}(\mathbf{x},t)$, which is often a deep neural network parameterized by $\theta$. To train the score model, one can use, for instance, 
denoising score matching \cite{SongICLR2021}, where the training objective for optimizing the neural network is given by
\begin{align}
\min_{\theta} \int_0^T  \Big[ \lambda(t) \mathbb{E}_{\mathbf{x}_{0}}  \mathbb{E}_{\mathbf{x}_{t} | \mathbf{x}_{0}} \Big\Vert \boldsymbol{s}_{\theta}(\mathbf{x}_t,t)
-\nabla_{\mathbf{x}_{t}}\log p_{t|0}(\mathbf{x}_{t} | \mathbf{x}_{0})\Big\Vert^2   \Big] \mathrm{d}t.\label{eq:score-matching}
\end{align}
Here, $\lambda(\cdot): [0, T] \rightarrow \mathbb{R}_{>0}$ is some positive weighting function, $\mathbf{x}_{0} \sim p_0$ is the data distribution, and $p_{t|0}(\mathbf{x}_{t} | \mathbf{x}_{0})$ is the density of $\mathbf{x}_{t}$ given $\mathbf{x}_{0}$, which is Gaussian due to the choice of the forward SDE in \eqref{OU:SDE}. Because one has i.i.d. samples from $p_0$, the distribution of $\mathbf{x}_{0}$, the objective in \eqref{eq:score-matching} can be approximated by Monte Carlo methods and the resulting loss function can be then optimized.


After the score function is estimated, we introduce a continuous-time process that approximates \eqref{eq:zt}: 
\begin{align}
&\frac{\mathrm{d}\mathbf{z}_{t}}{\mathrm{d}t}=f(T-t)\mathbf{z}_{t}+\frac{1}{2}(g(T-t))^{2}\boldsymbol{s}_{\theta}(\mathbf{z}_{t},T-t),
\nonumber
\\
&\mathbf{z}_{0}\sim\hat{p}_{T},\label{eq:u}
\end{align}
where we replace the true score function in \eqref{eq:zt} by the estimated score $\boldsymbol{s}_{\theta}$. 


Finally, we discuss (c) numerically solve the ODE \eqref{eq:u} for generation of new samples. There are various methods proposed and employed in practice, including Euler \citep{SongICLR2021}, Heun's 2nd order method \citep{Karras2022}, DPM solver \citep{lu2022dpm}, exponential integrator \citep{zhang2023fast}, to name just a few.
In this paper,  we consider the following exponential integrator (i.e. exactly integrating the linear part) discretization of the ODE \eqref{eq:u} for our theoretical convergence analysis. 
Let $\eta>0$ be the stepsize and without loss of generality,
let us assume that $T=K\eta$, where $K$ is a positive integer. 
Next, we introduce an exponential integrator discretization of the ODE \eqref{eq:u}.
By freezing the nonlinear nonlinear term (i.e. the $\boldsymbol{s}_{\theta}(\mathbf{z}_{t},T-t)$ term on RHS of \eqref{eq:u}) and letting the linear term flow (i.e. the $\mathbf{z}_{t}$ term on RHS of \eqref{eq:u}), we obtain the following ODE approximation of ODE \eqref{eq:u}: for any $(k-1)\eta\leq t<k\eta$,
\begin{equation*}
 \frac{\mathrm{d}\hat{\mathbf{z}}_{t}}{\mathrm{d}t}=f(T-t)\hat{\mathbf{z}}_{t}+\frac{1}{2}(g(T-t))^{2}\boldsymbol{s}_{\theta}(\hat{\mathbf{z}}_{(k-1)\eta},T-(k-1)\eta).
\end{equation*}
 By solving the above (linear) ODE for $(k-1)\eta\leq t<k\eta$, we have 
 \begin{align}
\hat{\mathbf{z}}_{k\eta}
&=e^{\int_{(k-1)\eta}^{k\eta}f(T-t)\mathrm{d}t}\hat{\mathbf{z}}_{(k-1)\eta}
\nonumber
\\
&\qquad
+\frac{1}{2}\boldsymbol{s}_{\theta}(\hat{\mathbf{z}}_{(k-1)\eta},T-(k-1)\eta)
\cdot\int_{(k-1)\eta}^{k\eta}e^{\int_{t}^{k\eta}f(T-s)\mathrm{d}s}(g(T-t))^{2}\mathrm{d}t. 
\end{align}
By letting $\mathbf{u}_{k} = \hat{\mathbf{z}}_{k\eta}$ for any $k$, 
we obtain the iterations for the exponential integrator discretization $(\mathbf{u}_{k})_{k=0}^{\infty}$ of the ODE \eqref{eq:u}: 
for any $k=1,2,\ldots,K$, 
\begin{align}
\mathbf{u}_{k}
&=e^{\int_{(k-1)\eta}^{k\eta}f(T-t)\mathrm{d}t}\mathbf{u}_{k-1}
\nonumber
\\
&\qquad
+\frac{1}{2}\boldsymbol{s}_{\theta}(\mathbf{u}_{k-1},T-(k-1)\eta)
\cdot\int_{(k-1)\eta}^{k\eta}e^{\int_{t}^{k\eta}f(T-s)\mathrm{d}s}(g(T-t))^{2}\mathrm{d}t, \label{eq:yk}
\end{align}
where $\mathbf{u}_{0}\sim\hat{p}_{T}$.


We are interested in the convergence of the generated distribution $\mathcal{L}(\mathbf{u}_{K})$ to the data distribution $p_{0}$, where $\mathcal{L}(\mathbf{u}_{K})$ denotes the law or distribution of $\mathbf{u}_{K}.$
 Specifically, 
our goal is to bound the 2-Wasserstein distance $\mathcal{W}_{2}(\mathcal{L}(\mathbf{u}_{K}),p_{0})$, and investigate the number of iterates $K$ that is needed
in order to achieve $\epsilon$ accuracy, i.e. $\mathcal{W}_{2}(\mathcal{L}(\mathbf{u}_{K}),p_{0})\leq\epsilon$. 


\section{Main Results}

In this section we state our main results. The proofs are deferred to the Appendix. 


\subsection{Assumptions}\label{sec:assump}

The first assumption is on the density of data distribution $p_{0}$, which implies that
$\mathbf{x}_{0} \sim p_0$ is $L_{2}$-integrable.

\begin{assumption}\label{assump:p0}
Assume that the density $p_{0}$ is differentiable and positive everywhere. 
Moreover, $-\nabla \log p_{0}$ is $m_{0}$-strongly convex and $L_{0}$-smooth for some $m_0, L_0>0$. 
\end{assumption}


The assumption of strong-log-concavity data distribution has also been imposed in \citep{bruno2023diffusion, gao2023wasserstein} for convergence analysis of diffusion models. 
We need this assumption mainly because we consider Wasserstein convergence analysis of ODE-based deterministic samplers. In particular, the ODE \eqref{eq:zt} may not have a contraction in 2-Wasserstein distance without such an assumption; see Remark~\ref{remark-conv} in Section~\ref{sec:mainresult} for details.

Our next assumption is about the true score function $\nabla_{\mathbf{x}}\log p_{t}(\mathbf{x})$ for $t \in [0, T]$.
We assume that the score function $\nabla_{\mathbf{x}}\log p_{t}(\mathbf{x})$
is Lipschtiz in time, where
the Lipschitz constant has at most
linear growth in $\Vert\mathbf{x}\Vert$. 
Assumption~\ref{assump:M:1} is needed in controlling the discretization error of the ODE \eqref{eq:u}. For Gaussian data distributions $p_0$, one can compute the score function $\nabla_{\mathbf{x}}\log p_{t}(\mathbf{x})$ analytically based on \eqref{SDE:solution} and readily verify that this assumption holds.


\begin{assumption}\label{assump:M:1}
There exists some constant $L_{1}$ such that for all $\mathbf{x}$:
\begin{align}
\sup_{\substack{1\leq k\leq K\\(k-1)\eta\leq t\leq k\eta}}\left\Vert\nabla\log p_{T-t}(\mathbf{x})-\nabla\log p_{T-(k-1)\eta}(\mathbf{x})\right\Vert
\leq L_{1}\eta(1+\Vert\mathbf{x}\Vert).
\end{align}
\end{assumption}

Most studies on convergence analysis of diffusion models assume some form of $L_2$ error for score learning and focus on the sampling phase. Our next assumption is on this score-matching error. Recall $(\mathbf{u}_{k})$ are the iterates defined in \eqref{eq:yk}. 

\begin{assumption}\label{assump:M}
Assume that there exists $M>0$ such that
\begin{align} \label{eq:assump:M}
\sup_{k=1,\ldots,K}
\Big\Vert\nabla\log p_{T-(k-1)\eta}\left(\mathbf{u}_{k-1}\right)
-\boldsymbol{s}_{\theta}\left(\mathbf{u}_{k-1},T-(k-1)\eta\right)\Big\Vert_{L_{2}}\leq M.
\end{align}
\end{assumption}


 The main result in our paper will still hold
if Assumption \ref{assump:M} is replaced by 
\begin{equation}\label{weakened:assump:2}
\sup_{k=1,\ldots,K}
\left\Vert\nabla\log p_{k\eta}\left(\mathbf{x}_{k\eta}\right)-\boldsymbol{s}_{\theta}\left(\mathbf{x}_{k\eta},k\eta\right)\right\Vert_{L_{2}}\leq M,
\end{equation}
 under the additional assumption that $\boldsymbol{s}_{\theta}(\cdot,k\eta)$ is Lipschitz for every $k$. The condition \eqref{weakened:assump:2} does not involve $(\mathbf{u}_k)$ and it could be easier to interpret. Moreover, Assumption~\ref{assump:M} is related to the score perturbation lemma in the seminal work \cite{chen2023probability}. However, they need to assume the Hessian of score $\nabla^2 \log p_t(\mathbf{x})$ is bounded by $L$ for any $t$ and $\mathbf{x}$, where $L$ is independent of $T$, to obtain the desired dependency on dimension $d$ and $\epsilon$.

\subsection{Main Result}\label{sec:mainresult}

We are now ready to state our main result. 

\begin{theorem}\label{thm:discrete:2}
Suppose that Assumptions~\ref{assump:p0}, \ref{assump:M:1} and \ref{assump:M} hold
and the stepsize $\eta\leq\bar{\eta}$, where $\bar{\eta}>0$ has an explicit formula
given in \eqref{bar:eta} in Appendix~\ref{sec:key:quantities}.
Then, 
\begin{align} 
\mathcal{W}_{2}(\mathcal{L}(\mathbf{u}_{K}),p_{0})
\leq \underbrace{e^{-\int_{0}^{K\eta}\mu(t)\mathrm{d}t}  \cdot \Vert\mathbf{x}_{0}\Vert_{L_{2}} }_{ \textstyle \text{Initialization error}} 
+\underbrace{ E_{1}(f,g, K, \eta, L_1)}_{\textstyle \text{Discretization error}}
+\underbrace{ E_{2}(f,g, K, \eta, M, L_1)}_{\textstyle \text{Score matching error}}.\label{main:thm:upper:bound}
\end{align}
Here, $\mu(t)$ is given in \eqref{c:t:defn} in Appendix~\ref{sec:key:quantities} and
\begin{align}
&E_{1}(f,g, K, \eta, L_1) 
:= \sum_{k=1}^{K}
\prod_{j=k+1}^{K}\gamma_{j,\eta}\cdot e^{\int_{k\eta}^{K\eta}f(T-t)\mathrm{d}t}\nonumber
\\
&
\qquad\qquad\qquad\qquad\qquad\,\,\cdot\Bigg(\frac{L_{1}}{2}\eta\left(1+ \Vert\mathbf{x}_{0}\Vert_{L_{2}} 
+\omega(T)\right)\phi_{k,\eta}
+\frac{\sqrt{\eta}}{2}\nu_{k,\eta}\sqrt{\psi_{k,\eta}}\Bigg),\label{eq:error-disc}
\\
&E_{2}(f,g, K, \eta, M, L_1) 
:= \sum_{k=1}^{K}
\prod_{j=k+1}^{K}\gamma_{j,\eta}\cdot e^{\int_{k\eta}^{K\eta}f(T-t)\mathrm{d}t}
\cdot\frac{M}{2}\phi_{k,\eta},\label{eq:error-disc-2}
\end{align}
where $\phi_{k,\eta}$ is given in \eqref{phi:defn}, $\psi_{k,\eta}$ is given in \eqref{psi:defn}, $\gamma_{j,\eta}$ is given in \eqref{gamma:defn}, $L(t)$ is given in \eqref{eq:Lt}, $\delta_{j}(T-t)$ is defined in \eqref{mu:definition}, $\omega(T)$ is defined in \eqref{c:2:defn} and $\nu_{k,\eta}$ is given in \eqref{h:k:eta:main} in Appendix~\ref{sec:key:quantities}. 
\end{theorem}

While the bound in 
Theorem~\ref{thm:discrete:2} looks quite complex, it can be easily interpreted as follows.

The first term in \eqref{main:thm:upper:bound}, referred to as the initialization error, characterizes the convergence of the continuous-time
probability flow ODE $(\mathbf{y}_{t})$ in \eqref{eq:zt} to the distribution $p_{0}$ without discretization
or score-matching errors. Specifically, it bounds the error $\mathcal{W}_{2}(\mathcal{L}(\mathbf{y}_{T}),p_{0})$, which is introduced due to the initialization of the probability flow ODE $(\mathbf{y}_{t})$ at $\hat p_T$ instead of $p_T$ (see Proposition~\ref{thm:1:main:paper}). One can find from the definition \eqref{c:t:defn} that $\mu(t)>0$ and hence the initialization error goes to zero when we pick $T = K \eta$ to be sufficiently large, i.e.,
$e^{-\int_{0}^{K\eta}\mu(t)\mathrm{d}t}  \cdot \Vert\mathbf{x}_{0}\Vert_{L_{2}} \rightarrow 0$, as $K \eta \rightarrow \infty$.

The second and third terms in \eqref{main:thm:upper:bound} quantify
the discretization and score-matching errors in running the algorithm $(\mathbf{u}_{k})$ in \eqref{eq:yk}. 
Note that the assumption $\eta\leq\bar{\eta}$ in Theorem~\ref{thm:discrete:2} implies
that $\gamma_{j,\eta}\in(0,1)$ in \eqref{gamma:defn}, 
which plays the role of a contraction rate of the error $\left\Vert\mathbf{y}_{k\eta}- \mathbf{u}_{k} \right\Vert_{L_{2}}$ over iterations (see Proposition~\ref{prop:iterates:main:paper}). Conceptually, it guarantees that as the number of iterations increases, 
the discretization and score-matching errors in the iterates $(\mathbf{u}_{k})$
do not propagate and grow in time. 
One can show that for fixed $T = K \eta$, the discretization error $E_{1}(f,g, K, \eta, L_1) \rightarrow 0$, when $\eta\rightarrow 0$
and the score matching error $E_{2}(f,g,K,\eta,M,L_1)$ is linear in $M$ which goes to $0$ as $M\rightarrow 0$.


By combing the above three terms, we infer that we can first choose a large $T = K \eta$, and then choose a small stepsize $\eta$ so that $\mathcal{W}_{2}(\mathcal{L}(\mathbf{u}_{K}),p_{0})$ can be made small (provided that $M$ is small). 

\vspace{1mm}

\begin{remark}
Theorem~\ref{thm:discrete:2} holds for the ODE-based sampler with exponential integrator discretization. Our analysis also goes through for the simple Euler method. For other methods such as Heun's 2nd order solver and DPM solver, the methodology used in the current paper cannot be directly applied. In particular, we need to study their discretization errors and it will require different analysis. 
\end{remark}
\vspace{1mm}

\begin{remark}[Comparison of iteration complexities]
\cite{chen2023probability} analyzed probability flow ODE with stochastic corrector steps in TV and established iteration complexity of $\tilde O(d/\epsilon^2)$ (respectively $\tilde O(\sqrt{d}/\epsilon)$) when the stochastic
corrector step is based on the overdamped (respectively underdamped) Langevin diffusion. \cite{li2023towards} analyzed a discrete time version of fully deterministic ODE and established an iteration complexity of $\tilde O(d^2/\epsilon + d^3/\sqrt{\epsilon})$ in TV. Both studies consider specific VP-SDEs as forward processes. Our paper consider fully deterministic ODE-based samplers and Wasserstein convergence guarantees. Note that TV plus strong log-concavity does not imply 2-Wasserstein convergence. For VP-SDEs, Theorem~\ref{thm:discrete:2} implies that the iteration complexity is $\tilde O(\sqrt{d}/\epsilon)$ (for the examples considered) under our assumptions; see Table~\ref{table:summary:complexity} for details.



\end{remark}
\vspace{1mm}
\begin{remark}\label{remark-conv}
Assumption~\ref{assump:p0}
plays two roles for obtaining the upper bound in Theorem~\ref{thm:discrete:2}. 

First, the $m_{0}$-strong-convexity of $-\nabla\log p_{0}$ guarantees that $\mu(t)>0$ (which appears in the first term of \eqref{main:thm:upper:bound}), which
guarantees the $2$-Wasserstein contraction and the 
convergence of the continuous-time
probability flow ODE $(\mathbf{y}_{t})$ in \eqref{eq:zt} to the distribution $p_{0}$ without discretization
or score-matching errors. 
Indeed, one can easily verify from \eqref{c:t:defn} that
$\mu(t)\rightarrow 0$ as $m_{0}\rightarrow 0$, which indicates
that strong-convexity of $-\nabla\log p_{0}$ is necessary; otherwise the ODE \eqref{eq:zt} will not have a contraction. \cite{chen2023probability} addressed this issue by modifying the ODE sampler and adding stochastic correcter steps via Langevin dynamics to establish TV convergence under weaker assumptions on data distributions. The samplers they study hence are not fully deterministic as in standard probability flow ODEs. By contrast, we analyzed the fully deterministic ODE sampler and established Wasserstein convergence.


Second, the $m_{0}$-strong-convexity of $-\nabla\log p_{0}$, together with the $L_{0}$-Lipschitzness
of $\nabla\log p_{0}$, guarantees that the discretization and
score-matching error at each iterate $k$ can be explicitly controlled as in Theorem~\ref{thm:discrete:2}.  
In particular, Assumption~\ref{assump:p0} guarantees that $\gamma_{j,\eta}\in(0,1)$
when the stepsize $\eta\leq\bar{\eta}$, which controls the propagation of
the discretization and score-matching errors as the number of iterates grows. 
The $m_{0}$-strong-convexity of $-\nabla\log p_{0}$ is necessary 
since one can easily verify that $\bar{\eta}\rightarrow 0$ and $\gamma_{j,\eta}\notin(0,1)$ as $m_{0}\rightarrow 0$.
\end{remark}



\subsection{Examples}\label{sec:examples}  

In this section, we consider several examples of the forward processes and discuss the implications of Theorem~\ref{thm:discrete:2}. In particular, we consider a variety of choices for $f$ and $g$ in the forward SDE \eqref{OU:SDE}, and investigate the iteration complexity, i.e., the number of iterates $K$ that is needed
in order to achieve $\epsilon$ accuracy, i.e. $\mathcal{W}_{2}(\mathcal{L}(\mathbf{u}_{K}),p_{0})\leq\epsilon$. While the bound in Theorem~\ref{thm:discrete:2} is quite sophisticated in general, it can be made more explicit when we consider special $f$ and $g$.


\begin{table*}[tb]
\caption{\label{table:summary:complexity} 
Summary of the iteration complexity of the algorithm \eqref{eq:yk} for various examples in terms
of $\epsilon$ and dimension $d$. Here $f,g$ correspond to the drift and diffusion terms in the forward SDE \eqref{OU:SDE}, and $a$, $b$, $\rho$ are positive constants. $K$ is the number of iterates, $M$
is the score-matching approximation error, and $\eta$ is the stepsize required to achieve accuracy level $\epsilon$ (i.e. $\mathcal{W}_{2}(\mathcal{L}(\mathbf{u}_{K}),p_{0})\leq\epsilon$).}
\begin{center}
\begin{tabular}{|c|c|c|c|c|c|}
\hline
$f$ & $g$ & $K$ & $M$ & $\eta$ \\
\hline
\hline
0   & $ae^{bt}$  & $\mathcal{O}\left(\frac{d^{3/2}\log(\frac{d}{\epsilon})}{\epsilon^{3}}\right)$ & $\mathcal{O}\left(\frac{\epsilon^{2}}{\sqrt{d}}\right)$ & $\mathcal{O}\left(\frac{\epsilon^{3}}{d^{3/2}}\right)$  \\
\hline
0   & $(b+at)^{c}$  & $\mathcal{O}\left(\frac{d^{\frac{1}{(2c+1)}+\frac{3}{2}}}{\epsilon^{\frac{2}{2c+1}+3}}\right)$ & $\mathcal{O}\left(\frac{\epsilon^{2}}{\sqrt{d}}\right)$ & $\mathcal{O}\left(\frac{\epsilon^{3}}{d^{\frac{3}{2}}}\right)$  \\
\hline
\hline
$\frac{b}{2}$   & $\sqrt{b}$  & $\mathcal{O}\left(\frac{\sqrt{d}}{\epsilon}(\log(\frac{d}{\epsilon}))^{2}\right)$ & $\mathcal{O}\left(\frac{\epsilon}{\log(\sqrt{d}/\epsilon)}\right)$ & $\mathcal{O}\left(\frac{\epsilon}{\sqrt{d}\log(\sqrt{d}/\epsilon)}\right)$ \\
\hline
$\frac{b+at}{2}$   & $\sqrt{b+at}$  & $\mathcal{O}\left(\frac{\sqrt{d}}{\epsilon}(\log(\frac{d}{\epsilon}))^{\frac{3}{2}}\right)$ & $\mathcal{O}\left(\frac{\epsilon}{\log(\sqrt{d}/\epsilon)}\right)$ & $\mathcal{O}\left(\frac{\epsilon}{\sqrt{d}\log(\sqrt{d}/\epsilon)}\right)$  \\
\hline
$\frac{(b+at)^{\rho}}{2}$   & $(b+at)^{\frac{\rho}{2}}$  & $\mathcal{O}\left(\frac{\sqrt{d}}{\epsilon}(\log(\frac{d}{\epsilon}))^{\frac{\rho+2}{\rho+1}}\right)$ & $\mathcal{O}\left(\frac{\epsilon}{\log(\sqrt{d}/\epsilon)}\right)$ & $\mathcal{O}\left(\frac{\epsilon}{\sqrt{d}\log(\sqrt{d}/\epsilon)}\right)$ \\
\hline
\end{tabular}
\end{center}
\end{table*}

\textbf{Example 1.} We first consider a VE-SDE example from \cite{SongICLR2021}.
Let $f(t)\equiv 0$ and $g(t)=ae^{bt}$ for some $a,b>0$, 
we can obtain the following corollary from Theorem~\ref{thm:discrete:2}.

\begin{corollary}\label{cor:VE:5:main:paper}
Let $f(t)\equiv 0$ and $g(t)=ae^{bt}$ for some $a,b>0$.
Then, we have $\mathcal{W}_{2}(\mathcal{L}(\mathbf{u}_{K}),p_{0})\leq\mathcal{O}(\epsilon)$
after $K=\mathcal{O}\left(\frac{d^{3/2}\log(d/\epsilon)}{\epsilon^{3}}\right)$ iterations
provided that $M\leq\frac{\epsilon^{2}}{\sqrt{d}}$ and
$\eta\leq\frac{\epsilon^{3}}{d^{3/2}}$.
\end{corollary}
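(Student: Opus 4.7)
\textbf{Proof plan for Corollary~\ref{cor:VE:5:main:paper}.}
The strategy is to specialize Theorem~\ref{thm:discrete:2} to $f\equiv 0$ and $g(t)=ae^{bt}$, and then tune the terminal time $T=K\eta$, the stepsize $\eta$, and the score tolerance $M$ so that the two terms on the right-hand side of \eqref{main:thm:upper:bound} are each $\mathcal{O}(\epsilon)$. I would carry this out in four steps.

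First I would instantiate each quantity in \eqref{eq:error-disc} in closed form. Since $f\equiv 0$, the prefactor $e^{\int_{k\eta}^{K\eta}f(T-t)dt}$ collapses to $1$, and the building blocks reduce to
$\phi_{k,\eta}=\frac{a^{2}}{2b}\bigl(e^{2b(T-(k-1)\eta)}-e^{2b(T-k\eta)}\bigr)$ and $\psi_{k,\eta}=a^{4}\int_{(k-1)\eta}^{k\eta}e^{4b(T-t)}L(T-t)^{2}dt$, with the telescoping identity $\sum_{k=1}^{K}\phi_{k,\eta}=\Theta(e^{2bT})$. I would also record $\Vert\mathbf{x}_{0}\Vert_{L_{2}}=\mathcal{O}(\sqrt{d})$ under Assumption~\ref{assump:p0} and, by taking second moments in \eqref{SDE:solution} with $f\equiv 0$, $\omega(T)=\Theta(\sqrt{d}\,e^{bT})$.

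Next I would analyse the initialization error $e^{-\int_{0}^{K\eta}\mu(t)dt}\Vert\mathbf{x}_{0}\Vert_{L_{2}}$. Since $p_{t}$ is the convolution of $p_{0}$ with $\mathcal{N}\bigl(0,\sigma^{2}(t)I_{d}\bigr)$ for $\sigma^{2}(t)=\tfrac{a^{2}}{2b}(e^{2bt}-1)$, its strong-log-concavity constant decays as $\Theta(1/\sigma^{2}(t))$; substituting into \eqref{c:t:defn} yields $\mu(t)=\Theta(b)$ uniformly on a long suffix of $[0,T]$, whence $\int_{0}^{T}\mu(t)dt=\Omega(bT)$ and the initialization term is $\mathcal{O}(\sqrt{d}\,e^{-bT})$. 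Choosing $T=\Theta\bigl(b^{-1}\log(\sqrt{d}/\epsilon)\bigr)$ makes this $\mathcal{O}(\epsilon)$ and pins down $e^{bT}=\Theta(\sqrt{d}/\epsilon)$, which is the exchange rate that converts exponential-in-$T$ error contributions into polynomials in $d$ and $1/\epsilon$.

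With $T$ fixed, I would bound each of the three summands inside $E(f,g,K,\eta,M,L_{1})$. The geometric weights $\prod_{j=k+1}^{K}\gamma_{j,\eta}$ supply per-step contraction comparable to $e^{-\Omega(b)(T-k\eta)}$, turning every discrete sum into a geometric series whose value is controlled by the final-time summand up to constants. The resulting orders of magnitude are: score-matching $\mathcal{O}(M\,e^{bT})=\mathcal{O}(M\sqrt{d}/\epsilon)$; discretization $\mathcal{O}(L_{1}\eta\,\omega(T)\,e^{bT})=\mathcal{O}(L_{1}\eta\,d^{3/2}/\epsilon^{2})$ (the factor $\omega(T)$ coming from the $(1+\Vert\mathbf{x}_{0}\Vert_{L_{2}}+\omega(T))$ multiplier); and the second-order term, obtained from $\sqrt{\eta}\,\nu_{k,\eta}\sqrt{\psi_{k,\eta}}$, of lower or equal order once logarithmic factors are absorbed. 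Demanding each piece to be $\mathcal{O}(\epsilon)$ forces $M=\mathcal{O}(\epsilon^{2}/\sqrt{d})$ and $\eta=\mathcal{O}(\epsilon^{3}/d^{3/2})$, from which $K=T/\eta=\mathcal{O}\bigl(d^{3/2}\log(d/\epsilon)/\epsilon^{3}\bigr)$ as claimed.

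The main obstacle is bookkeeping the interacting exponential factors of $e^{bT}$ that enter simultaneously through $g(t)^{2}$, $\omega(T)$, $\nu_{k,\eta}$, and $\psi_{k,\eta}$, and in particular ensuring that the per-step contraction built into $\gamma_{j,\eta}$ is strong enough and uniform enough in $j$ to prevent these factors from compounding over the $K$ iterations. Verifying, under the stepsize restriction $\eta\le\bar{\eta}$, that $\gamma_{j,\eta}\in(0,1)$ for every $j$ despite the $\tfrac{L_{1}\eta}{2}\int g^{2}$ term in \eqref{gamma:defn}, which is largest at small $j$ where $g(T-t)^{2}$ is near $e^{2bT}$, is the technically delicate step that drives the concrete values of $\eta$ and $M$ in the corollary.
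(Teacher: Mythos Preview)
Your proposal is correct and follows essentially the same route as the paper's proof: specialize Theorem~\ref{thm:discrete:2} (the paper does this in two stages, first passing through the general VE result Corollary~\ref{cor:VE:4}, but that is only packaging), compute $\mu(t)$ and the uniform bound on $(g(t))^{2}L(t)$, fix $T=\Theta(b^{-1}\log(\sqrt d/\epsilon))$ so that $e^{bT}=\Theta(\sqrt d/\epsilon)$, use $1-x\le e^{-x}$ to turn the product $\prod_{j>k}\gamma_{j,\eta}$ into a decay of order $e^{-b(T-k\eta)}$, and then read off $M\le\epsilon^{2}/\sqrt d$ and $\eta\le\epsilon^{3}/d^{3/2}$ from the score and $L_{1}\eta\,\omega(T)$ contributions respectively. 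One small correction: the concrete values of $\eta$ and $M$ are dictated by the error terms you identified, not by the stepsize threshold $\eta\le\bar\eta$; the latter only needs to be checked a posteriori (and indeed $\bar\eta\sim e^{-2bT}=\epsilon^{2}/d$, which is weaker than the $\epsilon^{3}/d^{3/2}$ you already impose).
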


\textbf{Example 2.}  Next, we consider another VE-SDE example from \cite{gao2023wasserstein}, where $f\equiv 0$ and $g$ has polynomial growth in time.
This example is inspired by \cite{Karras2022}
that considers $f(t)\equiv 0, g(t)=\sqrt{2t}$, where the discretization time steps are defined according to a polynomial noise schedule. 

\begin{corollary}\label{cor:VE:8:main:paper}
Let $f(t)\equiv 0$ and $g(t)=(b+at)^{c}$ for some $a,b>0$, $c\geq 1/2$.
Then, we have $\mathcal{W}_{2}(\mathcal{L}(\mathbf{u}_{K}),p_{0})\leq\mathcal{O}(\epsilon)$
after $K=\mathcal{O}\left(\frac{d^{\frac{1}{(2c+1)}+\frac{3}{2}}}{\epsilon^{\frac{2}{2c+1}+3}}\right)$ iterations
provided that $M\leq\frac{\epsilon^{2}}{\sqrt{d}}$ and $\eta\leq\frac{\epsilon^{3}}{d^{\frac{3}{2}}}$.
\end{corollary}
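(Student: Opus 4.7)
The plan is to specialize Theorem~\ref{thm:discrete:2} to the choice $f\equiv 0$ and $g(t)=(b+at)^c$, following the same template used for Corollary~\ref{cor:VE:5:main:paper}. With $f\equiv 0$ every exponential integrating factor $e^{\int f\,ds}$ collapses to $1$, so the quantities appearing in the theorem reduce to pure integrals of powers of $b+a(T-t)$. Since $g$ is increasing, $g(T-t)$ is decreasing on each interval $[(k-1)\eta,k\eta]$, giving the explicit upper bounds
\begin{align*}
\phi_{k,\eta}&\le \eta\bigl(b+a(T-(k-1)\eta)\bigr)^{2c}=O\bigl(\eta\,T^{2c}\bigr),\\
\psi_{k,\eta}&\le \eta\bigl(b+a(T-(k-1)\eta)\bigr)^{4c}L_0^{2}=O\bigl(\eta\,T^{4c}\bigr),
\end{align*}
where I use that $L(t)$ stays bounded by a constant multiple of $L_0$ under Assumption~\ref{assump:p0} (Lemma~\ref{lem:smooth}). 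Since $f\equiv 0$, the solution $x_t=x_0+\int_0^t g(s)dB_s$ has per-coordinate variance $\frac{(b+at)^{2c+1}-b^{2c+1}}{a(2c+1)}$, so $\omega(T)=O(\sqrt{d}\,T^{c+1/2})$, and $\|x_0\|_{L_2}=O(\sqrt{d/m_0})$.

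Next I control the two key multiplicative factors. For the contraction product I use $\gamma_{j,\eta}\le 1-\int_{(j-1)\eta}^{j\eta}\delta_j(T-t)dt+O(\eta^2 T^{2c})$, which gives $\prod_{j=k+1}^K\gamma_{j,\eta}\le \exp(-\sum_{j=k+1}^K\int\delta_j+\text{lower order})$; this is bounded by a constant (in fact by $1$) for $\eta\le\bar\eta$. For the initialization factor I need the sharp estimate
\[
e^{-\int_0^{K\eta}\mu(t)\,dt}\ \lesssim\ T^{-(2c+1)/2},
\]
which comes from the fact that in the VE regime the strong convexity parameter of $-\log p_{T-t}$ behaves like $m_0/(1+m_0\sigma_{T-t}^2)\sim (T-t)^{-(2c+1)}$ while $g(T-t)^2\sim (T-t)^{2c}$, so $\mu(t)\gtrsim \tfrac{1}{2(T-t)+\text{const}}$ and integration yields $\int_0^T\mu(t)dt\ge \tfrac{2c+1}{2}\log T -O(1)$. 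This step is the most delicate: it is the analogue of the $\log$-integrability of $\mu$ that drives the polynomial (rather than logarithmic) dependence of $T$ on $\epsilon$, and it must use the explicit form of $\sigma_{T-t}^2$ together with the strong log-concavity of $p_0$ via a convolution argument (as in Lemma 6/Appendix of the paper).

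Plugging these bounds into \eqref{main:thm:upper:bound}--\eqref{eq:error-disc}, the sum over $k$ is dominated by $K=T/\eta$ times the worst per-iterate error, giving a bound of the schematic form
\begin{equation*}
\mathcal{W}_2(\mathcal{L}(\mathbf{u}_K),p_0)\ \le\ C\Bigl(\sqrt{d}\,T^{-(2c+1)/2}\ +\ L_1\eta\,\sqrt{d}\,T^{c+1/2}\cdot T\cdot \eta T^{2c}\ +\ M\cdot T\cdot \eta T^{2c}\ +\ \nu\,\sqrt{\eta}\,T\cdot\sqrt{\eta T^{4c}}\Bigr).
\end{equation*}
Setting each term $\le \epsilon$ and tracking dependencies gives the three constraints: $T\gtrsim (d/\epsilon^2)^{1/(2c+1)}$ from the initialization, $\eta\lesssim \epsilon^3/d^{3/2}$ from the discretization and moment terms, and $M\lesssim \epsilon^2/\sqrt d$ from the score-matching term (the $T^{2c+1}\eta M$ balance must match the size of $T$). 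The iteration count is then
\[
K=T/\eta\ =\ O\!\left(\frac{d^{1/(2c+1)+3/2}}{\epsilon^{2/(2c+1)+3}}\right),
\]
exactly as claimed.

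The main obstacle I expect is the sharp lower bound on $\int_0^T\mu(t)dt$: unlike the exponential-$g$ case of Corollary~\ref{cor:VE:5:main:paper} where $\mu$ itself grows exponentially and $T=O(\log(d/\epsilon))$ is enough, here $\mu(t)$ is only of order $1/(T-t)$, so care is needed to extract the factor $\tfrac{2c+1}{2}\log T$ with the correct constant and to show it dominates any constant lower-order contributions from the Lipschitz-in-time assumption (Assumption~\ref{assump:M:1}). Everything else is bookkeeping: substituting the polynomial expressions, using monotonicity to pull the largest value of $g(T-t)^2$ out of each interval, and choosing $T,\eta,M$ to balance the four error sources.
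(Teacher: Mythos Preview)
Your overall template is right, and the initialization estimate $e^{-\int_0^T\mu(t)\,dt}\lesssim T^{-(2c+1)/2}$ is exactly what the paper uses (in fact they compute $\int_0^T\mu(t)\,dt$ in closed form, so this is not the delicate step you fear). The genuine gap is elsewhere: you throw away the decay of the contraction product by bounding $\prod_{j=k+1}^K\gamma_{j,\eta}\le 1$ and then replace the sum over $k$ by $K$ times the worst per-iterate error. With that crude bound the $M$-term alone becomes
\[
\sum_{k=1}^K\frac{M}{2}\phi_{k,\eta}\;\asymp\;M\int_0^T g(s)^2\,ds\;\asymp\;M\,T^{2c+1},
\]
and since $T^{2c+1}\asymp d/\epsilon^2$ from the initialization constraint, you would need $M\le\epsilon^3/d$, not the claimed $M\le\epsilon^2/\sqrt d$; the $\eta$-constraint deteriorates similarly, and the resulting $K$ is strictly worse than the statement. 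The paper's proof uses the product in a quantitative way: from $\gamma_{j,\eta}\le\exp\bigl(-\int_{(j-1)\eta}^{j\eta}\delta(T-t)\,dt+\tfrac{L_1}{2}\eta\int_{(j-1)\eta}^{j\eta}g(T-t)^2\,dt\bigr)$ one gets
\[
\prod_{j=k+1}^K\gamma_{j,\eta}\;\lesssim\;\Bigl(1+\tfrac{m_0}{a(2c+1)}\bigl((b+a(K-k)\eta)^{2c+1}-b^{2c+1}\bigr)\Bigr)^{-1/2}\;\asymp\;\bigl((K-k)\eta\bigr)^{-(2c+1)/2},
\]
and this decay, combined with $\phi_{k,\eta}\asymp((K-k)\eta)^{2c}\eta$, makes the $k$-sum scale like $\int_0^T s^{(2c-1)/2}\,ds\asymp T^{c+1/2}$ rather than $T^{2c+1}$. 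That factor of $T^{(2c+1)/2}=\sqrt d/\epsilon$ is precisely what yields $M\le\epsilon^2/\sqrt d$ and $\eta\le\epsilon^3/d^{3/2}$.

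A secondary but related issue: bounding $L(t)\le L_0$ gives $\psi_{k,\eta}=O(\eta T^{4c})$, which is much too loose. The paper instead shows that the \emph{product} $(g(t))^2L(t)$ is uniformly bounded (by splitting on whether $t\gtrless b/a$ and using each branch of the $\min$ defining $L(t)$), so $\psi_{k,\eta}=O(\eta)$ and $\nu_{k,\eta}=O(\eta\sqrt d\,T^{c+1/2}+\eta T^{2c+1})$. You need both this and the contraction decay to land on the stated exponents.
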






\textbf{Example 3.} Next, we consider
a VP-SDE example, with constant $f,g$. 
This includes the special case $f\equiv 1$, $g\equiv\sqrt{2}$
that is considered in \cite{chen2023probability}. 
In particular, we consider  
$f\equiv\frac{b}{2}$, $g\equiv\sqrt{b}$ for some $b>0$. 
We obtain the following corollary from Theorem~\ref{thm:discrete:2}.

\begin{corollary}\label{cor:VP:const:main:paper}
Assume $f\equiv\frac{b}{2}$, $g\equiv\sqrt{b}$ for some $b>0$. 
Then, we have $\mathcal{W}_{2}(\mathcal{L}(\mathbf{u}_{K}),p_{0})\leq\mathcal{O}(\epsilon)$
after
$K=\mathcal{O}\left(\frac{\sqrt{d}}{\epsilon}(\log(\frac{d}{\epsilon}))^{2}\right)$
iterations provided that $M\leq\frac{\epsilon}{\log(\sqrt{d}/\epsilon)}$ and $\eta\leq\frac{\epsilon}{\sqrt{d}\log(\sqrt{d}/\epsilon)}$.
\end{corollary}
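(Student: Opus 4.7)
The plan is to specialize Theorem~\ref{thm:discrete:2} to the constant-coefficient VP case $f\equiv b/2$, $g\equiv\sqrt{b}$ and then balance the initialization term against the discretization and score-matching term. As a first step I would substitute $f$ and $g$ into the quantities in \eqref{phi:defn}--\eqref{gamma:defn}. All the exponentials collapse to powers of $e^{b\eta/2}$, so in particular $\phi_{k,\eta}=2(e^{b\eta/2}-1)=\Theta(b\eta)$ and $\psi_{k,\eta}=\int_{(k-1)\eta}^{k\eta}e^{b(k\eta-t)}b^{2}L(T-t)^{2}\,dt=O(b^{2}\bar{L}^{2}\eta)$, where $\bar{L}:=\sup_{t\in[0,T]}L(t)$ is bounded independently of $T$ by Lemma~\ref{lem:smooth} applied to the VP semigroup acting on $p_{0}$. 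I would similarly establish uniform bounds $\omega(T)=O(\sqrt{d/m_{0}})$ and $\nu_{k,\eta}=O(\eta\sqrt{d})$, together with a constant bound on $L_{1}$ independent of $T$; these rest on the fact that the forward VP-SDE preserves $L_{2}$ moments and Lipschitz/strong-convexity moduli of $-\log p_{t}$ in a $T$-independent way.

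Second, I would verify that the continuous-time contraction rate $\mu(t)$ in \eqref{c:t:defn} admits a uniform lower bound $\mu(t)\geq \mu_{\star}>0$ depending only on $m_{0}, L_{0}$, and $b$. Since $-\log p_{t}$ remains strongly convex along the VP-SDE with a modulus $m_{t}$ bounded below by a positive constant, the net of the $(g^{2}/2)\cdot m_{t}$ contribution against the expansive drift $b/2$ in the reverse-time ODE stays strictly positive. Combining with $\Vert\mathbf{x}_{0}\Vert_{L_{2}}=O(\sqrt{d/m_{0}})$ (from Assumption~\ref{assump:p0}), the initialization term in \eqref{main:thm:upper:bound} is at most $e^{-\mu_{\star}T}\sqrt{d/m_{0}}$, which is $O(\epsilon)$ as soon as $T=K\eta\gtrsim \log(\sqrt{d}/\epsilon)$.

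Third, I would control the error term $E(f,g,K,\eta,M,L_{1})$. The key step is to show $\gamma_{j,\eta}\leq 1-\alpha\eta$ for some $\alpha\geq b/2$ once $\eta\leq\bar{\eta}$, so that the product $\prod_{j=k+1}^{K}\gamma_{j,\eta}\cdot e^{b(K-k)\eta/2}\leq e^{-(\alpha-b/2)(K-k)\eta}$ remains $O(1)$ uniformly in $k$. Using the per-step bounds above, each summand in \eqref{eq:error-disc} is of order $(L_{1}\eta\sqrt{d}+M)\eta+\eta^{2}\sqrt{d}$, so
\begin{equation*}
E(f,g,K,\eta,M,L_{1})\lesssim K\cdot\bigl(\eta^{2}\sqrt{d}+M\eta\bigr)\asymp \sqrt{d}\,T\eta+MT.
\end{equation*}
Choosing $M\lesssim \epsilon/T$ and $\eta\lesssim \epsilon/(T\sqrt{d})$ with $T\asymp\log(\sqrt{d}/\epsilon)$ makes this $O(\epsilon)$ and yields $K=T/\eta\asymp \sqrt{d}(\log(\sqrt{d}/\epsilon))^{2}/\epsilon$, the claimed complexity.

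The main obstacle will be the quantitative bound $\gamma_{j,\eta}\leq 1-\alpha\eta$ with $\alpha\geq b/2$; this requires tracking how the transported strong-log-concavity modulus $m_{t}$ entering $\delta_{j}(T-t)$ in \eqref{mu:definition} dominates the $b/2$ expansion introduced by the reverse-time drift, and similarly for the lower bound on $\mu(t)$. Both reduce to explicit OU-semigroup computations on $p_{t}$ under Assumption~\ref{assump:p0}, but are the delicate steps that the general statement of Theorem~\ref{thm:discrete:2} leaves hidden.
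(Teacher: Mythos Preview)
Your high-level plan---specialize Theorem~\ref{thm:discrete:2}, compute $\phi_{k,\eta},\psi_{k,\eta},\nu_{k,\eta},\omega(T)$ for $\beta\equiv b$, bound the initialization term by choosing $T\asymp\log(\sqrt{d}/\epsilon)$, and then sum the per-step errors---is exactly what the paper does (via the intermediate Corollary~\ref{cor:VP:4}). The final arithmetic $E\lesssim K(\eta^{2}\sqrt{d}+M\eta)$ and the parameter choices are also the same, up to a subdominant $T^{2}\eta$ term coming from the $(L_{1}T)\eta$ piece of $\nu_{k,\eta}$ that you dropped.

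The one genuine gap is your proposed control of the product $\prod_{j=k+1}^{K}\gamma_{j,\eta}\cdot e^{b(K-k)\eta/2}$. A \emph{uniform} bound $\gamma_{j,\eta}\le 1-\alpha\eta$ with $\alpha\ge b/2$ is not attainable when $m_{0}<1$: for $j$ near $K$ one has $T-t\in[0,\eta]$ and the denominator in $\delta_{j}(T-t)$ (see \eqref{mu:definition}) tends to $1/m_{0}$, so $\int_{(j-1)\eta}^{j\eta}\delta_{j}(T-t)\,dt\approx\tfrac{m_{0}b}{2}\eta$ and hence $\gamma_{j,\eta}\approx 1-\tfrac{m_{0}b}{2}\eta>1-\tfrac{b}{2}\eta$. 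With this uniform $\alpha=m_{0}b/2$ your bound $e^{-(\alpha-b/2)(K-k)\eta}$ would blow up like $(\sqrt{d}/\epsilon)^{1-m_{0}}$. The paper avoids this by summing rather than bounding termwise: it evaluates
\[
\sum_{j=k+1}^{K}\int_{(j-1)\eta}^{j\eta}\delta_{j}(T-t)\,dt
\;\ge\;\tfrac{1}{2}e^{-b\eta/2}\log\bigl(m_{0}e^{b(K-k)\eta}+1-m_{0}\bigr),
\]
which equals $\tfrac{b(K-k)\eta}{2}+\tfrac{1}{2}\log m_{0}+o(1)$, so after multiplying by $e^{b(K-k)\eta/2}$ the product is bounded by $O(1/\sqrt{m_{0}})$ uniformly in $k$. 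In other words, the contraction rate is $b/2$ only \emph{on average} over $j$, not pointwise; the deficit near $j=K$ contributes a fixed additive $-\tfrac{1}{2}\log m_{0}$ rather than something growing with $K-k$. Once you replace the uniform-$\alpha$ step with this explicit integral, the rest of your outline goes through as written.
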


\textbf{Example 4.} Finally, we consider a VP-SDE example
where $f,g$ have polynomial growth. 
We consider
$f(t)=\frac{1}{2}(b+at)^{\rho}$ and $g(t)=\sqrt{(b+at)^{\rho}}$, 
where $a,b>0$,
which is proposed in \cite{gao2023wasserstein}. 
This includes the special case $f(t)=\frac{1}{2}(b+at)$ and $g(t)=\sqrt{b+at}$, 
i.e. $\rho=1$, 
that is studied in \cite{Ho2020}.
Then we can obtain the following corollary from Theorem~\ref{thm:discrete:2}. 

\begin{corollary}\label{cor:VP:6:main:paper}
Assume $f(t)=\frac{1}{2}(b+at)^{\rho}$ and $g(t)=\sqrt{(b+at)^{\rho}}$.
Then, we have $\mathcal{W}_{2}(\mathcal{L}(\mathbf{u}_{K}),p_{0})\leq\mathcal{O}(\epsilon)$
after
$K=\mathcal{O}\left(\frac{\sqrt{d}}{\epsilon}(\log(\frac{d}{\epsilon}))^{\frac{\rho+2}{\rho+1}}\right)$
iterations provided that $M\leq\frac{\epsilon}{\log(\sqrt{d}/\epsilon)}$ and $\eta\leq\frac{\epsilon}{\sqrt{d}\log(\sqrt{d}/\epsilon)}$.
\end{corollary}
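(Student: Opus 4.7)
The plan is to specialize Theorem~\ref{thm:discrete:2} to $f(t) = \tfrac{1}{2}(b+at)^{\rho}$ and $g(t)^2 = (b+at)^{\rho} = 2f(t)$. The identity $g^2 = 2f$ is the defining variance-preserving structure: integrating the forward SDE \eqref{OU:SDE} shows that $\mathbf{x}_t$ has the same law as $e^{-\tau(t)}\mathbf{x}_0 + \sqrt{1-e^{-2\tau(t)}}\,\mathbf{Z}$ with $\mathbf{Z}\sim\mathcal{N}(0,I_d)$, where
\[
\tau(t):=\int_0^t f(s)\,ds = \frac{(b+at)^{\rho+1}-b^{\rho+1}}{2a(\rho+1)}
\]
grows as $t^{\rho+1}$ for large $t$. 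This keeps $\omega(T)$ of order $\sqrt{d}$ uniformly in $T$ (using Assumption~\ref{assump:p0} to bound $\|\mathbf{x}_0\|_{L_2}$), and Lemma~\ref{lem:smooth} gives a uniform bound on $L(t)$.

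For the initialization error in \eqref{main:thm:upper:bound}, I would first show from \eqref{c:t:defn} that $\mu(t) \geq c_1 m_0\, f(T-t)$ for an absolute constant $c_1 > 0$, so that $\int_0^{K\eta}\mu(t)\,dt \geq c_1 m_0 \tau(K\eta)$, which is of order $T^{\rho+1}$ with $T=K\eta$. Hence the initialization term is of order $\sqrt{d}\,e^{-c_2 T^{\rho+1}}$, which is $O(\epsilon)$ as soon as $T = \Theta\bigl((\log(\sqrt{d}/\epsilon))^{1/(\rho+1)}\bigr)$.

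For the discretization and score-matching term $E(f,g,K,\eta,M,L_1)$ in \eqref{eq:error-disc}, the identity $g^2 = 2f$ makes $\phi_{k,\eta}$ computable in closed form: substituting $F(t)=\int_t^{k\eta} f(T-s)\,ds$ and noting $F'(t)=-f(T-t)$ gives $\phi_{k,\eta} = 2\bigl(e^{\int_{(k-1)\eta}^{k\eta} f(T-s)\,ds}-1\bigr)$, which in the small-step regime is bounded by a constant times $\eta\, f(T-(k-1)\eta)$; a parallel calculation bounds $\psi_{k,\eta}$ by a constant times $\eta\,(g(T-(k-1)\eta))^4 (L(T-(k-1)\eta))^2$. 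The product $\prod_{j=k+1}^K \gamma_{j,\eta}$ combined with the exponential factor $e^{\int_{k\eta}^{K\eta} f(T-t)\,dt}$ in \eqref{eq:error-disc} yields a net weight of order one because the contraction contribution from $\delta_j$ absorbs the expansion $\int_{(j-1)\eta}^{j\eta} f(T-s)\,ds$ (up to lower-order terms in $\eta$). Summing the resulting per-step contributions over $k=1,\ldots,K$ produces an integral of order $(L_1\eta + M)\sqrt{d}\cdot \tau(T)$, which is $O(\epsilon)$ provided $M = O(\epsilon/\log(\sqrt{d}/\epsilon))$ and $\eta = O(\epsilon/(\sqrt{d}\log(\sqrt{d}/\epsilon)))$.

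Combining the two choices yields $K = T/\eta = O\bigl(\sqrt{d}\,\epsilon^{-1}\,(\log(\sqrt{d}/\epsilon))^{(\rho+2)/(\rho+1)}\bigr)$, matching the corollary. The hard part is the careful bookkeeping for the product $\prod_{j=k+1}^K \gamma_{j,\eta}$ against the growing exponent $\int_{k\eta}^{K\eta} f(T-t)\,dt$: because $f$ grows polynomially and is largest near $t=T$ in the reverse ODE, one must verify that the strong-log-concavity-induced $\delta_j(T-t)$ dominates the expansion term $\tfrac{L_1\eta}{2}(g(T-t))^2$ uniformly in $j$ under the stepsize restriction $\eta\leq\bar\eta$, so that $\gamma_{j,\eta}\in(0,1)$ at all times and the per-step errors do not accumulate. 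Once this uniform contraction is in place, the rest of the proof is the explicit evaluation of the elementary integrals above and optimization of $T$ against $\eta$.
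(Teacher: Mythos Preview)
Your approach is essentially the paper's: specialize Theorem~\ref{thm:discrete:2} to the VP case $g^2=2f=\beta(t)=(b+at)^\rho$, exploit the closed-form $\phi_{k,\eta}=2(e^{\int_{(k-1)\eta}^{k\eta}\frac12\beta(T-s)ds}-1)$, use the uniform Lipschitz bound $L(t)\le 2\max(1,L_0)$ and $\omega(T)=O(\sqrt d)$, and balance the contraction $\prod\gamma_{j,\eta}$ against the expansion $e^{\int f}$ so the net weight is $O(1)$ under $\eta\le\bar\eta$. The paper routes through an intermediate general VP-SDE bound (Corollary~\ref{cor:VP:4}) before substituting $\beta(t)=(b+at)^\rho$, but the computations are identical.

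Two small bookkeeping slips to fix. First, $\mu(t)\ge c_1 m_0 f(t)$, not $f(T-t)$ (your integral conclusion is still correct by symmetry of the domain). Second, and more consequential for the arithmetic: the $\sqrt d$ factor comes from $\omega(T)$, which multiplies only the $L_1\eta$ term in \eqref{eq:error-disc}, not the $M$ term. So the summed error is of order $(L_1\eta\sqrt d + M)\,\tau(T)$, not $(L_1\eta + M)\sqrt d\,\tau(T)$; with your formula, $M\sqrt d\,\tau(T)=\epsilon\sqrt d\not=O(\epsilon)$ under the stated $M$. Correcting this gives $M\tau(T)=O(\epsilon)$ and $L_1\eta\sqrt d\,\tau(T)=O(\epsilon)$, exactly matching the corollary's conditions.
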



We observe from Corollary~\ref{cor:VP:6:main:paper} that the 
complexity $K$ deceases as $\rho$ increases. 
However, this does not suggest that the optimal complexity
is achieved when $\rho\rightarrow\infty$ since
our complexity only keeps track the dependence on $d$ and $\epsilon$, 
and ignores any pre-factor that can depend on $\rho$ which might
go to infinity as $\rho\rightarrow\infty$. Indeed, 
it follows from $\eta\leq\bar{\eta}$ in Theorem~\ref{thm:discrete:2} that
$\eta\leq\frac{\log(2)}{\max_{0\leq t\leq T}f(t)}=\frac{2\log(2)}{(b+aT)^{\rho}}\rightarrow 0$
as $\rho\rightarrow\infty$ so that the complexity will explode
as $\rho\rightarrow\infty$.


In Table~\ref{table:summary:complexity}, we summarize the results about the iteration complexity for the examples discussed in Section~\ref{sec:examples}. 
An immediate observation from Table~\ref{table:summary:complexity} is that the iteration complexity depends on $f,g$ and the complexity of VE-SDEs is worse than that of VP-SDEs, at least for the examples we analyze. This theoretical finding is generally consistent with \cite{SongICLR2021}, where they observed empirically that the performance of probability flow ODE samplers (with Euler or Runge-Kutta solvers) depends on the choice of forward SDEs and the sample quality for VE-SDEs is much worse than VP-SDEs for high-dimensional data. 
Another observation from Table~\ref{table:summary:complexity} is that the best iteration complexity is of order $\widetilde{\mathcal{O}}(\sqrt{d}/\epsilon)$ from the examples we studied. One natural question is whether there are other choices of $f, g$ so that the iteration complexity becomes better than $\widetilde{\mathcal{O}}(\sqrt{d}/\epsilon)$. We next show that the answer to this question is negative, if we use the result in Theorem~\ref{thm:discrete:2}. 

\begin{proposition}\label{prop:lower:bound}
Under the assumptions in Theorem~\ref{thm:discrete:2},
we further assume
$\min_{t\geq 0}(g(t))^{2}L(t)>0$
and $\max_{0\leq s\leq t}\mu(s)\leq c_{1}\left(\int_{0}^{t}\mu(s)\mathrm{d}s\right)^{\rho}+c_{2}$
uniformly in $t$ for some $c_{1},c_{2},\rho>0$, where $\mu(s)$ is defined in \eqref{c:t:defn}.
We also assume that $\liminf_{T\rightarrow\infty}\int_{0}^{T}e^{-2\int_{s}^{T}f(v)\mathrm{d}v}(g(s))^{2}\mathrm{d}s>0$.
If we use the upper bound \eqref{main:thm:upper:bound}, 
then in order to achieve $\epsilon$ accuracy, i.e. $\mathcal{W}_{2}(\mathcal{L}(\mathbf{u}_{K}),p_{0})\leq\epsilon$, 
we must have
$K=\widetilde{\Omega}\left(\sqrt{d}/\epsilon\right)$, 
where $\widetilde{\Omega}$ ignores the logarithmic dependence on $\epsilon$ and $d$.
\end{proposition}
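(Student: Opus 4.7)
The plan is to translate the condition $\mathcal{W}_{2}(\mathcal{L}(\mathbf{u}_{K}),p_{0})\leq\epsilon$, via the two-term upper bound \eqref{main:thm:upper:bound}, into separate necessary conditions on the horizon $T=K\eta$ and on the stepsize $\eta$, and then multiply. I would start by pinning down a dimensional lower bound on $\Vert\mathbf{x}_{0}\Vert_{L_{2}}$. Under Assumption~\ref{assump:p0} the $L_{0}$-smoothness of $-\log p_{0}$ implies (by a Brascamp--Lieb/Cram\'er--Rao style inequality) that the covariance of $p_{0}$ dominates $(1/L_{0})I_{d}$, so $\Vert\mathbf{x}_{0}\Vert_{L_{2}}^{2}\geq d/L_{0}$. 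Forcing the initialization term $e^{-\int_{0}^{T}\mu(t)dt}\Vert\mathbf{x}_{0}\Vert_{L_{2}}$ to be at most $\epsilon/2$ then yields
\[
\int_{0}^{T}\mu(t)\,dt\;\geq\;\tfrac{1}{2}\log(d/L_{0})-\log(2/\epsilon).
\]

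Next, I would invoke the growth hypothesis $\max_{0\leq s\leq t}\mu(s)\leq c_{1}I(t)^{\rho}+c_{2}$ with $I(t):=\int_{0}^{t}\mu(s)ds$. Since $I'(t)=\mu(t)\leq c_{1}I(t)^{\rho}+c_{2}$, a Gr\"onwall-type comparison gives $T\geq\Psi(I(T))$ for an explicit $\Psi$ depending only on $(c_{1},c_{2},\rho)$; combined with the previous display this forces $T\geq\widetilde{\Omega}(1)$, i.e.\ polylogarithmic in $d/\epsilon$. This is the first of the two pieces and is essentially classical; the growth hypothesis is there precisely to prevent $\mu$ from being so large that the required integral of $\mu$ could be attained in vanishing time $T$.

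The heart of the argument is a matching lower bound on the error term $E(f,g,K,\eta,M,L_{1})$ in \eqref{eq:error-disc}. I would focus on the contribution $\tfrac{\sqrt{\eta}}{2}\nu_{k,\eta}\sqrt{\psi_{k,\eta}}$. The assumption $\min_{t}(g(t))^{2}L(t)>0$ immediately lower bounds $\psi_{k,\eta}\geq c_{3}\eta$. Meanwhile, the $\liminf$ hypothesis on the prior variance implies that $\mathbf{y}_{0}\sim\hat{p}_{T}$ satisfies $\Vert\mathbf{y}_{0}\Vert_{L_{2}}^{2}=\Omega(d)$ uniformly in $T$; inserting this into the ODE \eqref{eq:zt} and unfolding \eqref{h:k:eta:main} shows $\nu_{k,\eta}\geq c_{4}\eta\sqrt{d}$ for a constant $c_{4}>0$ (provided $\eta\leq\bar{\eta}$). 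Consequently each summand is at least of order $\eta^{2}\sqrt{d}$. Because $\gamma_{j,\eta}\in(0,1)$ with $1-\gamma_{j,\eta}=O(\eta)$ and the integrating factor $e^{\int_{k\eta}^{K\eta}f(T-t)dt}$ is uniformly bounded below (again by the $\liminf$ assumption), the geometric sum $\sum_{k=1}^{K}\prod_{j=k+1}^{K}\gamma_{j,\eta}$ is $\Omega(1/\eta)$ for $T$ bounded below. Therefore
\[
E(f,g,K,\eta,M,L_{1})\;\geq\;c_{5}\,\eta\sqrt{d},
\]
and demanding $E\leq\epsilon/2$ forces $\eta\leq 2\epsilon/(c_{5}\sqrt{d})$. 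Multiplying with $T\geq\widetilde{\Omega}(1)$ yields $K=T/\eta\geq\widetilde{\Omega}(\sqrt{d}/\epsilon)$.

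The main obstacle I anticipate is the lower bound $\nu_{k,\eta}=\Omega(\eta\sqrt{d})$: $\nu_{k,\eta}$ is defined as an \emph{upper} bound on the one-step displacement of $\mathbf{y}_{t}$, and turning this into a matching lower bound on the specific expression \eqref{h:k:eta:main} requires checking that the bound used in the appendix is tight up to constants. A secondary technical point is verifying that the product $\prod_{j=k+1}^{K}\gamma_{j,\eta}\cdot e^{\int_{k\eta}^{K\eta}f(T-t)dt}$ cannot conspire to be much smaller than $e^{-c\eta(K-k)}$ on a non-negligible fraction of indices $k$; the $\liminf$ hypothesis on $\int_{0}^{T}e^{-2\int_{s}^{T}f(v)dv}(g(s))^{2}ds$ is used precisely to rule this out, since it prevents $f$ from being so large that the integrating factor crushes every summand. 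Once these two ingredients are in place, the argument above is essentially a lower-bound version of the discretization analysis behind Theorem~\ref{thm:discrete:2}.
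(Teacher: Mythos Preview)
Your approach is essentially the paper's: drop all but the $\tfrac{\sqrt{\eta}}{2}\nu_{k,\eta}\sqrt{\psi_{k,\eta}}$ term in $E$, lower-bound $\psi_{k,\eta}\geq c\eta$ via $\min_{t}(g(t))^{2}L(t)>0$ and $\nu_{k,\eta}\geq c'\eta\sqrt{d}$ via the $\omega(T)$ piece in \eqref{h:k:eta:main}, sum the resulting geometric series, and combine with the initialization constraint. Two of your anticipated obstacles dissolve immediately: since the proposition is about the explicit bound \eqref{main:thm:upper:bound}, you only need to lower-bound the \emph{formula} $\nu_{k,\eta}$ in \eqref{h:k:eta:main}, which is direct from $\omega(T)\geq\sqrt{d}\,(\int_{0}^{T}e^{-2\int_{s}^{T}f}g^{2})^{1/2}$---no tightness argument is needed; and $e^{\int_{k\eta}^{K\eta}f(T-t)dt}\geq 1$ holds trivially because $f\geq 0$, so the $\liminf$ hypothesis plays no role there. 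The one point the paper handles differently from your sketch is the role of the growth hypothesis: the paper uses it to bound $\max_{[0,T]}\mu\leq c_{1}(\int_{0}^{T}\mu)^{\rho}+c_{2}=\tilde{\mathcal{O}}(1)$, and it is this bound (not a Gr\"onwall argument for $T$) that makes $\prod_{j>k}\gamma_{j,\eta}\geq(1-\eta\max\mu)^{K-k}$ sum to $\tilde{\Omega}(1/\eta)$. Your claim ``$1-\gamma_{j,\eta}=O(\eta)$'' hides precisely this step, so be sure to invoke the growth hypothesis there rather than only for lower-bounding $T$.
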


The assumptions in Proposition~\ref{prop:lower:bound} are mild and one can readily check that they are satisfied for all the examples in Table~\ref{table:summary:complexity}  that achieve the iteration complexity $\widetilde{O}\left(\frac{\sqrt{d}}{\epsilon}\right)$. 
If we ignore the dependence on the logarithmic factors
of $d$ and $\epsilon$, we can see from Table~\ref{table:summary:complexity}
that all the VP-SDE examples achieve the lower bound
in Proposition~\ref{prop:lower:bound}.

\section{Outline of the Proof of Theorem~\ref{thm:discrete:2}}

We provide an outline for the proof of our main result (Theorem~\ref{thm:discrete:2}).
At a high level, we analyze three sources of errors: (1) the initialization of the algorithm at $\hat p_T$ instead of $p_T$, (2) the estimation error of the score function, and (3) the discretization error of the continuous-time ODE \eqref{eq:u}. 

 First, we study the error introduced due to the initialization at $\hat p_T$ instead of $p_T$. 
Recall the probability flow ODE $\mathbf{y}_{t}$ given in \eqref{eq:zt}, 
which has the same dynamics as $\tilde{\mathbf{x}}_{t}$ (defined in \eqref{eq:ODEReverse})
but with a different prior distribution $\mathbf{y}_{0}\sim\hat{p}_{T}$ (in contrast to $\tilde{\mathbf{x}}_{0}\sim p_{T}$). 
    The following result bounds $\mathcal{W}_{2}(\mathcal{L}(\mathbf{y}_{T}),p_{0}).$

\begin{proposition}\label{thm:1:main:paper}
Assume $p_{0}$ is $m_{0}$-strongly-log-concave.  
Then, we have
\begin{align}\label{eq:contraction1:main:paper}
\mathcal{W}_{2}(\mathcal{L}(\mathbf{y}_{T}),p_{0})
\leq e^{-\int_{0}^{T}\mu(t)\mathrm{d}t}\Vert\mathbf{x}_{0}\Vert_{L_{2}},
\end{align}
where $\mu(t)$ is given in \eqref{c:t:defn} in Appendix~\ref{sec:key:quantities}. 
\end{proposition}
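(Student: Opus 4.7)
\textbf{Proof proposal for Proposition~\ref{thm:1:main:paper}.}

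The plan is a synchronous coupling argument: run the same probability flow ODE from two different initial distributions and use strong log-concavity of $p_{T-t}$ to obtain a contraction rate, then apply Gr\"onwall. Concretely, let $(\mathbf{x}_t)_{t\in[0,T]}$ denote the forward SDE \eqref{OU:SDE} started from $\mathbf{x}_0\sim p_0$. Using the explicit representation \eqref{SDE:solution}, write $\mathbf{x}_T=\alpha_T\mathbf{x}_0+W$ with $\alpha_T=e^{-\int_0^T f(s)ds}$ and $W:=\int_0^T e^{-\int_s^T f(v)dv}g(s)d\mathbf{B}_s\sim \hat p_T$. Define $\tilde{\mathbf{x}}_0:=\mathbf{x}_T\sim p_T$ and $\mathbf{y}_0:=W\sim \hat p_T$ on the same probability space, then evolve both through \eqref{eq:ODEReverse}/\eqref{eq:zt} with this common coupling. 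By construction $\tilde{\mathbf{x}}_T\sim p_0$, $\mathbf{y}_T\sim\mathcal{L}(\mathbf{y}_T)$, and $\|\tilde{\mathbf{x}}_0-\mathbf{y}_0\|=\alpha_T\|\mathbf{x}_0\|$, so $\|\tilde{\mathbf{x}}_0-\mathbf{y}_0\|_{L_2}\leq \|\mathbf{x}_0\|_{L_2}$.

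Next I would compute the evolution of the squared distance. Since both trajectories solve the ODE \eqref{eq:zt} with the same vector field, subtracting and taking the inner product with $\Delta_t:=\tilde{\mathbf{x}}_t-\mathbf{y}_t$ gives
\begin{equation*}
\tfrac{1}{2}\tfrac{d}{dt}\|\Delta_t\|^2 = f(T-t)\|\Delta_t\|^2 + \tfrac{1}{2}g(T-t)^2\,\Delta_t^\top\bigl[\nabla\log p_{T-t}(\tilde{\mathbf{x}}_t)-\nabla\log p_{T-t}(\mathbf{y}_t)\bigr].
\end{equation*}
The key step is to show that strong log-concavity propagates along the forward SDE: if $-\log p_0$ is $m_0$-strongly convex, then $-\log p_{T-t}$ is $m(T-t)$-strongly convex with an explicit $m(T-t)>0$. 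This follows from the representation $\mathbf{x}_{T-t}=\alpha_{T-t}\mathbf{x}_0+\sigma_{T-t}Z$ with $Z\sim\mathcal{N}(0,I_d)$: the law of $\alpha_{T-t}\mathbf{x}_0$ is $(m_0/\alpha_{T-t}^2)$-strongly log-concave by affine rescaling, and convolution with $\mathcal{N}(0,\sigma_{T-t}^2 I)$ preserves strong log-concavity by the Brascamp--Lieb / Pr\'ekopa--Leindler inequality, yielding $m(T-t)=m_0/(\alpha_{T-t}^2+m_0\sigma_{T-t}^2)>0$. Plugging the one-sided bound $\Delta_t^\top[\nabla\log p_{T-t}(\tilde{\mathbf{x}}_t)-\nabla\log p_{T-t}(\mathbf{y}_t)]\leq -m(T-t)\|\Delta_t\|^2$ into the display above yields
\begin{equation*}
\tfrac{1}{2}\tfrac{d}{dt}\|\Delta_t\|^2\leq -\bigl[\tfrac{1}{2}g(T-t)^2 m(T-t)-f(T-t)\bigr]\|\Delta_t\|^2=-\mu(t)\|\Delta_t\|^2,
\end{equation*}
identifying $\mu(t)$ as the quantity in \eqref{c:t:defn}. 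Gr\"onwall's inequality then gives the pathwise bound $\|\Delta_T\|\leq e^{-\int_0^T\mu(t)dt}\|\Delta_0\|$.

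Finally, taking $L_2$ norms, using $\mathcal{W}_2(\mathcal{L}(\mathbf{y}_T),p_0)\leq \|\tilde{\mathbf{x}}_T-\mathbf{y}_T\|_{L_2}$ (because this is a valid coupling of $p_0$ and $\mathcal{L}(\mathbf{y}_T)$), and then $\alpha_T\leq 1$, I obtain
\begin{equation*}
\mathcal{W}_2(\mathcal{L}(\mathbf{y}_T),p_0)\leq e^{-\int_0^T\mu(t)dt}\cdot\alpha_T\|\mathbf{x}_0\|_{L_2}\leq e^{-\int_0^T\mu(t)dt}\|\mathbf{x}_0\|_{L_2},
\end{equation*}
which is \eqref{eq:contraction1:main:paper}.

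The main obstacle is justifying the strong log-concavity of $p_{T-t}$ with the precise constant that makes $\mu(t)$ come out as in \eqref{c:t:defn}; everything else is a routine synchronous coupling computation. This likely relies on a lemma (perhaps already used to define $L(t)$ in \eqref{eq:Lt}) that gives both a Lipschitz constant for $\nabla\log p_t$ and a strong convexity constant for $-\log p_t$, derived via the Brascamp--Lieb inequality applied to the Gaussian convolution. A minor subtlety is confirming that $\alpha_T\leq 1$ is actually used (or absorbed into the definition of $\mu$) so that the stated bound omits the $\alpha_T$ prefactor; given $f\geq 0$ this is automatic.
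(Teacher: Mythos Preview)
Your approach is essentially the same as the paper's (synchronous coupling of $\tilde{\mathbf{x}}_t$ and $\mathbf{y}_t$, propagation of strong log-concavity along the forward SDE via the Gaussian convolution, then Gr\"onwall), and the paper indeed cites \citet{gao2023wasserstein} for the exact strong-concavity constant you derive, $a(T-t)=m_0/(\alpha_{T-t}^2+m_0\sigma_{T-t}^2)$ (which you call $m(T-t)$).

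There is, however, a bookkeeping error in identifying the contraction rate. The quantity $\tfrac{1}{2}g(T-t)^2\,a(T-t)-f(T-t)$ is \emph{not} equal to $\mu(t)$ from \eqref{c:t:defn}: from that definition one has $\mu(s)=\tfrac{1}{2}g(s)^2 a(s)$, with no $-f(s)$ term and with argument $s$ rather than $T-s$. After Gr\"onwall and the change of variables $s\mapsto T-s$ your differential inequality actually gives
\[
\|\Delta_T\|_{L_2}\;\leq\; e^{-\int_0^T[\mu(s)-f(s)]\,ds}\,\|\Delta_0\|_{L_2},
\]
not $e^{-\int_0^T\mu(t)\,dt}\|\Delta_0\|_{L_2}$. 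Consequently, the prefactor $\alpha_T=e^{-\int_0^T f(s)\,ds}$ in $\|\Delta_0\|_{L_2}=\alpha_T\|\mathbf{x}_0\|_{L_2}$ is not something to be discarded via $\alpha_T\leq 1$: it supplies precisely the missing $e^{-\int_0^T f(s)\,ds}$ so that the product equals $e^{-\int_0^T\mu(s)\,ds}\|\mathbf{x}_0\|_{L_2}$ on the nose. Your penultimate display, $\mathcal{W}_2\leq e^{-\int_0^T\mu(t)\,dt}\cdot\alpha_T\|\mathbf{x}_0\|_{L_2}$, is therefore too strong and unjustified. Once the rate is corrected the argument goes through without ever invoking $\alpha_T\leq 1$, and coincides with the paper's proof of Proposition~\ref{thm:1}.
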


The main challenge in analyzing the ODE $\mathbf{y}_t$ lies in studying the term $\nabla\log p_{T-t}(\mathbf{y}_{t})$. In general, this term is neither linear in $\mathbf{y}_t$
nor admits a closed-form expression. However, 
when $p_{0}$ is strongly log-concave, it is known that
that $\nabla_{\mathbf{x}}\log p_{T-t}(\mathbf{x})$ is also strongly concave (see e.g. \cite{gao2023wasserstein}). This fact
allows us to establish
Proposition~\ref{thm:1:main:paper} whose proof will be given in Appendix~\ref{sec:thm1}.


Now we consider the algorithm \eqref{eq:yk} with iterates $(\mathbf{u}_{k})$, and bound the errors due to 
both score estimations and discretizations. 
For any $k=0,1,2,\ldots,K$, $\mathbf{u}_{k}$
has the same distribution as $\hat{\mathbf{u}}_{k\eta}$, 
where $\hat{\mathbf{u}}_{t}$ is a continuous-time process
with the dynamics:
\begin{align}
\frac{\mathrm{d}\hat{\mathbf{u}}_{t}}{\mathrm{d}t}
=f(T-t)\hat{\mathbf{u}}_{t}
+\frac{1}{2}(g(T-t))^{2}\boldsymbol{s}_{\theta}\left(\hat{\mathbf{u}}_{\lfloor t/\eta\rfloor\eta},T-\lfloor t/\eta\rfloor\eta\right),\label{continuous:hat:y:main:paper}
\end{align}
with the initial distribution $\hat{\mathbf{u}}_{0}\sim\hat{p}_{T}$.
We have the following result 
that provides an upper bound 
for $\Vert\mathbf{y}_{k\eta}-\hat{\mathbf{u}}_{k\eta} \Vert_{L_{2}}$ in terms of $\Vert\mathbf{y}_{(k-1)\eta}-\hat{\mathbf{u}}_{(k-1)\eta} \Vert_{L_{2}}$. 

\begin{proposition}\label{prop:iterates:main:paper}
Assume that $p_{0}$ is $m_{0}$-strongly-log-concave, 
and $\nabla\log p_{0}$ is $L_{0}$-Lipschitz.
Then, for any $k=1,2,\ldots,K$,
\begin{align}
\left\Vert\mathbf{y}_{k\eta}-\hat{\mathbf{u}}_{k\eta}\right\Vert_{L_{2}}
&\leq\gamma_{k,\eta}
\cdot
e^{\int_{(k-1)\eta}^{k\eta}f(T-t)\mathrm{d}t}\Vert\mathbf{y}_{(k-1)\eta}-\hat{\mathbf{u}}_{(k-1)\eta}\Vert_{L_{2}}
\nonumber
\\
&\qquad
+\frac{L_{1}}{2}\eta\left(1+\Vert\mathbf{x}_{0}\Vert_{L_{2}}
+\omega(T)\right)\phi_{k,\eta}
+\frac{M}{2}\phi_{k,\eta}
+\frac{\sqrt{\eta}}{2}\nu_{k,\eta}\sqrt{\psi_{k,\eta}},\label{L:2:iterates:main:paper}
\end{align} 
where $\gamma_{k,\eta}$, $\phi_{k,\eta}$ and $\psi_{k,\eta}$ are defined in \eqref{gamma:defn}, \eqref{phi:defn} and \eqref{psi:defn} respectively, 
and $\omega(T)$ is defined in \eqref{c:2:defn}
and $\nu_{k,\eta}$ is given in \eqref{h:k:eta:main} in Appendix~\ref{sec:key:quantities}.
\end{proposition}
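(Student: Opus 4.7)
My plan is to bring both $\mathbf{y}_{k\eta}$ and $\hat{\mathbf{u}}_{k\eta}$ into the exponential--integrator (variation-of-constants) form used to derive \eqref{eq:yk}, subtract them, and decompose the residual score-function difference into four interpretable pieces, each controlled by exactly one of our assumptions. Applying the integrating factor $e^{\int_{t}^{k\eta}f(T-s)ds}$ to the ODE \eqref{eq:zt} and to its piecewise-frozen companion \eqref{continuous:hat:y:main:paper} on $[(k-1)\eta,k\eta]$ and subtracting yields
\begin{align*}
\mathbf{y}_{k\eta}-\hat{\mathbf{u}}_{k\eta}
= e^{\int_{(k-1)\eta}^{k\eta}f(T-t)dt}\bigl(\mathbf{y}_{(k-1)\eta}-\hat{\mathbf{u}}_{(k-1)\eta}\bigr)
+ \tfrac{1}{2}\int_{(k-1)\eta}^{k\eta} e^{\int_{t}^{k\eta}f(T-s)ds}(g(T-t))^{2}\,\mathcal{E}_{t}\,dt,
\end{align*}
where $\mathcal{E}_{t}:=\nabla\log p_{T-t}(\mathbf{y}_{t})-s_{\theta}(\hat{\mathbf{u}}_{(k-1)\eta},T-(k-1)\eta)$. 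I would then telescope $\mathcal{E}_t$ through the three anchors $\nabla\log p_{T-t}(\mathbf{y}_{(k-1)\eta})$, $\nabla\log p_{T-(k-1)\eta}(\mathbf{y}_{(k-1)\eta})$, $\nabla\log p_{T-(k-1)\eta}(\hat{\mathbf{u}}_{(k-1)\eta})$, producing four pieces $\mathcal{E}_{t}^{\mathrm{sp}},\mathcal{E}_{t}^{\mathrm{time}},\mathcal{E}_{t}^{\mathrm{contr}},\mathcal{E}_{t}^{\mathrm{sm}}$ corresponding respectively to a spatial-Lipschitz shift, a temporal-Lipschitz shift, a strong-log-concavity contraction, and the score-matching error.

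Three of the four pieces are routine. For $\mathcal{E}_t^{\mathrm{sp}}$, Lemma~\ref{lem:smooth} gives $\Vert\mathcal{E}_t^{\mathrm{sp}}\Vert\le L(T-t)\Vert\mathbf{y}_t-\mathbf{y}_{(k-1)\eta}\Vert$; applying the Cauchy--Schwarz inequality against $\int_{(k-1)\eta}^{k\eta}1\,dt=\eta$ and using $\sup_{(k-1)\eta\le t\le k\eta}\Vert\mathbf{y}_t-\mathbf{y}_{(k-1)\eta}\Vert_{L_2}\le\nu_{k,\eta}$ (Lemma~\ref{lem:second:term}) yields the $\tfrac{\sqrt{\eta}}{2}\nu_{k,\eta}\sqrt{\psi_{k,\eta}}$ term. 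For $\mathcal{E}_t^{\mathrm{time}}$, Assumption~\ref{assump:M:1} gives $\Vert\mathcal{E}_t^{\mathrm{time}}\Vert\le L_1\eta(1+\Vert\mathbf{y}_{(k-1)\eta}\Vert)$; bounding $\Vert\mathbf{y}_{(k-1)\eta}\Vert_{L_2}\le\omega(T)+\Vert\mathbf{x}_0\Vert_{L_2}$ via the intermediate-time version of Proposition~\ref{thm:1:main:paper} together with the definition of $\omega(T)$ produces the $\tfrac{L_1}{2}\eta(1+\Vert\mathbf{x}_0\Vert_{L_2}+\omega(T))\phi_{k,\eta}$ term. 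For $\mathcal{E}_t^{\mathrm{sm}}$, Assumption~\ref{assump:M} directly yields the $\tfrac{M}{2}\phi_{k,\eta}$ term after pulling the $L_2$ norm inside the integral.

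The hard step is the contraction piece. Because the vector defining $\mathcal{E}_t^{\mathrm{contr}}$ does not depend on $t$, its weighted integral equals $\tfrac{\phi_{k,\eta}}{2}[\nabla\log p_{T-(k-1)\eta}(\mathbf{y}_{(k-1)\eta})-\nabla\log p_{T-(k-1)\eta}(\hat{\mathbf{u}}_{(k-1)\eta})]$, which I combine with the linear contribution $e^{\int_{(k-1)\eta}^{k\eta}f(T-t)dt}(\mathbf{y}_{(k-1)\eta}-\hat{\mathbf{u}}_{(k-1)\eta})$. Invoking that $-\log p_{T-(k-1)\eta}$ is strongly convex and smooth (inherited from Assumption~\ref{assump:p0} via Lemma~\ref{lem:smooth}), the classical one-step gradient inequality $\Vert a-b+\alpha(\nabla\log p(a)-\nabla\log p(b))\Vert\le(1-\alpha m)\Vert a-b\Vert$ applied after factoring out $e^{\int f(T-t)dt}$ produces a contraction rate whose integrand matches the definition of $\delta_j(T-t)$ in \eqref{mu:definition}; the supplementary $\tfrac{L_1\eta}{2}\int(g(T-t))^2\,dt$ correction inside $\gamma_{k,\eta}$ arises when the time-Lipschitz bound is refined via $\Vert\mathbf{y}_{(k-1)\eta}\Vert\le\Vert\hat{\mathbf{u}}_{(k-1)\eta}\Vert+\Vert\mathbf{y}_{(k-1)\eta}-\hat{\mathbf{u}}_{(k-1)\eta}\Vert$ and the $\Vert\mathbf{y}_{(k-1)\eta}-\hat{\mathbf{u}}_{(k-1)\eta}\Vert$ summand is absorbed into the contraction factor. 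The stepsize restriction $\eta\le\bar{\eta}$ (see \eqref{bar:eta}) is precisely what secures $\gamma_{k,\eta}\in(0,1)$, so the resulting one-step recursion genuinely contracts and can later be iterated.
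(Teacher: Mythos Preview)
Your overall architecture (variation-of-constants representation, four-way telescoping of the score discrepancy, Cauchy--Schwarz on the spatial piece, Lemma~\ref{lem:second:term} for $\nu_{k,\eta}$) matches the paper. But your telescoping route differs from the paper's in a way that prevents you from recovering the \emph{stated} contraction factor $\gamma_{k,\eta}$, and your own explanation becomes internally inconsistent as a result.

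The paper inserts the anchor $\nabla\log p_{T-t}(\hat{\mathbf{u}}_{(k-1)\eta})$ (not $\nabla\log p_{T-(k-1)\eta}(\mathbf{y}_{(k-1)\eta})$). Consequently its contraction piece is $\nabla\log p_{T-t}(\mathbf{y}_{(k-1)\eta})-\nabla\log p_{T-t}(\hat{\mathbf{u}}_{(k-1)\eta})$, with the time variable $T-t$ still running. Expanding the square of the combined linear-plus-contraction term and applying strong concavity and Lipschitzness pointwise in $t$ yields the integrals of $a(T-t)$ and $L(T-t)^{2}$ that appear inside $\delta_{k}(T-t)$ (see \eqref{mu:definition}). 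With your choice the contraction piece is frozen at the single time $T-(k-1)\eta$, so the constants you obtain are $a(T-(k-1)\eta)$ and $L(T-(k-1)\eta)$, not the integrated $a(T-t)$, $L(T-t)$; the resulting rate is a close cousin of, but not equal to, $\int_{(k-1)\eta}^{k\eta}\delta_{k}(T-t)\,dt$, so your claim that ``the integrand matches $\delta_{j}(T-t)$'' is incorrect as written.

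The same swap of anchors explains the $\tfrac{L_{1}\eta}{2}\int (g(T-t))^{2}\,dt$ correction in $\gamma_{k,\eta}$. In the paper the time-Lipschitz piece is evaluated at $\hat{\mathbf{u}}_{(k-1)\eta}$, and one writes $\Vert\hat{\mathbf{u}}_{(k-1)\eta}\Vert\le\Vert\mathbf{y}_{(k-1)\eta}-\hat{\mathbf{u}}_{(k-1)\eta}\Vert+\Vert\mathbf{y}_{(k-1)\eta}\Vert$: the first summand is what gets absorbed into $\gamma_{k,\eta}$, the second yields the $(1+\Vert\mathbf{x}_{0}\Vert_{L_{2}}+\omega(T))$ constant. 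With your decomposition the time piece sits at $\mathbf{y}_{(k-1)\eta}$, which you bound directly by $\Vert\mathbf{x}_{0}\Vert_{L_{2}}+\omega(T)$; there is then nothing to ``absorb,'' so the $L_{1}$ correction inside $\gamma_{k,\eta}$ never appears. Your last paragraph tries to recover it via the opposite triangle inequality, but that would force you to control $\Vert\hat{\mathbf{u}}_{(k-1)\eta}\Vert_{L_{2}}$ from scratch and contradicts your own treatment of $\mathcal{E}_{t}^{\mathrm{time}}$ two paragraphs earlier. To obtain exactly \eqref{L:2:iterates:main:paper}, reroute the telescoping through $\nabla\log p_{T-t}(\hat{\mathbf{u}}_{(k-1)\eta})$ as the paper does.
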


Proposition~\ref{prop:iterates:main:paper}
provides the guarantees on how
the errors due to both score estimations and discretizations
propagate as the number of iterates $k$ increases. 
By iterating over $k=1,2,\ldots,K$, we immediately get:
\begin{align}
\left\Vert\mathbf{y}_{K\eta}-\hat{\mathbf{u}}_{K\eta}\right\Vert_{L_{2}} 
\le E_{1}(f,g, K, \eta, L_1)+E_{2}(f,g, K, \eta, M, L_1),
\label{after:iterates}
\end{align} 
where $E_{1}(f,g, K, \eta, L_1)$ and $E_{2}(f,g, K, \eta, M, L_1)$ are the discretization and score matching errors given in \eqref{eq:error-disc}-\eqref{eq:error-disc-2}.
Since $\hat{\mathbf{u}}_{k\eta}$ has the same distribution as $\mathbf{u}_{k}$, 
we have
\begin{equation}\label{W:2:less:than:L:2}
\mathcal{W}_{2}(\mathcal{L}(\mathbf{y}_{K\eta}),\mathcal{L}(\mathbf{u}_{K}))
\leq\left\Vert\mathbf{y}_{K\eta}-\hat{\mathbf{u}}_{K\eta}\right\Vert_{L_{2}}.    
\end{equation}

Finally, by the triangle inequality
for $2$-Wasserstein distance, we can decompose the $2$-Wasserstein error
in terms of the $2$-Wasserstein error due to the initialization of the algorithm
at $\hat{p}_{T}$ instead of $p_{T}$ and the $2$-Wasserstein error due to both score estimations and discretizations,
we obtain:
\begin{align}
\mathcal{W}_{2}(\mathcal{L}(\mathbf{u}_{K}),p_{0})
\leq
\mathcal{W}_{2}(\mathcal{L}(\mathbf{u}_{K}),\mathcal{L}(\mathbf{y}_{K\eta}))
+\mathcal{W}_{2}(\mathcal{L}(\mathbf{y}_{K\eta}),p_{0}),\label{ineq:triangle}
\end{align}
where we used $T=K\eta$.
Hence, Theorem~\ref{thm:discrete:2} follows 
by applying \eqref{eq:contraction1:main:paper}, \eqref{after:iterates}, \eqref{W:2:less:than:L:2} and \eqref{ineq:triangle}.

\vspace{1mm}

\begin{remark}
Lemma~4 in \cite{chen2023probability} provides a single-step discretization analysis of the probability flow ODE in Wasserstein distance, but the (local) bound it provides is crude because it applies Gronwall's inequality, which leads to an exponential growing factor when applied recursively. In contrast, our proof leverages the strong log-concavity of the data distribution and carefully analyzes the discretization error globally. Hence, our Wasserstein analysis is novel and differs from the analysis in \cite{chen2023probability}. 
\end{remark}

\section{Conclusion}\label{sec:conclusion}

This paper provides the first non-asymptotic convergence analysis for a general class of probability flow ODE samplers in 2-Wasserstein distance, assuming accurate score estimates and a smooth log-concave data distribution. Our analysis provides some insights about the iteration complexity of deterministic ODE-based samplers for different choices of forward SDEs in diffusion models. 


Our work serves as a first step to better understand the convergence of deterministic ODE-samplers in Wasserstein distance. It is a significant open question how to relax our current assumption of the strong-log-concave data distribution. Our proof techniques borrow the idea of synchronous coupling studied in the context of sampling from unnormalized densities using Langevin algorithms \citep{DK2017}. To obtain Wasserstein convergence rates for using Langevin algorithm to sample from non-log-concave distributions, one may use more sophisticated coupling methods such as reflection coupling to obtain contraction rates of (Langevin) SDEs, see e.g. \cite{eberle2016reflection}. However,
the probability flow ODE is an ODE, not an SDE, and it is not clear whether one can find an analogue of reflection coupling in the context of probability flow ODEs. Similarly, functional inequalities, another major approach to obtain convergence (typically in KL divergence) bounds for sampling with Langevin algorithms without strong-log-concavity (see e.g. Theorem 5.2.1 in \cite{Bakry2014}), are also not directly applicable to probability flow ODEs. Hence, one might need significantly different
techniques to obtain Wasserstein convergence rates for ODE-based samplers without the log-concavity assumption. 

In addition, while 
our work focuses on the sampling phase of diffusion models, another significant open problem is to investigate the training phase, i.e., understand when the score function can be accurately learned, and combine the results with the analysis of sampling to establish end-to-end guarantees for diffusion models (see e.g. \cite{chen2023score}). We leave these investigations to the future.

\subsubsection*{Acknowledgements}

Xuefeng Gao acknowledges support from the Hong Kong Research Grants Council [GRF 14201424, 14212522, 14200123].
Lingjiong Zhu is partially supported by the grants NSF DMS-2053454, NSF DMS-2208303.

\bibliography{generative}

\newpage

\appendix


\begin{center}
\Large \bf Convergence Analysis for General Probability Flow ODEs of Diffusion Models in Wasserstein Distances \vspace{3pt}\\ {\normalsize APPENDIX}
\end{center}

The Appendix is organized as follows:
\begin{itemize}
    \item In Appendix~\ref{sec:key:quantities}, we summarize the notations given in Table~\ref{table:quantities} in the main paper, that are used in presenting the main results.
    \item In Appendix~\ref{sec:proof:main}, we provide the proofs of the main results of the paper.
    \item We present some additional technical proofs in Appendix~\ref{sec:additional:proofs}.
    \item We provide the derivation of results for various examples in Section~\ref{sec:examples} {\color{black}in the main paper} and additional details in Appendix~\ref{appendix:examples}.
\end{itemize}


\section{Key Quantities}\label{sec:key:quantities}

{\color{black}In this section, we first summarize in the following Table~\ref{table:quantities} 
the the key quantities
that play a major role
in presenting the main results in the main paper.}

\begin{table*}[htb]
\caption{\label{table:quantities} 
Summary of quantities, their interpretations and the sources}
\begin{center}
\begin{tabular}{|c | c | c |} 
 \hline
Quantities & Interpretations & Sources/References \\ [0.5ex] 
 \hline
  \hline 
 $\bar{\eta}$ in Theorem~\ref{thm:discrete:2}  & Upper bound for the stepsize &  \eqref{bar:eta} \\ 
 \hline 
 $\mu(t)$ in \eqref{c:t:defn}  & Contraction rate of $\mathcal{W}_{2}(\mathcal{L}(\mathbf{y}_{T}),p_{0}) $ &  \eqref{eq:contraction1} \\ 
 \hline
$L(t) $ in \eqref{eq:Lt} & Lipschitz constant of $\nabla_{\mathbf{x}}\log p_{t}(\mathbf{x})$  & Lemma~\ref{lem:smooth} \\
 \hline
 \multirow{2}{*}{$\gamma_{j,\eta}$ in \eqref{gamma:defn}}   & Contraction rate of discretization  & \multirow{2}{*}{Proposition~\ref{prop:iterates:main:paper}} \\
 & and score-matching errors in $\mathbf{u}_{j}$ & \\
  \hline
 \multirow{2}{*}{$\phi_{k,\eta}$ in \eqref{phi:defn}}   & A component in the discretization  & \multirow{2}{*}{Theorem~\ref{thm:discrete:2}} \\
 & and score-matching errors in $\mathbf{u}_{j}$ & \\
  \hline
 \multirow{2}{*}{$\psi_{k,\eta}$ in \eqref{psi:defn}}   & A component in the discretization  & \multirow{2}{*}{Theorem~\ref{thm:discrete:2}} \\
 & and score-matching errors in $\mathbf{u}_{j}$ & \\
 \hline
 \multirow{2}{*}{$\delta_{j}(T-t)$ in \eqref{mu:definition}}   & A component in the contraction rate of discretization  & \multirow{2}{*}{Proposition~\ref{prop:iterates}} \\
 & and score-matching errors in $\mathbf{u}_{j}$ & \\
 \hline
  $\omega(T)$ in \eqref{c:2:defn} & $ \sup_{0\leq t\leq T}\Vert\mathbf{x}_{t}\Vert_{L_{2}}$  &  \eqref{c:2:source} \\  
 \hline
  $\nu_{k,\eta}$ in \eqref{h:k:eta:main} & Bound for $ \sup_{(k-1)\eta\leq t\leq k\eta} \left\Vert\mathbf{y}_{t}-\mathbf{y}_{(k-1)\eta}\right\Vert_{L_{2}}$  &  Lemma~\ref{lem:second:term} \\  
 \hline
\end{tabular}
\end{center}
\end{table*}

{\color{black}
Next, we provide the definitions for the key quantities in Table~\ref{table:quantities}}.

{\color{black}For any $k=1,2,\ldots,K$, we define:
\begin{align}
&\phi_{k,\eta}:=\int_{(k-1)\eta}^{k\eta}e^{\int_{t}^{k\eta}f(T-s)\mathrm{d}s}(g(T-t))^{2}\mathrm{d}t,\label{phi:defn}
\\
&\psi_{k,\eta}:=\int_{(k-1)\eta}^{k\eta}e^{2\int_{t}^{k\eta}f(T-s)\mathrm{d}s}
\cdot (g(T-t))^{4}(L(T-t))^{2}\mathrm{d}t,\label{psi:defn}
\end{align}
and for any $j=1,2,\ldots,K$, we also define:
\begin{align}
\gamma_{j,\eta}:=1-\int_{(j-1)\eta}^{j\eta}\delta_{j}(T-t)\mathrm{d}t
+\frac{L_{1}\eta}{2}\int_{(j-1)\eta}^{j\eta}(g(T-t))^{2}\mathrm{d}t,\label{gamma:defn}
\end{align}}

For any $0\leq t\leq T$, we define: 
\begin{align}
\mu(t) & :=\frac{m_{0}(g(t))^{2}}{2\left(e^{-2\int_{0}^{t}f(s)\mathrm{d}s}+m_{0}\int_{0}^{t}e^{-2\int_{s}^{t}f(v)\mathrm{d}v}(g(s))^{2}\mathrm{d}s\right)}, \label{c:t:defn} 
\\
m(t)&:=\frac{(g(t))^{2}}{\frac{1}{m_{0}}e^{-2\int_{0}^{t}f(s)\mathrm{d}s}+\int_{0}^{t}e^{-2\int_{s}^{t}f(v)\mathrm{d}v}(g(s))^{2}\mathrm{d}s}-2f(t), \label{eq:mt} 
\\
L(t)&:=\min\left\{\left(\int_{0}^{t}e^{-2\int_{s}^{t}f(v)\mathrm{d}v}(g(s))^{2}\mathrm{d}s\right)^{-1},
\left(e^{\int_{0}^{t}f(s)\mathrm{d}s}\right)^{2}L_{0}\right\},\label{eq:Lt}
\end{align}

We also define:
\begin{align}
&\bar{\eta}:=\min\left\{\bar{\eta}_{1},\bar{\eta}_{2}\right\},
\label{bar:eta}
\\
&\bar{\eta}_{1}:=
\min\left\{\frac{\log(2)}{\max_{0\leq t\leq T}f(t)},
\min_{0\leq t\leq T}\left\{\frac{\frac{\frac{1}{4}(g(t))^{2}}{\frac{1}{m_{0}}e^{-2\int_{0}^{t}f(s)\mathrm{d}s}+\int_{0}^{t}e^{-2\int_{s}^{t}f(v)\mathrm{d}v}(g(s))^{2}\mathrm{d}s}}
{\frac{1}{4}(g(t))^{4}(L(t))^{2}+\frac{L_{1}}{2}(g(t))^{2}}\right\}\right\},
\label{bar:eta:1}
\\
&\bar{\eta}_{2}:=\min_{0\leq t\leq T}
\left\{\frac{\frac{e^{-2\int_{0}^{t}f(s)\mathrm{d}s}}{m_{0}}+\int_{0}^{t}e^{-2\int_{s}^{t}f(v)\mathrm{d}v}(g(s))^{2}\mathrm{d}s}{\frac{1}{2}(g(t))^{2}}\right\}.
\label{bar:eta:2}
\end{align}

For any $k=1,2,\ldots, K$ and $(k-1)\eta\leq t\leq k\eta$, we define:
\begin{align}
\delta_{k}(T-t):=\frac{\frac{1}{2}e^{-\int_{(k-1)\eta}^{t}f(T-s)\mathrm{d}s}(g(T-t))^{2}}{\frac{1}{m_{0}}e^{-2\int_{0}^{T-t}f(s)\mathrm{d}s}+\int_{0}^{T-t}e^{-2\int_{s}^{T-t}f(v)\mathrm{d}v}(g(s))^{2}\mathrm{d}s}
-\frac{\eta}{4}(g(T-t))^{4}(L(T-t))^{2},\label{mu:definition} 
\end{align}
and finally, let us define:
\begin{align}
\theta(T) &:=\sup_{0\leq t\leq T}e^{-\frac{1}{2}\int_{0}^{t}m(T-s)\mathrm{d}s}e^{-\int_{0}^{T}f(s)\mathrm{d}s}\Vert\mathbf{x}_{0}\Vert_{L_{2}},\label{c:1:defn}
\\
\omega(T) & :=\sup_{0\leq t\leq T}\left(e^{-2\int_{0}^{t}f(s)\mathrm{d}s} \Vert\mathbf{x}_{0}\Vert_{L_{2}}^2 
+d\int_{0}^{t}e^{-2\int_{s}^{t}f(v)\mathrm{d}v}(g(s))^{2}\mathrm{d}s\right)^{1/2}, \label{c:2:defn} 
\end{align}
and for any $k=1,2,\ldots,K$,
\begin{align}
\nu_{k,\eta}&:=\left(\theta(T)+\omega(T)\right)\int_{(k-1)\eta}^{k\eta}\left[f(T-s)+\frac{1}{2}(g(T-s))^{2}L(T-s)\right]\mathrm{d}s\nonumber
\\
&\qquad\qquad
+\left(L_{1}T+\Vert\nabla\log p_{0}(\mathbf{0})\Vert\right)\int_{(k-1)\eta}^{k\eta}\frac{1}{2}(g(T-s))^{2}\mathrm{d}s.\label{h:k:eta:main}
\end{align}

\section{Proofs of the Main Results}\label{sec:proof:main}

\subsection{Proof of Theorem~\ref{thm:discrete:2}}

To prove Theorem~\ref{thm:discrete:2}, we study the three sources of errors discussed in Section~\ref{sec:prelim} for convergence analysis: (1) the initialization of the algorithm at $\hat p_T$ instead of $p_T$, (2) the estimation error of the score function, and (3) the discretization error of the continuous-time process \eqref{eq:u}.

First, we study the error introduced due to the initialization at $\hat p_T$ instead of $p_T$. 
Recall the probability flow ODE $\mathbf{y}_{t}$ given in \eqref{eq:zt}:
\begin{equation}
d\mathbf{y}_{t}=\left[f(T-t)\mathbf{y}_{t}+\frac{1}{2}(g(T-t))^{2}\nabla\log p_{T-t}(\mathbf{y}_{t})\right]\mathrm{d}t, \quad \mathbf{y}_{0}\sim\hat{p}_{T}.
\end{equation}
 As discussed in Section~\ref{sec:prelim},  the distribution
of $\mathbf{y}_{T}$ differs from $p_{0}$, because $\mathbf{y}_{0}\sim\hat{p}_{T} \ne p_T$. 
The following result provides a bound on $\mathcal{W}_{2}(\mathcal{L}(\mathbf{y}_{T}),p_{0}).$

\begin{proposition}[Restatement of Proposition~\ref{thm:1:main:paper}]\label{thm:1}
Assume that $p_{0}$ is $m_{0}$-strongly-log-concave.  
Then, we have
\begin{align}\label{eq:contraction1}
\mathcal{W}_{2}(\mathcal{L}(\mathbf{y}_{T}),p_{0})
\leq e^{-\int_{0}^{T}\mu(t)\mathrm{d}t}\Vert\mathbf{x}_{0}\Vert_{L_{2}},
\end{align}
where $\mu(t)$ is given in \eqref{c:t:defn}. 
\end{proposition}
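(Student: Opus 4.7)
The plan is a synchronous coupling argument comparing the probability flow ODE $\mathbf{y}_t$ (initialized at $\hat p_T$) with the exact reverse flow $\tilde{\mathbf{x}}_t$ from~\eqref{eq:ODEReverse} (initialized at $p_T$, so that $\mathcal{L}(\tilde{\mathbf{x}}_T) = p_0$). First I would set up an optimal $\mathcal{W}_2$-coupling at time $0$; using the explicit forward representation~\eqref{SDE:solution}, pairing $\mathbf{x}_T$ with its Brownian-integral component $\int_0^T e^{-\int_s^T f(v)dv} g(s)\,d\mathbf{B}_s \sim \hat p_T$ immediately gives $\mathcal{W}_2(\hat p_T, p_T) \le e^{-\int_0^T f(s)ds}\|\mathbf{x}_0\|_{L_2}$, which is the origin of the $\|\mathbf{x}_0\|_{L_2}$ prefactor in the target bound.

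The core of the proof is an energy estimate for $D_t := \|\mathbf{y}_t - \tilde{\mathbf{x}}_t\|^2$. Since both flows share the same vector field,
\begin{equation*}
\frac{d D_t}{dt} = 2 f(T-t) D_t + (g(T-t))^2 \bigl\langle \mathbf{y}_t - \tilde{\mathbf{x}}_t,\ \nabla\log p_{T-t}(\mathbf{y}_t) - \nabla\log p_{T-t}(\tilde{\mathbf{x}}_t) \bigr\rangle.
\end{equation*}
Under Assumption~\ref{assump:p0}, each forward marginal $p_s$ is the convolution of a rescaled $m_0$-strongly-log-concave density with a Gaussian of variance $\sigma_s^2 := \int_0^s e^{-2\int_r^s f(v)dv}(g(r))^2\,dr$. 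A standard Brascamp--Lieb / posterior-covariance computation then shows that $-\log p_s$ is $\lambda_s$-strongly convex with
\begin{equation*}
\lambda_s = \frac{m_0}{\alpha_s^2 + m_0 \sigma_s^2}, \qquad \alpha_s := e^{-\int_0^s f(r)dr},
\end{equation*}
so that $\langle \mathbf{u}-\mathbf{v},\, \nabla\log p_s(\mathbf{u}) - \nabla\log p_s(\mathbf{v}) \rangle \le -\lambda_s \|\mathbf{u}-\mathbf{v}\|^2$. Plugging this in yields the differential inequality $\tfrac{d}{dt} D_t \le \bigl(2 f(T-t) - (g(T-t))^2 \lambda_{T-t}\bigr) D_t$.

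Gr\"onwall's inequality, followed by taking $L_2$-norms and square roots and substituting the initial-distance bound, gives
\begin{equation*}
\|\mathbf{y}_T - \tilde{\mathbf{x}}_T\|_{L_2} \le \exp\!\left(\int_0^T f(T-t)\,dt - \int_0^T \tfrac{1}{2}(g(T-t))^2 \lambda_{T-t}\,dt\right) e^{-\int_0^T f(s)ds}\|\mathbf{x}_0\|_{L_2}.
\end{equation*}
After the substitution $u = T-t$, the two $f$-contributions cancel exactly, and one recognizes $\tfrac{1}{2}(g(u))^2 \lambda_u$ as precisely $\mu(u)$ in~\eqref{c:t:defn}, producing the desired bound $e^{-\int_0^T \mu(u)du}\|\mathbf{x}_0\|_{L_2}$.

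The step I expect to be most delicate is pinning down the sharp strong-log-concavity constant $\lambda_s = m_0/(\alpha_s^2 + m_0\sigma_s^2)$ for the forward marginals: one must track the combined effect of the deterministic rescaling $\mathbf{x}_0 \mapsto \alpha_s \mathbf{x}_0$ (which worsens the constant to $m_0/\alpha_s^2$) and the subsequent Gaussian convolution (which, via Brascamp--Lieb, further reduces it to $(m_0/\alpha_s^2)/(1 + \sigma_s^2 m_0/\alpha_s^2)$). I would expect the paper to supply this as a preliminary lemma, analogous to how the Lipschitz bound $L(t)$ in~\eqref{eq:Lt} captures smoothness of the score. Once this structural fact is in hand, everything else reduces to the clean synchronous-coupling and Gr\"onwall computation above, and no discretization or score-matching considerations enter at this stage.
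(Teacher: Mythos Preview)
Your proposal is correct and follows essentially the same approach as the paper: synchronous coupling of $\mathbf{y}_t$ and $\tilde{\mathbf{x}}_t$, the energy estimate on $\|\mathbf{y}_t-\tilde{\mathbf{x}}_t\|^2$ using strong concavity of $\log p_{T-t}$ with exactly the constant $\lambda_s = m_0/(\alpha_s^2+m_0\sigma_s^2)$ (which is the paper's $a(s)$ from \eqref{eq:aT-t}), and the initial-distance bound $\mathcal{W}_2(p_T,\hat p_T)\le e^{-\int_0^T f}\|\mathbf{x}_0\|_{L_2}$ from \eqref{SDE:solution}. The only cosmetic differences are that the paper packages Gr\"onwall via the integrating factor $e^{\int_0^t m(T-s)ds}$ with $m(\cdot)$ as in \eqref{eq:mt}, and cites the strong-log-concavity constant from \citet{gao2023wasserstein} rather than rederiving it via Brascamp--Lieb.
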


Notice that the term $\Vert\mathbf{x}_{0}\Vert_{L_{2}}$ in Proposition~\ref{thm:1} 
is finite since Assumption~\ref{assump:p0} implies that
$\mathbf{x}_{0} \sim p_0$ is $L_{2}$-integrable (see e.g. Lemma~11 in \cite{distMCMC}).

The key idea of the proof of Proposition~\ref{thm:1} is to observe
that  when $p_{0}$ is strongly log-concave, the term $\nabla_{\mathbf{x}}\log p_{T-t}(\mathbf{x})$ is also strongly concave (see e.g. \cite{gao2023wasserstein}). This fact
allows us to establish
Proposition~\ref{thm:1}.  
The proof of Proposition~\ref{thm:1} will be given in Section~\ref{sec:thm1}.


Now we consider the algorithm \eqref{eq:yk} with iterates $(\mathbf{u}_{k})$, and bound the errors due to score estimations and discretizations together. 
For any $k=0,1,2,\ldots,K$, $\mathbf{u}_{k}$
has the same distribution as $\hat{\mathbf{u}}_{k\eta}$, 
where $\hat{\mathbf{u}}_{t}$ is a continuous-time process
with the dynamics:
\begin{equation}\label{continuous:hat:y}
d\hat{\mathbf{u}}_{t}=\left[f(T-t)\hat{\mathbf{u}}_{t}+\frac{1}{2}(g(T-t))^{2}\boldsymbol{s}_{\theta}\left(\hat{\mathbf{u}}_{\lfloor t/\eta\rfloor\eta},T-\lfloor t/\eta\rfloor\eta\right)\right]\mathrm{d}t,
\end{equation}
with the initial distribution $\hat{\mathbf{u}}_{0}\sim\hat{p}_{T}$.
We have the following result 
that provides an upper bound 
for $\Vert\mathbf{y}_{k\eta}-\hat{\mathbf{u}}_{k\eta} \Vert_{L_{2}}$ in terms of $\Vert\mathbf{y}_{(k-1)\eta}-\hat{\mathbf{u}}_{(k-1)\eta} \Vert_{L_{2}}$. This result plays a key role in the proof of Theorem~\ref{thm:discrete:2}.

\begin{proposition}[Restatement of Proposition~\ref{prop:iterates:main:paper}]\label{prop:iterates}
Assume that $p_{0}$ is $m_{0}$-strongly-log-concave, 
i.e. $-\log p_{0}$ is $m_{0}$-strongly convex
and $\nabla\log p_{0}$ is $L_{0}$-Lipschitz.
For any $k=1,2,\ldots,K$,
\begin{align}
\left\Vert\mathbf{y}_{k\eta}-\hat{\mathbf{u}}_{k\eta}\right\Vert_{L_{2}}
&\leq\left(1-\int_{(k-1)\eta}^{k\eta}\delta_{k}(T-t)\mathrm{d}t+\frac{L_{1}}{2}\eta\int_{(k-1)\eta}^{k\eta}(g(T-t))^{2}\mathrm{d}t\right)
\nonumber
\\
&\qquad\qquad\qquad\cdot
e^{\int_{(k-1)\eta}^{k\eta}f(T-t)\mathrm{d}t}\Vert\mathbf{y}_{(k-1)\eta}-\hat{\mathbf{u}}_{(k-1)\eta}\Vert_{L_{2}}
\nonumber
\\
&\qquad
+\frac{L_{1}}{2}\eta\left(1+\Vert\mathbf{x}_{0}\Vert_{L_{2}}
+\omega(T)\right)\int_{(k-1)\eta}^{k\eta}e^{\int_{t}^{k\eta}f(T-s)\mathrm{d}s}(g(T-t))^{2}\mathrm{d}t
\nonumber
\\
&\qquad\qquad\qquad\qquad
+\frac{M}{2}\int_{(k-1)\eta}^{k\eta}e^{\int_{t}^{k\eta}f(T-s)\mathrm{d}s}(g(T-t))^{2}\mathrm{d}t
\nonumber
\\
&\qquad\qquad
+\sqrt{\eta}\nu_{k,\eta}\left(\int_{(k-1)\eta}^{k\eta}\left[\frac{1}{2}e^{\int_{t}^{k\eta}f(T-s)\mathrm{d}s}(g(T-t))^{2}L(T-t)\right]^{2}\mathrm{d}t\right)^{1/2},\label{L:2:iterates}
\end{align} 
where $\delta_{k}(t)$, $0\leq t\leq T$, is defined in \eqref{mu:definition}, $\omega(T)$ is defined in \eqref{c:2:defn}
and $\nu_{k,\eta}$ is given in \eqref{h:k:eta:main}.
\end{proposition}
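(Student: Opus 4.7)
The plan is to derive the recursion \eqref{L:2:iterates} by writing $e_t := \mathbf{y}_t - \hat{\mathbf{u}}_t$ via an exponential integrating factor, decomposing the score discrepancy into four pieces, and extracting the contraction $\gamma_{k,\eta}$ using strong log-concavity of the time-marginals $p_{T-t}$.

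First I would subtract \eqref{continuous:hat:y} from \eqref{eq:zt}, multiply the resulting ODE for $e_t$ by $\exp(-\int_0^t f(T-s)\,ds)$, and integrate on $[(k-1)\eta, k\eta]$ to obtain
\begin{align*}
e_{k\eta} = \alpha_k\, e_{(k-1)\eta} + \tfrac{1}{2}\int_{(k-1)\eta}^{k\eta} e^{\int_t^{k\eta} f(T-s)\,ds} (g(T-t))^2\, \Delta(t)\,dt,
\end{align*}
with $\alpha_k := e^{\int_{(k-1)\eta}^{k\eta} f(T-s)\,ds}$ and $\Delta(t) := \nabla\log p_{T-t}(\mathbf{y}_t) - s_\theta(\hat{\mathbf{u}}_{(k-1)\eta}, T-(k-1)\eta)$. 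I would then telescope $\Delta(t)$ through $\nabla\log p_{T-t}(\mathbf{y}_{(k-1)\eta})$, $\nabla\log p_{T-t}(\hat{\mathbf{u}}_{(k-1)\eta})$, and $\nabla\log p_{T-(k-1)\eta}(\hat{\mathbf{u}}_{(k-1)\eta})$, producing four pieces: (B) a spatial-Lipschitz term bounded by $L(T-t)\|\mathbf{y}_t-\mathbf{y}_{(k-1)\eta}\|$ via smoothness (Lemma~\ref{lem:smooth}) and $\|\mathbf{y}_t-\mathbf{y}_{(k-1)\eta}\|_{L_2}\le\nu_{k,\eta}$ (Lemma~\ref{lem:second:term}); (C) the main contractive piece $\nabla\log p_{T-t}(\mathbf{y}_{(k-1)\eta})-\nabla\log p_{T-t}(\hat{\mathbf{u}}_{(k-1)\eta})$; (A) a temporal-Lipschitz term bounded by $L_1\eta(1+\|\hat{\mathbf{u}}_{(k-1)\eta}\|_{L_2})$ via Assumption~\ref{assump:M:1}; and (D) the score-matching residual, bounded by $M$ via Assumption~\ref{assump:M}. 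The uniform bound $\|\mathbf{y}_{(k-1)\eta}\|_{L_2}\le\|\mathbf{x}_0\|_{L_2}+\omega(T)$ comes from the coupling of $\mathcal{L}(\mathbf{y}_t)$ with $\mathcal{L}(\mathbf{x}_{T-t})$ underlying Proposition~\ref{thm:1}, and $\|\hat{\mathbf{u}}_{(k-1)\eta}\|_{L_2}$ is controlled similarly via the $L_2$ triangle inequality.

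The crucial step is (C). Because $p_0$ is $m_0$-strongly log-concave and the forward SDE \eqref{OU:SDE} acts by Gaussian convolution, $p_{T-t}$ is strongly log-concave with parameter $1/V_{T-t}$ (with $V_{T-t}$ the denominator in \eqref{c:t:defn}, so that $\mu(T-t)=(g(T-t))^2/(2V_{T-t})$) and $L(T-t)$-smooth; see, e.g., \citet{gao2023wasserstein}. Pointwise in $t$, this yields $\mathbb{E}[\langle e_{(k-1)\eta}, (\mathrm{C})(t)\rangle]\le -\|e_{(k-1)\eta}\|_{L_2}^2/V_{T-t}$ and $\|(\mathrm{C})(t)\|_{L_2}\le L(T-t)\|e_{(k-1)\eta}\|_{L_2}$. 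Expanding $\|\alpha_k e_{(k-1)\eta}+\tfrac{1}{2}\int e^{\int_t^{k\eta}f}g^2(\mathrm{C})(t)\,dt\|_{L_2}^2$, applying the strong-concavity bound to the cross term, and controlling the quadratic term by Cauchy-Schwarz in $t$ (which converts $\|\!\int\!(\cdot)\,dt\|^2$ into $\eta\int\!\|(\cdot)\|^2\,dt$ weighted by $e^{2\int_t^{k\eta}f}g^4 L^2$), followed by $\sqrt{1-x}\le 1-x/2$, yields the coefficient $\alpha_k\gamma_{k,\eta}$; the pointwise-in-$t$ use of strong concavity is precisely what produces the $e^{-\int_{(k-1)\eta}^t f(T-s)\,ds}$ weighting inside $\delta_k(T-t)$. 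The remaining three addends in \eqref{L:2:iterates} follow from the $L_2$ triangle inequality applied to (A), (B), (D): (B) yields $\tfrac{\sqrt{\eta}}{2}\nu_{k,\eta}\sqrt{\psi_{k,\eta}}$ after Cauchy-Schwarz in $t$, (D) yields $\tfrac{M}{2}\phi_{k,\eta}$, and (A) splits into a constant term $\tfrac{L_1\eta(1+\|\mathbf{x}_0\|_{L_2}+\omega(T))}{2}\phi_{k,\eta}$ plus an $\|e_{(k-1)\eta}\|$-linear residual absorbed into the $\tfrac{L_1\eta}{2}\int g^2\,dt$ addend of $\gamma_{k,\eta}$ via $\phi_{k,\eta}\le\alpha_k\int g^2\,dt$.

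The main obstacle is the algebraic bookkeeping in the $\|\cdot\|_{L_2}^2$ expansion: ensuring that the quadratic correction $\tfrac{\eta}{4}(g(T-t))^4 L(T-t)^2$ in $\delta_k(T-t)$ does not overwhelm the contraction gain $\mu(T-t)e^{-\int_{(k-1)\eta}^t f(T-s)\,ds}$, so that $\delta_k(T-t)\ge 0$ and $\gamma_{k,\eta}\in(0,1)$. Both requirements are precisely what the stepsize cap $\eta\le\bar\eta_1$ in \eqref{bar:eta:1} enforces; without this cap, the elementary inequality $\sqrt{1-x}\le 1-x/2$ used in extracting the contraction would fail and $\gamma_{k,\eta}$ could not be identified as a valid contraction factor.
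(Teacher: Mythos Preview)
Your proposal is correct and mirrors the paper's proof essentially step for step: the same exponential-integrator representation of $\mathbf{y}_{k\eta}-\hat{\mathbf{u}}_{k\eta}$, the same four-piece telescoping of the score discrepancy, the same squared-norm expansion for the contractive piece (C) using $a(T-t)$-strong-concavity and $L(T-t)$-Lipschitzness followed by $\sqrt{1-x}\le 1-x/2$, and the same invocation of Lemma~\ref{lem:second:term} for $\nu_{k,\eta}$. You even make explicit two points the paper leaves implicit: the inequality $\phi_{k,\eta}\le\alpha_k\int_{(k-1)\eta}^{k\eta}(g(T-t))^2\,dt$ used to absorb the $\|e_{(k-1)\eta}\|$-linear part of (A) into $\gamma_{k,\eta}$, and the origin of the $e^{-\int_{(k-1)\eta}^{t}f(T-s)\,ds}$ factor in $\delta_k(T-t)$ from the pointwise-in-$t$ application of strong concavity.
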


We remark that the coefficient in front of the term $\left\Vert\mathbf{y}_{(k-1)\eta}-\hat{\mathbf{u}}_{(k-1)\eta}\right\Vert_{L_{2}}$ in  \eqref{L:2:iterates} lies in between zero and one. 
Indeed, we can see that assumption $\eta\leq\bar{\eta}:=\min(\bar{\eta}_{1},\bar{\eta}_{2})$ in Theorem~\ref{thm:discrete:2} 
implies that $\eta\leq\bar{\eta}_{1}$, where $\bar{\eta}_{1}$ is defined in \eqref{bar:eta:1}
which yields that
\begin{equation}\label{assump:stepsize:1:2}
\eta\leq
\min_{0\leq t\leq T}\left\{\frac{\frac{\frac{1}{2}e^{-\eta\max_{0\leq t\leq T}f(t)}(g(t))^{2}}{\frac{1}{m_{0}}e^{-2\int_{0}^{t}f(s)\mathrm{d}s}+\int_{0}^{t}e^{-2\int_{s}^{t}f(v)\mathrm{d}v}(g(s))^{2}\mathrm{d}s}}
{\frac{1}{4}(g(t))^{4}(L(t))^{2}+\frac{L_{1}}{2}(g(t))^{2}}\right\},
\end{equation}
and assumption $\eta\leq\bar{\eta}:=\min(\bar{\eta}_{1},\bar{\eta}_{2})$ in Theorem~\ref{thm:discrete:2} 
implies that $\eta\leq\bar{\eta}_{2}$, where $\bar{\eta}_{2}$ is defined in \eqref{bar:eta:2}
which yields that
\begin{equation}\label{assump:stepsize:2:2}
\eta\leq
\min_{0\leq t\leq T}
\left\{\frac{\frac{1}{m_{0}}e^{-2\int_{0}^{t}f(s)\mathrm{d}s}+\int_{0}^{t}e^{-2\int_{s}^{t}f(v)\mathrm{d}v}(g(s))^{2}\mathrm{d}s}{\frac{1}{2}e^{-\eta\min_{0\leq t\leq T}f(t)}(g(t))^{2}}\right\},
\end{equation}
and it follows from \eqref{assump:stepsize:1:2}-\eqref{assump:stepsize:2:2}
and the definition of $\delta_{j}(t)$ in \eqref{mu:definition} that
$\delta_{j}(T-t)\geq\frac{L_{1}}{2}\eta(g(T-t))^{2}$ for every $j=1,2,\ldots,K$ and 
$(j-1)\eta\leq t\leq j\eta$
and $\eta\max_{(j-1)\eta\leq t\leq j\eta}\delta_{j}(t)<1$ 
for every $j=1,2,\ldots,K$
such that
for any $j=1,2,\ldots,K$, 
\begin{equation*}
0\leq
1-\int_{(j-1)\eta}^{j\eta}\delta_{j}(T-t)\mathrm{d}t+\frac{L_{1}}{2}\eta\int_{(j-1)\eta}^{j\eta}(g(T-t))^{2}\mathrm{d}t
\leq 1.
\end{equation*}



Now we are ready to prove Theorem~\ref{thm:discrete:2}.

\begin{proof}[Proof of Theorem~\ref{thm:discrete:2}]
Since $\hat{\mathbf{u}}_{k\eta}$ has the same distribution as $\mathbf{u}_{k}$, 
by applying \eqref{L:2:iterates} recursively, 
we have
\begin{align*}
&\mathcal{W}_{2}(\mathcal{L}(\mathbf{y}_{K\eta}),\mathcal{L}(\mathbf{u}_{K}))
\\
&\leq
\left\Vert\mathbf{y}_{K\eta}-\hat{\mathbf{u}}_{K\eta}\right\Vert_{L_{2}}
\nonumber
\\
&\leq
\sum_{k=1}^{K}
\prod_{j=k+1}^{K}\left(1-\int_{(j-1)\eta}^{j\eta}\delta_{j}(T-t)\mathrm{d}t+\frac{L_{1}}{2}\eta\int_{(j-1)\eta}^{j\eta}(g(T-t))^{2}\mathrm{d}t\right)\nonumber
\\
&\qquad
\cdot
e^{\int_{k\eta}^{K\eta}f(T-t)\mathrm{d}t}
\Bigg(\frac{L_{1}}{2}\eta\left(1+\Vert\mathbf{x}_{0}\Vert_{L_{2}}
+\omega(T)\right)\int_{(k-1)\eta}^{k\eta}e^{\int_{t}^{k\eta}f(T-s)\mathrm{d}s}(g(T-t))^{2}\mathrm{d}t
\nonumber
\\
&\qquad\qquad\qquad\qquad
+\frac{M}{2}\int_{(k-1)\eta}^{k\eta}e^{\int_{t}^{k\eta}f(T-s)\mathrm{d}s}(g(T-t))^{2}\mathrm{d}t
\nonumber
\\
&\qquad\qquad
+\sqrt{\eta}\nu_{k,\eta}\left(\int_{(k-1)\eta}^{k\eta}\left[\frac{1}{2}e^{\int_{t}^{k\eta}f(T-s)\mathrm{d}s}(g(T-t))^{2}L(T-t)\right]^{2}\mathrm{d}t\right)^{1/2}\Bigg).
\end{align*}
Moreover, we recall that $T=K\eta$ and by triangle inequality
for $2$-Wasserstein distance,
\begin{equation}\label{triangle:ineq}
\mathcal{W}_{2}(\mathcal{L}(\mathbf{u}_{K}),p_{0})
\leq
\mathcal{W}_{2}(\mathcal{L}(\mathbf{u}_{K}),\mathcal{L}(\mathbf{y}_{K\eta}))
+\mathcal{W}_{2}(\mathcal{L}(\mathbf{y}_{K\eta}),p_{0}).
\end{equation}
By applying
Proposition~\ref{thm:1} and \eqref{triangle:ineq}, we get
\begin{align}
\mathcal{W}_{2}(\mathcal{L}(\mathbf{u}_{K}),p_{0})
\leq 
e^{-\int_{0}^{K\eta}\mu(t)\mathrm{d}t}  \cdot \Vert\mathbf{x}_{0}\Vert_{L_{2}}  
+E_{1}(f,g, K, \eta, L_1)
+E_{2}(f,g, K, \eta, M, L_1),
\end{align}
where $\mu(t)$ is given in \eqref{c:t:defn} and we recall from \eqref{eq:error-disc}-\eqref{eq:error-disc-2} that
\begin{align}
& E_{1}(f,g, K, \eta, L_1) := \sum_{k=1}^{K}
\prod_{j=k+1}^{K}\gamma_{j,\eta}\cdot e^{\int_{k\eta}^{K\eta}f(T-t)\mathrm{d}t}\nonumber
\\
&\qquad\qquad\qquad\qquad\qquad\cdot\Bigg(\frac{L_{1}}{2}\eta\left(1+ \Vert\mathbf{x}_{0}\Vert_{L_{2}} 
+\omega(T)\right)\phi_{k,\eta}
+\frac{\sqrt{\eta}}{2}\nu_{k,\eta}\sqrt{\psi_{k,\eta}}\Bigg),
\\
&E_{2}(f,g, K, \eta, M, L_1) := \sum_{k=1}^{K}
\prod_{j=k+1}^{K}\gamma_{j,\eta}\cdot e^{\int_{k\eta}^{K\eta}f(T-t)\mathrm{d}t}
\cdot\frac{M}{2}\phi_{k,\eta},
\end{align}
where $\phi_{k,\eta}$ is given in \eqref{phi:defn}, $\psi_{k,\eta}$ is given in \eqref{psi:defn}, $\gamma_{j,\eta}$ is given in \eqref{gamma:defn}, $L(t)$ is given in \eqref{eq:Lt}, $\delta_{j}(T-t)$ is defined in \eqref{mu:definition}, $\omega(T)$ is defined in \eqref{c:2:defn} and $\nu_{k,\eta}$ is given in \eqref{h:k:eta:main}.
The proof is complete.
\end{proof}


\subsubsection{Proof of Proposition~\ref{thm:1}}\label{sec:thm1}

Before we proceed to the proof of Proposition~\ref{thm:1}, 
let us first introduce a technical lemma.

\begin{lemma}\label{lem:0}
It holds that:
\begin{equation}\label{eq:phatp:0} 
\mathcal{W}_{2}(p_{T},\hat{p}_{T})
\leq
e^{-\int_{0}^{T}f(s)\mathrm{d}s}\Vert\mathbf{x}_{0}\Vert_{L_{2}}.
\end{equation}    
\end{lemma}

\begin{proof}[Proof of Lemma~\ref{lem:0}]
We recall that $p_{T}$ is the distribution of $\mathbf{x}_{T}$
which has the expression (see \eqref{SDE:solution})
\begin{equation}\label{x:T:follow:1}
\mathbf{x}_{T}=e^{-\int_{0}^{T}f(s)\mathrm{d}s}\mathbf{x}_{0}+\int_{0}^{T}e^{-\int_{s}^{T}f(v)\mathrm{d}v}g(s)\mathrm{d}\mathbf{B}_{s},
\end{equation}
and $\hat{p}_{T}$ (see \eqref{eq:hatp}) is the distribution of
\begin{equation}\label{x:T:follow:2}
\hat{\mathbf{x}}_{T}=\int_{0}^{T}e^{-\int_{s}^{T}f(v)\mathrm{d}v}g(s)\mathrm{d}\mathbf{B}_{s}.
\end{equation}
Therefore, it follows from \eqref{x:T:follow:1} and \eqref{x:T:follow:2} that 
\begin{align*}
\mathcal{W}_{2}(p_{T},\hat{p}_{T})\leq 
\Vert\mathbf{x}_{T}-\hat{\mathbf{x}}_{T}\Vert_{L_{2}}
=e^{-\int_{0}^{T}f(s)\mathrm{d}s}\Vert\mathbf{x}_{0}\Vert_{L_{2}}.
\end{align*}
This completes the proof.
\end{proof}

Now, we are ready to prove Proposition~\ref{thm:1}.

\begin{proof}[Proof of Proposition~\ref{thm:1}]
We recall that
\begin{equation}\label{tilde:x:definition}
\mathrm{d}\tilde{\mathbf{x}}_{t}=\left[f(T-t)\tilde{\mathbf{x}}_{t}+\frac{1}{2}(g(T-t))^{2}\nabla\log p_{T-t}(\tilde{\mathbf{x}}_{t})\right]\mathrm{d}t,
\end{equation}
with the initial distribution $\tilde{\mathbf{x}}_{0}\sim p_{T}$ and
\begin{equation*}
\mathrm{d}\mathbf{y}_{t}=\left[f(T-t)\mathbf{y}_{t}+\frac{1}{2}(g(T-t))^{2}\nabla\log p_{T-t}(\mathbf{y}_{t})\right]\mathrm{d}t,
\end{equation*}
with the initial distribution $\mathbf{y}_{0}\sim\hat{p}_{T}$.

It is proved in \cite{gao2023wasserstein} that
$\log p_{T-t}(\mathbf{x})$
is $a(T-t)$-strongly-concave, where
\begin{equation}\label{eq:aT-t}
a(T-t):=\frac{1}{\frac{1}{m_{0}}e^{-2\int_{0}^{T-t}f(s)\mathrm{d}s}+\int_{0}^{T-t}e^{-2\int_{s}^{T-t}f(v)\mathrm{d}v}(g(s))^{2}\mathrm{d}s}.
\end{equation}

Next, let us recall from \eqref{eq:mt} the definition of $m(T-t)$:
\begin{equation}\label{eq:mT-t}
m(T-t):=(g(T-t))^{2}a(T-t)-2f(T-t),\qquad 0\leq t\leq T,
\end{equation}
where $a(T-t)$ is defined in \eqref{eq:aT-t}.
We can compute that
\begin{align*}
&\mathrm{d}\left(\Vert\tilde{\mathbf{x}}_{t}-\mathbf{y}_{t}\Vert^{2}e^{\int_{0}^{t}m(T-s)\mathrm{d}s}\right)
\nonumber
\\
&=m(T-t)e^{\int_{0}^{t}m(T-s)\mathrm{d}s}\Vert\tilde{\mathbf{x}}_{t}-\mathbf{y}_{t}\Vert^{2}\mathrm{d}t
+2e^{\int_{0}^{t}m(T-s)\mathrm{d}s}\langle\tilde{\mathbf{x}}_{t}-\mathbf{y}_{t},d\tilde{\mathbf{x}}_{t}-d\mathbf{y}_{t}\rangle 
\nonumber
\\
&=m(T-t)e^{\int_{0}^{t}m(T-s)\mathrm{d}s}\Vert\tilde{\mathbf{x}}_{t}-\mathbf{y}_{t}\Vert^{2}\mathrm{d}t
+2e^{\int_{0}^{t}m(T-s)\mathrm{d}s}\langle\tilde{\mathbf{x}}_{t}-\mathbf{y}_{t},f(T-t)(\tilde{\mathbf{x}}_{t}-\mathbf{y}_{t})\rangle \mathrm{d}t
\nonumber
\\
&\qquad
+2e^{\int_{0}^{t}m(T-s)\mathrm{d}s}\left\langle\tilde{\mathbf{x}}_{t}-\mathbf{y}_{t},\frac{1}{2}(g(T-t))^{2}\left(\nabla\log p_{T-t}(\tilde{\mathbf{x}}_{t})-\nabla\log p_{T-t}(\mathbf{y}_{t})\right)\right\rangle \mathrm{d}t
\nonumber
\\
&\leq
e^{\int_{0}^{t}m(T-s)\mathrm{d}s}\left(m(T-t)+2f(T-t)-(g(T-t))^{2}a(T-t)\right)
\Vert\tilde{\mathbf{x}}_{t}-\mathbf{y}_{t}\Vert^{2}\mathrm{d}t
\nonumber
\\
&=0.
\end{align*}
This implies that
\begin{equation}\label{L:2:constract}
\Vert\tilde{\mathbf{x}}_{t}-\mathbf{y}_{t}\Vert^{2}e^{\int_{0}^{t}m(T-s)\mathrm{d}s}
\leq
\Vert\tilde{\mathbf{x}}_{0}-\mathbf{y}_{0}\Vert^{2},
\end{equation}
so that
\begin{equation}\label{eq:int-mt}
\mathbb{E}\Vert\tilde{\mathbf{x}}_{T}-\mathbf{y}_{T}\Vert^{2}
\leq e^{-\int_{0}^{T}m(T-s)\mathrm{d}s}\mathbb{E}\Vert\tilde{\mathbf{x}}_{0}-\mathbf{y}_{0}\Vert^{2}.
\end{equation}
Consider a coupling of $(\tilde{\mathbf{x}}_{0},\mathbf{y}_{0})$ such that $\tilde{\mathbf{x}}_{0}\sim p_{T}$, $\mathbf{y}_{0}\sim\hat{p}_{T}$
and $\mathbb{E}\Vert\tilde{\mathbf{x}}_{0}-\mathbf{y}_{0}\Vert^{2}=\mathcal{W}_{2}^{2}(p_{T},\hat{p}_{T})$.

Next, we recall from Lemma~\ref{lem:0} that
\begin{equation}\label{eq:phatp} 
\mathcal{W}_{2}(p_{T},\hat{p}_{T})
\leq
e^{-\int_{0}^{T}f(s)\mathrm{d}s}\Vert\mathbf{x}_{0}\Vert_{L_{2}}.
\end{equation}

By combining \eqref{eq:int-mt} with \eqref{eq:phatp}, we conclude that
\begin{align*}
\mathcal{W}_{2}^{2}(\mathcal{L}(\mathbf{y}_{T}),p_{0})
&=\mathcal{W}_{2}(\mathcal{L}(\mathbf{y}_{T}), \mathcal{L}(\tilde{\mathbf{x}}_{T}))
\leq\mathbb{E}\Vert\tilde{\mathbf{x}}_{T}-\mathbf{y}_{T}\Vert^{2}
\nonumber
\\
&\leq e^{-\int_{0}^{T}m(T-s)\mathrm{d}s}\mathcal{W}_{2}^{2}(p_{T},\hat{p}_{T})
\nonumber
\\
&\leq e^{-\int_{0}^{T}m(s)\mathrm{d}s}e^{-2\int_{0}^{T}f(s)\mathrm{d}s}\Vert\mathbf{x}_{0}\Vert_{L_{2}}^{2}
=e^{-2\int_{0}^{T}\mu(t)\mathrm{d}t}\Vert\mathbf{x}_{0}\Vert_{L_{2}}^{2},
\end{align*}
where
\begin{equation*}
\mu(t)=f(t) + \frac{m(t)}{2} =\frac{m_{0}(g(t))^{2}}{2\left(e^{-2\int_{0}^{t}f(s)\mathrm{d}s}+m_{0}\int_{0}^{t}e^{-2\int_{s}^{t}f(v)\mathrm{d}v}(g(s))^{2}\mathrm{d}s\right)},
\end{equation*}
and we have used \eqref{eq:mT-t}. The proof is complete. 
\end{proof}

\subsubsection{Proof of Proposition~\ref{prop:iterates}}

We first state a key technical lemma, which will be used in the proof of Proposition~\ref{prop:iterates}. 

\begin{lemma}[\cite{gao2023wasserstein}]\label{lem:smooth}
Suppose that Assumption~\ref{assump:p0} holds.
Then, $\nabla_{\mathbf{x}}\log p_{T-t}(\mathbf{x})$ is $L(T-t)$-Lipschitz in $\mathbf{x}$, where  $L(T-t)$ is given in \eqref{eq:Lt}.
\end{lemma}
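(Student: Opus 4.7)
The plan is to exploit the explicit solution \eqref{SDE:solution} of the forward SDE, which says that $\mathbf{x}_{T-t}$ is distributed as $a_{T-t}\mathbf{x}_0 + \sigma_{T-t} Z$, where I set $a_{T-t}:=e^{-\int_0^{T-t}f(s)ds}$, $\sigma_{T-t}^2:=\int_0^{T-t}e^{-2\int_s^{T-t}f(v)dv}(g(s))^2 ds$, and $Z\sim\mathcal{N}(0,I)$ is independent of $\mathbf{x}_0$. Thus $p_{T-t}$ is the convolution of the pushforward of $p_0$ under the linear map $x\mapsto a_{T-t}x$ with an isotropic Gaussian of variance $\sigma_{T-t}^2 I$. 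The strategy is to establish two separate upper bounds on $-\nabla^2\log p_{T-t}$ in the Loewner order and then take the minimum.

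For the first bound $1/\sigma_{T-t}^2$, I would apply Tweedie's identity to the Gaussian convolution, namely $\mathbb{E}[a_{T-t}\mathbf{x}_0\mid \mathbf{x}_{T-t}=\mathbf{x}]=\mathbf{x}+\sigma_{T-t}^2\nabla\log p_{T-t}(\mathbf{x})$. Differentiating in $\mathbf{x}$ and using the standard identity $\nabla_{\mathbf{x}}\mathbb{E}[a_{T-t}\mathbf{x}_0\mid \mathbf{x}_{T-t}=\mathbf{x}]=\sigma_{T-t}^{-2}\,\mathrm{Cov}(a_{T-t}\mathbf{x}_0\mid \mathbf{x}_{T-t}=\mathbf{x})$ yields $I+\sigma_{T-t}^2\nabla^2\log p_{T-t}(\mathbf{x})\succeq 0$, whence $-\nabla^2\log p_{T-t}(\mathbf{x})\preceq\sigma_{T-t}^{-2}\,I$.

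For the second bound $L_0/a_{T-t}^2=e^{2\int_0^{T-t}f(s)ds}L_0$, I would first note that the pushforward density $p_{a_{T-t}\mathbf{x}_0}$ satisfies $-\nabla^2\log p_{a_{T-t}\mathbf{x}_0}\preceq (L_0/a_{T-t}^2)I$ by the change-of-variables formula applied to the $L_0$-smoothness in Assumption~\ref{assump:p0}. Next, I would show that convolution with a Gaussian does not worsen this bound by combining the same Tweedie identity with a Brascamp--Lieb-type variance lower bound applied to the conditional law $p(\mathbf{x}_0\mid \mathbf{x}_{T-t}=\mathbf{x})\propto p_0(\mathbf{x}_0)\exp(-\|\mathbf{x}-a_{T-t}\mathbf{x}_0\|^2/(2\sigma_{T-t}^2))$. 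Its negative log-Hessian in $\mathbf{x}_0$ is at most $(L_0+a_{T-t}^2/\sigma_{T-t}^2)I$, so Brascamp--Lieb gives $\mathrm{Cov}(a_{T-t}\mathbf{x}_0\mid \mathbf{x}_{T-t}=\mathbf{x})\succeq a_{T-t}^2(L_0+a_{T-t}^2/\sigma_{T-t}^2)^{-1}I$. Substituting this into the identity from the first step shows $-\nabla^2\log p_{T-t}(\mathbf{x})\preceq L_0/(L_0\sigma_{T-t}^2+a_{T-t}^2)\,I$, which is in particular bounded above by $L_0/a_{T-t}^2\,I$.

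Taking the minimum of the two upper bounds yields exactly $L(T-t)$ as defined in \eqref{eq:Lt}, and since the Lipschitz constant of $\nabla\log p_{T-t}$ equals the operator-norm bound on $-\nabla^2\log p_{T-t}$, the lemma follows. The main obstacle is verifying the second bound cleanly: one must track both the Gaussian smoothing effect and the propagation of the initial smoothness through a conditional-covariance argument, and the Brascamp--Lieb/Tweedie combination above is the cleanest way to do this without losing constants. The first bound, by contrast, is essentially a one-line consequence of Gaussian convolution.
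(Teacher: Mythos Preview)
The paper does not prove this lemma; it is quoted verbatim from \citet{gao2023wasserstein} and invoked as a black box. Your proposal, by contrast, supplies an actual argument, and it is correct. The Tweedie identity gives $\sigma_{T-t}^2\nabla^2\log p_{T-t}(\mathbf{x})+I=\sigma_{T-t}^{-2}\mathrm{Cov}(a_{T-t}\mathbf{x}_0\mid\mathbf{x}_{T-t}=\mathbf{x})$, from which the first bound is immediate (covariance is PSD), and the second follows from the Cram\'er--Rao-type lower bound you sketch: if the conditional potential has Hessian $\preceq M I$ then the conditional covariance is $\succeq M^{-1}I$. That last fact is a one-line consequence of the identity $\mathbb{E}[(\partial_u V)^2]=\mathbb{E}[\partial_u^2 V]\le M$ together with $\mathrm{Cov}(u\cdot X,\partial_u V)=1$ and Cauchy--Schwarz, so calling it ``Brascamp--Lieb'' is a slight misnomer but the content is right.

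One small point you glide over: the Lipschitz constant of $\nabla\log p_{T-t}$ is the operator norm of $\nabla^2\log p_{T-t}$, not just an upper bound on $-\nabla^2\log p_{T-t}$. You therefore need $-\nabla^2\log p_{T-t}\succeq 0$ as well. This holds because $p_{T-t}$ is log-concave (Assumption~\ref{assump:p0} gives log-concavity of $p_0$, linear pushforward and Gaussian convolution preserve it by Pr\'ekopa--Leindler), and in fact the paper already invokes the stronger statement from \citet{gao2023wasserstein} that $p_{T-t}$ is $a(T-t)$-strongly log-concave. Adding a sentence to this effect would make the argument complete.
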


\begin{proof}[Proof of Proposition~\ref{prop:iterates}]
First, we recall that for any $(k-1)\eta\leq t\leq k\eta$,
\begin{align*}
&\mathbf{y}_{t}=\mathbf{y}_{(k-1)\eta}+\int_{(k-1)\eta}^{t}\left[f(T-s)\mathbf{y}_{s}+\frac{1}{2}(g(T-s))^{2}\nabla\log p_{T-s}(\mathbf{y}_{s})\right]\mathrm{d}s,
\\
&\hat{\mathbf{u}}_{t}=\hat{\mathbf{u}}_{(k-1)\eta}+\int_{(k-1)\eta}^{t}\left[f(T-s)\hat{\mathbf{u}}_{s}+\frac{1}{2}(g(T-s))^{2}\boldsymbol{s}_{\theta}\left(\hat{\mathbf{u}}_{(k-1)\eta},T-(k-1)\eta\right)\right]\mathrm{d}s,
\end{align*}
which implies that
\begin{align*}
&\mathbf{y}_{k\eta}=e^{\int_{(k-1)\eta}^{k\eta}f(T-t)\mathrm{d}t}\mathbf{y}_{(k-1)\eta}
+\frac{1}{2}\int_{(k-1)\eta}^{k\eta}e^{\int_{t}^{k\eta}f(T-s)\mathrm{d}s}(g(T-t))^{2}\nabla\log p_{T-t}(\mathbf{y}_{t})\mathrm{d}t,
\\
&\hat{\mathbf{u}}_{k\eta}=e^{\int_{(k-1)\eta}^{k\eta}f(T-t)\mathrm{d}t}\hat{\mathbf{u}}_{(k-1)\eta}
\\
&\qquad\qquad
+\frac{1}{2}\int_{(k-1)\eta}^{k\eta}e^{\int_{t}^{k\eta}f(T-s)\mathrm{d}s}(g(T-t))^{2}\boldsymbol{s}_{\theta}\left(\hat{\mathbf{u}}_{(k-1)\eta},T-(k-1)\eta\right)\mathrm{d}t.
\end{align*}

It follows that
\begin{align*}
&\mathbf{y}_{k\eta}-\hat{\mathbf{u}}_{k\eta}
\nonumber
\\
&=e^{\int_{(k-1)\eta}^{k\eta}f(T-t)\mathrm{d}t}\left(\mathbf{y}_{(k-1)\eta}-\hat{\mathbf{u}}_{(k-1)\eta}\right)
\nonumber
\\
&\qquad\qquad
+\int_{(k-1)\eta}^{k\eta}\frac{1}{2}e^{\int_{t}^{k\eta}f(T-s)\mathrm{d}s}(g(T-t))^{2}
\\
&\qquad\qquad\qquad\qquad\qquad\qquad\cdot\left(\nabla\log p_{T-t}(\mathbf{y}_{(k-1)\eta})-\nabla\log p_{T-t}\left(\hat{\mathbf{u}}_{(k-1)\eta}\right)\right)\mathrm{d}t
\nonumber
\\
&\quad
+\int_{(k-1)\eta}^{k\eta}\frac{1}{2}e^{\int_{t}^{k\eta}f(T-s)\mathrm{d}s}(g(T-t))^{2}\left(\nabla\log p_{T-t}(\mathbf{y}_{t})-\nabla\log p_{T-t}(\mathbf{y}_{(k-1)\eta})\right)\mathrm{d}t
\nonumber
\\
&\qquad
+\int_{(k-1)\eta}^{k\eta}\frac{1}{2}e^{\int_{t}^{k\eta}f(T-s)\mathrm{d}s}(g(T-t))^{2}
\\
&\qquad\qquad\qquad\qquad\qquad\qquad\cdot\left(\nabla\log p_{T-t}\left(\hat{\mathbf{u}}_{(k-1)\eta}\right)-\boldsymbol{s}_{\theta}\left(\hat{\mathbf{u}}_{(k-1)\eta},T-(k-1)\eta\right)\right)\mathrm{d}t.
\end{align*}
This implies that
\begin{align}
&\left\Vert\mathbf{y}_{k\eta}-\hat{\mathbf{u}}_{k\eta}\right\Vert_{L_{2}}
\nonumber
\\
&\leq\Bigg\Vert e^{\int_{(k-1)\eta}^{k\eta}f(T-t)\mathrm{d}t}\left(\mathbf{y}_{(k-1)\eta}-\hat{\mathbf{u}}_{(k-1)\eta}\right)
\nonumber
\\
&\qquad
+\int_{(k-1)\eta}^{k\eta}\frac{1}{2}e^{\int_{t}^{k\eta}f(T-s)\mathrm{d}s}(g(T-t))^{2}\left(\nabla\log p_{T-t}(\mathbf{y}_{(k-1)\eta})-\nabla\log p_{T-t}\left(\hat{\mathbf{u}}_{(k-1)\eta}\right)\right)\mathrm{d}t\Bigg\Vert_{L_{2}}
\nonumber
\\
&\quad
+\Bigg\Vert\int_{(k-1)\eta}^{k\eta}\frac{1}{2}e^{\int_{t}^{k\eta}f(T-s)\mathrm{d}s}(g(T-t))^{2}\left(\nabla\log p_{T-t}(\mathbf{y}_{t})-\nabla\log p_{T-t}(\mathbf{y}_{(k-1)\eta})\right)\mathrm{d}t\Bigg\Vert_{L_{2}}
\nonumber
\\
&\qquad\quad
+\Bigg\Vert\int_{(k-1)\eta}^{k\eta}\frac{1}{2}e^{\int_{t}^{k\eta}f(T-s)\mathrm{d}s}(g(T-t))^{2}
\nonumber
\\
&\qquad\qquad\qquad\qquad\qquad\qquad\cdot\left(\nabla\log p_{T-t}\left(\hat{\mathbf{u}}_{(k-1)\eta}\right)-\boldsymbol{s}_{\theta}\left(\hat{\mathbf{u}}_{(k-1)\eta},T-(k-1)\eta\right)\right)\mathrm{d}t\Bigg\Vert_{L_{2}}.\label{two:terms}
\end{align}
Next, we provide upper bounds for the three terms in \eqref{two:terms}.

\textbf{Bounding  the first term in \eqref{two:terms}.} 
We can compute that
\begin{align*}
&\Bigg\Vert e^{\int_{(k-1)\eta}^{k\eta}f(T-t)\mathrm{d}t}\left(\mathbf{y}_{(k-1)\eta}-\hat{\mathbf{u}}_{(k-1)\eta}\right)
\nonumber
\\
&\qquad
+\int_{(k-1)\eta}^{k\eta}\frac{1}{2}e^{\int_{t}^{k\eta}f(T-s)\mathrm{d}s}(g(T-t))^{2}\left(\nabla\log p_{T-t}(\mathbf{y}_{(k-1)\eta})-\nabla\log p_{T-t}\left(\hat{\mathbf{u}}_{(k-1)\eta}\right)\right)\mathrm{d}t\Bigg\Vert^{2}
\\
&=e^{2\int_{(k-1)\eta}^{k\eta}f(T-t)\mathrm{d}t}\left\Vert\mathbf{y}_{(k-1)\eta}-\hat{\mathbf{u}}_{(k-1)\eta}\right\Vert^{2}
\nonumber
\\
&\qquad
+\Bigg\Vert\int_{(k-1)\eta}^{k\eta}\frac{1}{2}e^{\int_{t}^{k\eta}f(T-s)\mathrm{d}s}(g(T-t))^{2}\left(\nabla\log p_{T-t}(\mathbf{y}_{(k-1)\eta})-\nabla\log p_{T-t}\left(\hat{\mathbf{u}}_{(k-1)\eta}\right)\right)\mathrm{d}t\Bigg\Vert^{2}
\\
&+2\int_{(k-1)\eta}^{k\eta}\Bigg\langle e^{\int_{(k-1)\eta}^{k\eta}f(T-t)\mathrm{d}t}\left(\mathbf{y}_{(k-1)\eta}-\hat{\mathbf{u}}_{(k-1)\eta}\right), 
\\
&\qquad\qquad\qquad
\frac{1}{2}e^{\int_{t}^{k\eta}f(T-s)\mathrm{d}s}(g(T-t))^{2}\left(\nabla\log p_{T-t}(\mathbf{y}_{(k-1)\eta})-\nabla\log p_{T-t}\left(\hat{\mathbf{u}}_{(k-1)\eta}\right)\right)\Bigg\rangle \mathrm{d}t.
\end{align*}
We know from \cite{gao2023wasserstein} that $\log p_{T-t}(\mathbf{x})$
is $a(T-t)$-strongly-concave, where $a(T-t)$ is given in \eqref{eq:aT-t}. Hence
we have
\begin{align*}
&\Bigg\Vert e^{\int_{(k-1)\eta}^{k\eta}f(T-t)\mathrm{d}t}\left(\mathbf{y}_{(k-1)\eta}-\hat{\mathbf{u}}_{(k-1)\eta}\right)
\nonumber
\\
&\qquad
+\int_{(k-1)\eta}^{k\eta}\frac{1}{2}e^{\int_{t}^{k\eta}f(T-s)\mathrm{d}s}(g(T-t))^{2}\left(\nabla\log p_{T-t}(\mathbf{y}_{(k-1)\eta})-\nabla\log p_{T-t}\left(\hat{\mathbf{u}}_{(k-1)\eta}\right)\right)\mathrm{d}t\Bigg\Vert^{2}
\\
&\leq\left(1-\int_{(k-1)\eta}^{k\eta}m_{k}(T-t)\mathrm{d}t\right)
e^{2\int_{(k-1)\eta}^{k\eta}f(T-t)\mathrm{d}t}
\left\Vert\mathbf{y}_{(k-1)\eta}-\hat{\mathbf{u}}_{(k-1)\eta}\right\Vert^{2}
\nonumber
\\
&\qquad\qquad
+\Bigg(\int_{(k-1)\eta}^{k\eta}\frac{1}{2}e^{\int_{t}^{k\eta}f(T-s)\mathrm{d}s}(g(T-t))^{2}L(T-t)\left\Vert\mathbf{y}_{(k-1)\eta}-\hat{\mathbf{u}}_{(k-1)\eta}\right\Vert \mathrm{d}t\Bigg)^{2}
\\
&\leq
\left(1-\int_{(k-1)\eta}^{k\eta}m_{k}(T-t)\mathrm{d}t+\frac{\eta}{2}\int_{(k-1)\eta}^{k\eta}(g(T-t))^{4}(L(T-t))^{2}\mathrm{d}t\right)
\\
&\qquad\qquad\cdot
e^{2\int_{(k-1)\eta}^{k\eta}f(T-t)\mathrm{d}t}\left\Vert\mathbf{y}_{(k-1)\eta}-\hat{\mathbf{u}}_{(k-1)\eta}\right\Vert^{2},
\end{align*}
where we applied Cauchy-Schwartz inequality and Lemma~\ref{lem:smooth}, and $m_{k}(T-t)$ is defined as:
\begin{equation}
m_{k}(T-t):=e^{-\int_{(k-1)\eta}^{t}f(T-s)\mathrm{d}s}(g(T-t))^{2}a(T-t),\qquad (k-1)\eta\leq t\leq k\eta,
\end{equation}
for every $k=1,2,\ldots,K$.
Hence, we conclude that
\begin{align}\label{first:term}
&\Bigg\Vert e^{\int_{(k-1)\eta}^{k\eta}f(T-t)\mathrm{d}t}\left(\mathbf{y}_{(k-1)\eta}-\hat{\mathbf{u}}_{(k-1)\eta}\right)
\nonumber
\\
&\quad
+\int_{(k-1)\eta}^{k\eta}\frac{1}{2}e^{\int_{t}^{k\eta}f(T-s)\mathrm{d}s}(g(T-t))^{2}\left(\nabla\log p_{T-t}\left(\mathbf{y}_{(k-1)\eta}\right)-\nabla\log p_{T-t}\left(\hat{\mathbf{u}}_{(k-1)\eta}\right)\right)\mathrm{d}t\Bigg\Vert_{L_{2}}
\nonumber \\
&\leq
\left(1-\int_{(k-1)\eta}^{k\eta}\delta_{k}(T-t)\mathrm{d}t\right)
e^{\int_{(k-1)\eta}^{k\eta}f(T-t)\mathrm{d}t}\left\Vert\mathbf{y}_{(k-1)\eta}-\hat{\mathbf{u}}_{(k-1)\eta}\right\Vert_{L_{2}},
\end{align}
where we used the inequality $\sqrt{1-x}\leq 1-\frac{x}{2}$ for any $0\leq x\leq 1$
and the definition of $\delta_{k}(T-t)$ in \eqref{mu:definition}
which can be rewritten as
\begin{equation*}
\delta_{k}(T-t):=\frac{1}{2}e^{-\int_{(k-1)\eta}^{t}f(T-s)\mathrm{d}s}(g(T-t))^{2}a(T-t)-\frac{\eta}{4}(g(T-t))^{4}(L(T-t))^{2},\quad (k-1)\eta\leq t\leq k\eta,
\end{equation*}
where $a(T-t)$ is given in \eqref{eq:aT-t}.

\textbf{Bounding  the second term in \eqref{two:terms}.} 
Using Lemma~\ref{lem:smooth}, we can compute that
\begin{align*}
&\left\Vert\int_{(k-1)\eta}^{k\eta}\frac{1}{2}e^{\int_{t}^{k\eta}f(T-s)\mathrm{d}s}(g(T-t))^{2}\left(\nabla\log p_{T-t}(\mathbf{y}_{t})-\nabla\log p_{T-t}(\mathbf{y}_{(k-1)\eta})\right)\mathrm{d}t\right\Vert^{2}
\\
&\leq
\left(\int_{(k-1)\eta}^{k\eta}\frac{1}{2}e^{\int_{t}^{k\eta}f(T-s)\mathrm{d}s}(g(T-t))^{2}L(T-t)\Vert\mathbf{y}_{t}-\mathbf{y}_{(k-1)\eta}\Vert \mathrm{d}t\right)^{2}
\\
&\leq
\eta\int_{(k-1)\eta}^{k\eta}\left[\frac{1}{2}e^{\int_{t}^{k\eta}f(T-s)\mathrm{d}s}(g(T-t))^{2}L(T-t)\right]^{2}\Vert\mathbf{y}_{t}-\mathbf{y}_{(k-1)\eta}\Vert^{2}\mathrm{d}t,
\end{align*}
which implies that 
\begin{align}\label{eq:2ndtermUB}
&\left\Vert\int_{(k-1)\eta}^{k\eta}\frac{1}{2}e^{\int_{t}^{k\eta}f(T-s)\mathrm{d}s}(g(T-t))^{2}\left(\nabla\log p_{T-t}(\mathbf{y}_{t})-\nabla\log p_{T-t}\left(\mathbf{y}_{(k-1)\eta}\right)\right)\mathrm{d}t\right\Vert_{L_{2}}
\nonumber \\
&\leq\left(\mathbb{E}\left[\eta\int_{(k-1)\eta}^{k\eta}\left[\frac{1}{2}e^{\int_{t}^{k\eta}f(T-s)\mathrm{d}s}(g(T-t))^{2}L(T-t)\right]^{2}\Vert\mathbf{y}_{t}-\mathbf{y}_{(k-1)\eta}\Vert^{2}\mathrm{d}t\right]\right)^{1/2}
\nonumber
\\
&\leq
\left(\eta\int_{(k-1)\eta}^{k\eta}\left[\frac{1}{2}e^{\int_{t}^{k\eta}f(T-s)\mathrm{d}s}(g(T-t))^{2}L(T-t)\right]^{2}\mathrm{d}t
\cdot\sup_{(k-1)\eta\leq t\leq k\eta}\mathbb{E}\Vert\mathbf{y}_{t}-\mathbf{y}_{(k-1)\eta}\Vert^{2}
\right)^{1/2}
\nonumber
\\
&=
\sqrt{\eta}\left(\int_{(k-1)\eta}^{k\eta}\left[\frac{1}{2}e^{\int_{t}^{k\eta}f(T-s)\mathrm{d}s}(g(T-t))^{2}L(T-t)\right]^{2}\mathrm{d}t\right)^{1/2}
\sup_{(k-1)\eta\leq t\leq k\eta}\left\Vert\mathbf{y}_{t}-\mathbf{y}_{(k-1)\eta}\right\Vert_{L_{2}}.
\end{align}

\textbf{Bounding the third term in \eqref{two:terms}.}  
We notice that
\begin{align*}
&\left\Vert\int_{(k-1)\eta}^{k\eta}\frac{1}{2}e^{\int_{t}^{k\eta}f(T-s)\mathrm{d}s}(g(T-t))^{2}\left(\nabla\log p_{T-t}(\hat{\mathbf{u}}_{(k-1)\eta})-\boldsymbol{s}_{\theta}\left(\hat{\mathbf{u}}_{(k-1)\eta},T-(k-1)\eta\right)\right)\mathrm{d}t\right\Vert_{L_{2}}
\\
&\leq
\left\Vert\int_{(k-1)\eta}^{k\eta}\frac{1}{2}e^{\int_{t}^{k\eta}f(T-s)\mathrm{d}s}(g(T-t))^{2}\left(\nabla\log p_{T-(k-1)\eta}(\hat{\mathbf{u}}_{(k-1)\eta})-\boldsymbol{s}_{\theta}\left(\hat{\mathbf{u}}_{(k-1)\eta},T-(k-1)\eta\right)\right)\mathrm{d}t\right\Vert_{L_{2}}
\\
&\qquad
+\left\Vert\int_{(k-1)\eta}^{k\eta}\frac{1}{2}e^{\int_{t}^{k\eta}f(T-s)\mathrm{d}s}(g(T-t))^{2}\left(\nabla\log p_{T-t}\left(\hat{\mathbf{u}}_{(k-1)\eta}\right)-\nabla\log p_{T-(k-1)\eta}\left(\hat{\mathbf{u}}_{(k-1)\eta}\right)\right)\mathrm{d}t\right\Vert_{L_{2}}.
\end{align*}
By Assumption~\ref{assump:M}, we have  
\begin{align}
&\left\Vert\int_{(k-1)\eta}^{k\eta}\frac{1}{2}e^{\int_{t}^{k\eta}f(T-s)\mathrm{d}s}(g(T-t))^{2}\left(\nabla\log p_{T-(k-1)\eta}(\hat{\mathbf{u}}_{(k-1)\eta})-\boldsymbol{s}_{\theta}\left(\hat{\mathbf{u}}_{(k-1)\eta},T-(k-1)\eta\right)\right)\mathrm{d}t\right\Vert_{L_{2}}
\nonumber
\\
&\leq\frac{M}{2}\int_{(k-1)\eta}^{k\eta}e^{\int_{t}^{k\eta}f(T-s)\mathrm{d}s}(g(T-t))^{2}\mathrm{d}t.\label{assump:can:be:Weakened}
\end{align}
Moreover, by Assumption~\ref{assump:M:1}, we have
\begin{align}
&\left\Vert\int_{(k-1)\eta}^{k\eta}\frac{1}{2}e^{\int_{t}^{k\eta}f(T-s)\mathrm{d}s}(g(T-t))^{2}\left(\nabla\log p_{T-t}\left(\hat{\mathbf{u}}_{(k-1)\eta}\right)-\nabla\log p_{T-(k-1)\eta}\left(\hat{\mathbf{u}}_{(k-1)\eta}\right)\right)\mathrm{d}t\right\Vert_{L_{2}}
\nonumber
\\
&\leq
\int_{(k-1)\eta}^{k\eta}\frac{1}{2}e^{\int_{t}^{k\eta}f(T-s)\mathrm{d}s}(g(T-t))^{2}\left\Vert\nabla\log p_{T-t}\left(\hat{\mathbf{u}}_{(k-1)\eta}\right)-\nabla\log p_{T-(k-1)\eta}\left(\hat{\mathbf{u}}_{(k-1)\eta}\right)\right\Vert_{L_{2}}\mathrm{d}t
\nonumber
\\
&\leq
\int_{(k-1)\eta}^{k\eta}\frac{1}{2}e^{\int_{t}^{k\eta}f(T-s)\mathrm{d}s}(g(T-t))^{2}L_{1}\eta\left(1+\left\Vert\hat{\mathbf{u}}_{(k-1)\eta}\right\Vert_{L_{2}}\right)\mathrm{d}t
\nonumber
\\
&\leq
\frac{L_{1}}{2}\eta\left(1+\left\Vert\mathbf{y}_{(k-1)\eta}-\hat{\mathbf{u}}_{(k-1)\eta}\right\Vert_{L_{2}}
+\Vert\mathbf{y}_{(k-1)\eta}\Vert_{L_{2}}\right)
\int_{(k-1)\eta}^{k\eta}e^{\int_{t}^{k\eta}f(T-s)\mathrm{d}s}(g(T-t))^{2}\mathrm{d}t.\label{by:apply:1}
\end{align}
Furthermore, we can compute that
\begin{equation}\label{by:apply:2}
\left\Vert\mathbf{y}_{(k-1)\eta}\right\Vert_{L_{2}}
\leq
\left\Vert\mathbf{y}_{(k-1)\eta}-\tilde{\mathbf{x}}_{(k-1)\eta}\right\Vert_{L_{2}}
+\left\Vert\tilde{\mathbf{x}}_{(k-1)\eta}\right\Vert_{L_{2}},
\end{equation}
where $\tilde{\mathbf{x}}_{t}$ is defined in \eqref{eq:ODEReverse}. 
Moreover, by \eqref{L:2:constract} in the proof of Proposition~\ref{thm:1}, we have
\begin{equation}\label{by:apply:3}
\left\Vert\mathbf{y}_{(k-1)\eta}-\tilde{\mathbf{x}}_{(k-1)\eta}\right\Vert_{L_{2}}
\leq
\left(\mathbb{E}\Vert\tilde{\mathbf{x}}_{0}-\mathbf{y}_{0}\Vert^{2}\right)^{1/2}
=e^{-\int_{0}^{T}f(s)\mathrm{d}s}\Vert\mathbf{x}_{0}\Vert_{L_{2}}
\leq
\Vert\mathbf{x}_{0}\Vert_{L_{2}},
\end{equation}
where we applied \eqref{SDE:solution} to obtain
the equality in the above equation.
Moreover, since $\tilde{\mathbf{x}}_{t}=\mathbf{x}_{T-t}$ in distribution
for any $t\in[0,T]$, we have
\begin{align}
\left\Vert\tilde{\mathbf{x}}_{(k-1)\eta}\right\Vert_{L_{2}}
=\left\Vert\mathbf{x}_{T-(k-1)\eta}\right\Vert_{L_{2}}
\leq
\sup_{0\leq t\leq T}\Vert\mathbf{x}_{t}\Vert_{L_{2}}=:\omega(T).\label{by:apply:4}
\end{align}

Next, let us show that $\omega(T)$ can be computed as given by the formula in \eqref{c:2:defn}.
By equation~\eqref{SDE:solution}, we have
\begin{align*}
\mathrm{d}\left(\Vert\mathbf{x}_{t}\Vert^{2}e^{2\int_{0}^{t}f(s)\mathrm{d}s}\right)
&=2f(t)\Vert\mathbf{x}_{t}\Vert^{2}e^{2\int_{0}^{t}f(s)\mathrm{d}s}\mathrm{d}t
\\
&\qquad\qquad
+2e^{2\int_{0}^{t}f(s)\mathrm{d}s}\langle\mathbf{x}_{t},d\mathbf{x}_{t}\rangle
+e^{2\int_{0}^{t}f(s)\mathrm{d}s}\cdot d\cdot (g(t))^{2}\mathrm{d}t.
\end{align*}
By taking expectations, we obtain
\begin{align*}
\mathrm{d}\left(\mathbb{E}\Vert\mathbf{x}_{t}\Vert^{2}e^{2\int_{0}^{t}f(s)\mathrm{d}s}\right)
=e^{2\int_{0}^{t}f(s)\mathrm{d}s}\cdot d\cdot (g(t))^{2}\mathrm{d}t,
\end{align*}
which implies that
\begin{align}\label{x:t:L:2:estimate}
\mathbb{E}\Vert\mathbf{x}_{t}\Vert^{2}
&=e^{-2\int_{0}^{t}f(s)\mathrm{d}s}\mathbb{E}\Vert\mathbf{x}_{0}\Vert^{2}
+d\int_{0}^{t}e^{-2\int_{s}^{t}f(v)\mathrm{d}v}(g(s))^{2}\mathrm{d}s.
\end{align}
Therefore, we conclude that
\begin{align}\label{c:2:source}
\omega(T)=\sup_{0\leq t\leq T}\Vert\mathbf{x}_{t}\Vert_{L_{2}}
=\sup_{0\leq t\leq T}\left(e^{-2\int_{0}^{t}f(s)\mathrm{d}s}\Vert\mathbf{x}_{0}\Vert_{L_{2}}^{2}
+d\int_{0}^{t}e^{-2\int_{s}^{t}f(v)\mathrm{d}v}(g(s))^{2}\mathrm{d}s\right)^{1/2}.
\end{align}

Therefore, by applying \eqref{by:apply:1}, \eqref{by:apply:2}, \eqref{by:apply:3} and \eqref{by:apply:4}, we have
\begin{align*}
&\left\Vert\int_{(k-1)\eta}^{k\eta}\frac{1}{2}e^{\int_{t}^{k\eta}f(T-s)\mathrm{d}s}(g(T-t))^{2}\left(\nabla\log p_{T-t}\left(\hat{\mathbf{u}}_{(k-1)\eta}\right)-\nabla\log p_{T-(k-1)\eta}\left(\hat{\mathbf{u}}_{(k-1)\eta}\right)\right)\mathrm{d}t\right\Vert_{L_{2}}
\\
&\leq  
\frac{L_{1}}{2}\eta\left\Vert\mathbf{y}_{(k-1)\eta}-\hat{\mathbf{u}}_{(k-1)\eta}\right\Vert_{L_{2}}
\int_{(k-1)\eta}^{k\eta}e^{\int_{t}^{k\eta}f(T-s)\mathrm{d}s}(g(T-t))^{2}\mathrm{d}t
\\
&\qquad\qquad\qquad
+\frac{L_{1}}{2}\eta\left(1+\Vert\mathbf{x}_{0}\Vert_{L_{2}}
+\omega(T)\right)\int_{(k-1)\eta}^{k\eta}e^{\int_{t}^{k\eta}f(T-s)\mathrm{d}s}(g(T-t))^{2}\mathrm{d}t.
\end{align*}
It follows that the third term in \eqref{two:terms} is upper bounded by
\begin{align}\label{eq:3rdterm}
&\frac{L_{1}}{2}\eta\left\Vert\mathbf{y}_{(k-1)\eta}-\hat{\mathbf{u}}_{(k-1)\eta}\right\Vert_{L_{2}}
\int_{(k-1)\eta}^{k\eta}e^{\int_{t}^{k\eta}f(T-s)\mathrm{d}s}(g(T-t))^{2}\mathrm{d}t
\nonumber \\
&\qquad
+\frac{L_{1}}{2}\eta\left(1+\Vert\mathbf{x}_{0}\Vert_{L_{2}}
+\omega(T)\right)\int_{(k-1)\eta}^{k\eta}e^{\int_{t}^{k\eta}f(T-s)\mathrm{d}s}(g(T-t))^{2}\mathrm{d}t 
\nonumber
\\
&\qquad\qquad
+ \frac{M}{2}\int_{(k-1)\eta}^{k\eta}e^{\int_{t}^{k\eta}f(T-s)\mathrm{d}s}(g(T-t))^{2}\mathrm{d}t.
\end{align}


\textbf{Bounding \eqref{two:terms}.}
On combining \eqref{first:term}, \eqref{eq:2ndtermUB} and \eqref{eq:3rdterm}, we conclude that
\begin{align} \label{eq:final2t}
\left\Vert\mathbf{y}_{k\eta}-\hat{\mathbf{u}}_{k\eta}\right\Vert_{L_{2}}^{2}
&\leq\Bigg\{\left(1-\int_{(k-1)\eta}^{k\eta}\delta_{k}(T-t)\mathrm{d}t+\frac{L_{1}}{2}\eta\int_{(k-1)\eta}^{k\eta}(g(T-t))^{2}\mathrm{d}t\right)
\nonumber
\\
&\qquad\qquad\qquad\qquad\qquad\qquad
\cdot e^{\int_{(k-1)\eta}^{k\eta}f(T-t)\mathrm{d}t}
\left\Vert\mathbf{y}_{(k-1)\eta}-\hat{\mathbf{u}}_{(k-1)\eta}\right\Vert_{L_{2}}
\nonumber
\\
&\quad
+\frac{L_{1}}{2}\eta\left(1+\Vert\mathbf{x}_{0}\Vert_{L_{2}}
+\omega(T)\right)\int_{(k-1)\eta}^{k\eta}e^{\int_{t}^{k\eta}f(T-s)\mathrm{d}s}(g(T-t))^{2}\mathrm{d}t
\nonumber
\\
&\qquad\qquad
+\frac{M}{2}\int_{(k-1)\eta}^{k\eta}e^{\int_{t}^{k\eta}f(T-s)\mathrm{d}s}(g(T-t))^{2}\mathrm{d}t
\nonumber
\\
&\qquad
+\sqrt{\eta}\left(\int_{(k-1)\eta}^{k\eta}\left[\frac{1}{2}e^{\int_{t}^{k\eta}f(T-s)\mathrm{d}s}(g(T-t))^{2}L(T-t)\right]^{2}\mathrm{d}t\right)^{1/2}
\nonumber
\\
&\qquad\qquad\qquad\qquad\qquad\qquad\qquad\qquad\cdot
\sup_{(k-1)\eta\leq t\leq k\eta}\Vert\mathbf{y}_{t}-\mathbf{y}_{(k-1)\eta}\Vert_{L_{2}}\Bigg\}^{2}.
\end{align}
We need one more result, which provides an upper bound
for $\sup_{(k-1)\eta\leq t\leq k\eta}\Vert\mathbf{y}_{t}-\mathbf{y}_{(k-1)\eta}\Vert_{L_{2}}$. The proof of Lemma~\ref{lem:second:term} is given in Appendix~\ref{app:2ndterm}.

\begin{lemma}\label{lem:second:term}
For any $k=1,2,\ldots,K$,
\begin{align*}
&\sup_{(k-1)\eta\leq t\leq k\eta}\left\Vert\mathbf{y}_{t}-\mathbf{y}_{(k-1)\eta}\right\Vert_{L_{2}}
\\
&\leq\left(\theta(T)+\omega(T)\right)\int_{(k-1)\eta}^{k\eta}\left[f(T-s)+\frac{1}{2}(g(T-s))^{2}L(T-s)\right]\mathrm{d}s
\\
&\qquad
+\left(L_{1}T+\Vert\nabla\log p_{0}(\mathbf{0})\Vert\right)\int_{(k-1)\eta}^{k\eta}\frac{1}{2}(g(T-s))^{2}\mathrm{d}s.
\end{align*}
where we recall from \eqref{c:1:defn}-\eqref{c:2:defn} that
\begin{align*}
&\theta(T):=\sup_{0\leq t\leq T}e^{-\frac{1}{2}\int_{0}^{t}m(T-s)\mathrm{d}s}e^{-\int_{0}^{T}f(s)\mathrm{d}s} \Vert\mathbf{x}_{0}\Vert_{L_{2}} ,
\\
&\omega(T):=\sup_{0\leq t\leq T}\left(e^{-2\int_{0}^{t}f(s)\mathrm{d}s} \Vert\mathbf{x}_{0}\Vert_{L_{2}}^2 
+d\int_{0}^{t}e^{-2\int_{s}^{t}f(v)\mathrm{d}v}(g(s))^{2}\mathrm{d}s\right)^{1/2}.
\end{align*}
\end{lemma}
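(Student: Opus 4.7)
The plan is to represent the increment $\mathbf{y}_t - \mathbf{y}_{(k-1)\eta}$ as the time integral of the ODE's right-hand side, decompose the integrand into a linear drift piece and a score piece, and bound each after centering the score at $\mathbf{0}$. Concretely, I would start from
\begin{equation*}
\mathbf{y}_t - \mathbf{y}_{(k-1)\eta} = \int_{(k-1)\eta}^t f(T-s)\,\mathbf{y}_s\,ds + \int_{(k-1)\eta}^t \tfrac{1}{2}(g(T-s))^2\,\nabla\log p_{T-s}(\mathbf{y}_s)\,ds,
\end{equation*}
apply the triangle inequality in norm, and use Lemma~\ref{lem:smooth} to write $\|\nabla\log p_{T-s}(\mathbf{y}_s)\| \le \|\nabla\log p_{T-s}(\mathbf{0})\| + L(T-s)\|\mathbf{y}_s\|$. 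This exposes three scalar pieces: a contribution proportional to $f(T-s)\|\mathbf{y}_s\|$, a contribution proportional to $\tfrac12(g(T-s))^2 L(T-s)\|\mathbf{y}_s\|$, and a deterministic remainder weighted by $\tfrac12(g(T-s))^2\|\nabla\log p_{T-s}(\mathbf{0})\|$.

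Next I would control the two scalar quantities $\|\nabla\log p_{T-s}(\mathbf{0})\|$ and $\|\mathbf{y}_s\|_{L_2}$ appearing in these integrals. For the former, I chain Assumption~\ref{assump:M:1} evaluated at $\mathbf{x}=\mathbf{0}$ across the time grid: for $s\in[(j-1)\eta, j\eta]$, telescoping from $T-s$ through the grid points $T-(j-1)\eta, T-j\eta,\dots, T-K\eta=0$ and bounding each of the at most $K+1$ segments by $L_1\eta$ gives $\|\nabla\log p_{T-s}(\mathbf{0})-\nabla\log p_0(\mathbf{0})\| \le L_1 T$, so $\|\nabla\log p_{T-s}(\mathbf{0})\| \le \|\nabla\log p_0(\mathbf{0})\| + L_1 T$. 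For the latter, I use the triangle inequality against $\tilde{\mathbf{x}}_s$ from \eqref{tilde:x:definition}: the contraction estimate \eqref{L:2:constract} established inside the proof of Proposition~\ref{thm:1} yields $\|\tilde{\mathbf{x}}_s-\mathbf{y}_s\|_{L_2} \le e^{-\frac{1}{2}\int_0^s m(T-u)du}\,e^{-\int_0^T f(u)du}\|\mathbf{x}_0\|_{L_2} \le \theta(T)$, while $\tilde{\mathbf{x}}_s$ has the same law as $\mathbf{x}_{T-s}$ so \eqref{c:2:source} gives $\|\tilde{\mathbf{x}}_s\|_{L_2} \le \omega(T)$. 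Hence $\|\mathbf{y}_s\|_{L_2} \le \theta(T)+\omega(T)$ uniformly in $s\in[0,T]$.

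Finally, I would take $L_2$ norms on both sides of the pointwise bound from step one, apply Minkowski's inequality to pull the norm inside the integrals, substitute the estimates from step two, and note that since every integrand is non-negative the supremum over $t\in[(k-1)\eta, k\eta]$ is attained at $t=k\eta$. Collecting the three resulting integrals in the right order reproduces exactly the stated bound. I expect the main obstacle to be the telescoping in the second step: Assumption~\ref{assump:M:1} is a purely local-in-time Lipschitz bound on intervals of length $\eta$, so some careful bookkeeping is needed to convert it into a global $O(L_1 T)$ control at $\mathbf{x}=\mathbf{0}$ without picking up an extraneous multiplicative factor of $K$.
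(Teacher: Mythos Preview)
Your proposal is correct and lands on exactly the stated bound, but it organizes the estimate a little differently from the paper. The paper first writes
\[
\mathbf{y}_t-\mathbf{y}_{(k-1)\eta}=\bigl(\tilde{\mathbf{x}}_t-\tilde{\mathbf{x}}_{(k-1)\eta}\bigr)+\int_{(k-1)\eta}^{t}\Bigl[f(T-s)(\mathbf{y}_s-\tilde{\mathbf{x}}_s)+\tfrac12(g(T-s))^2\bigl(\nabla\log p_{T-s}(\mathbf{y}_s)-\nabla\log p_{T-s}(\tilde{\mathbf{x}}_s)\bigr)\Bigr]ds,
\]
and then bounds the two pieces separately: the correction integral is controlled via the contraction estimate \eqref{L:2:constract}, yielding the factor $\theta(T)$, while the increment $\tilde{\mathbf{x}}_t-\tilde{\mathbf{x}}_{(k-1)\eta}$ is bounded directly from its ODE using $\Vert\tilde{\mathbf{x}}_s\Vert_{L_2}\le\omega(T)$ together with the same centering of the score at $\mathbf{0}$ that you use. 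You instead bound $\Vert\mathbf{y}_s\Vert_{L_2}\le\theta(T)+\omega(T)$ in one stroke and plug this into the integral representation of $\mathbf{y}_t-\mathbf{y}_{(k-1)\eta}$. Both routes use precisely the same four ingredients (Lemma~\ref{lem:smooth}, Assumption~\ref{assump:M:1} at $\mathbf{x}=\mathbf{0}$, the contraction \eqref{L:2:constract}, and $\Vert\tilde{\mathbf{x}}_s\Vert_{L_2}\le\omega(T)$) and produce the identical right-hand side; yours is simply a bit more direct because it skips the detour through the $\tilde{\mathbf{x}}$-increment.

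Your concern about the telescoping is not an obstacle: each application of Assumption~\ref{assump:M:1} at $\mathbf{x}=\mathbf{0}$ contributes $L_1\eta$, and chaining across at most $K$ grid steps (plus one fractional step) gives $\le L_1K\eta=L_1T$, which is exactly the bound the paper records. No extra factor of $K$ appears because each of the $K$ jumps already carries its own factor of $\eta$.
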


By applying Lemma~\ref{lem:second:term}, 
we conclude from \eqref{eq:final2t} that
\begin{align*}
&\left\Vert\mathbf{y}_{k\eta}-\hat{\mathbf{u}}_{k\eta}\right\Vert_{L_{2}}^{2}
\nonumber
\\
&\leq\Bigg\{\left(1-\int_{(k-1)\eta}^{k\eta}\delta_{k}(T-t)\mathrm{d}t+\frac{L_{1}}{2}\eta\int_{(k-1)\eta}^{k\eta}(g(T-t))^{2}\mathrm{d}t\right)
\\
&\qquad\qquad\cdot
e^{\int_{(k-1)\eta}^{k\eta}f(T-t)\mathrm{d}t}\left\Vert\mathbf{y}_{(k-1)\eta}-\hat{\mathbf{u}}_{(k-1)\eta}\right\Vert_{L_{2}}
\nonumber
\\
&\qquad
+\left(\frac{L_{1}}{2}\eta\left(1+ \Vert\mathbf{x}_{0}\Vert_{L_{2}} 
+\omega(T)\right)
+\frac{M}{2}\right)\int_{(k-1)\eta}^{k\eta}e^{\int_{t}^{k\eta}f(T-s)\mathrm{d}s}(g(T-t))^{2}\mathrm{d}t
\nonumber
\\
&\qquad\qquad
+\sqrt{\eta}\nu_{k,\eta}\left(\int_{(k-1)\eta}^{k\eta}\left[\frac{1}{2}e^{\int_{t}^{k\eta}f(T-s)\mathrm{d}s}(g(T-t))^{2}L(T-t)\right]^{2}\mathrm{d}t\right)^{1/2}\Bigg\}^{2},
\end{align*}
where $\nu_{k,\eta}$ is defined in \eqref{h:k:eta:main}. The proof of Proposition~\ref{prop:iterates} is hence complete.
\end{proof}


\subsection{Proof of Proposition~\ref{prop:lower:bound}}

\begin{proof}[Proof of Proposition~\ref{prop:lower:bound}]
First of all, we have
\begin{align*}
&\text{RHS of \eqref{main:thm:upper:bound}}
\nonumber
\\
&\geq 
e^{-\int_{0}^{K\eta}\mu(t)\mathrm{d}t} \Vert\mathbf{x}_{0}\Vert_{L_{2}} 
+\sum_{k=1}^{K}
\prod_{j=k+1}^{K}\left(1-\int_{(j-1)\eta}^{j\eta}\delta_{j}(T-t)\mathrm{d}t\right)\nonumber
\\
&\quad\qquad\qquad\qquad\qquad\qquad\qquad\qquad
\cdot\sqrt{\eta}\nu_{k,\eta}\left(\int_{(k-1)\eta}^{k\eta}\left[\frac{1}{2}(g(T-t))^{2}L(T-t)\right]^{2}\mathrm{d}t\right)^{1/2}
\nonumber
\\
&\geq
e^{-\int_{0}^{K\eta}\mu(t)\mathrm{d}t} \Vert\mathbf{x}_{0}\Vert_{L_{2}} 
+\sum_{k=1}^{K}\left(1-\eta\max_{1\leq j\leq K}\max_{(j-1)\eta\leq t\leq j\eta}\delta_{j}(T-t)\right)^{K-k}
\nonumber
\\
&\qquad\qquad\qquad\qquad\qquad\qquad\qquad\qquad\qquad
\cdot\eta \nu_{k,\eta}\min_{0\leq t\leq T}\left(\frac{1}{2}(g(t))^{2}L(t)\right)
\nonumber
\\
&\geq
e^{-\int_{0}^{K\eta}\mu(t)\mathrm{d}t} \Vert\mathbf{x}_{0}\Vert_{L_{2}} 
+\sum_{k=1}^{K}\left(1-\max_{0\leq t\leq T}\mu(T-t)\right)^{K-k}
\cdot\eta \nu_{k,\eta}\min_{0\leq t\leq T}\left(\frac{1}{2}(g(t))^{2}L(t)\right)
\nonumber
\\
&=e^{-\int_{0}^{K\eta}\mu(t)\mathrm{d}t}\Vert\mathbf{x}_{0}\Vert_{L_{2}} 
+\frac{1-\left(1-\eta\max_{0\leq t\leq T}\mu(t)\right)^{K}}{\max_{0\leq t\leq T}\mu(t)}
\cdot \nu_{k,\eta}\min_{0\leq t\leq T}\left(\frac{1}{2}(g(t))^{2}L(t)\right)
\nonumber
\\
&\geq
e^{-\int_{0}^{K\eta}\mu(t)\mathrm{d}t} \Vert\mathbf{x}_{0}\Vert_{L_{2}} 
+\frac{1-e^{-K\eta\max_{0\leq t\leq T}\mu(t)}}{\max_{0\leq t\leq T}\mu(t)}
\cdot \nu_{k,\eta}\min_{0\leq t\leq T}\left(\frac{1}{2}(g(t))^{2}L(t)\right),
\end{align*}
where $\mu(t)$ is given in \eqref{c:t:defn},  $\delta_{j}(T-t)$ is defined in \eqref{mu:definition}
and the equality above is due to the formula for the finite sum of a geometric series 
and we used the inequality that $1-x\leq e^{-x}$ for any $0\leq x\leq 1$ to obtain the last inequality above.

Next, by the definition of $\nu_{k,\eta}$ in \eqref{h:k:eta:main}, we have
\begin{equation*}
\nu_{k,\eta}\geq
\eta\sqrt{d}\left(\int_{0}^{T}e^{-2\int_{s}^{T}f(v)\mathrm{d}v}(g(s))^{2}\mathrm{d}s\right)^{1/2}\min_{0\leq t\leq T}(g(t))^{2}L(t).
\end{equation*}

Therefore, in order for $\text{RHS of \eqref{main:thm:upper:bound}}\leq\epsilon$, 
we must have
\begin{equation*}
e^{-\int_{0}^{K\eta}\mu(t)\mathrm{d}t}\Vert\mathbf{x}_{0}\Vert_{L_{2}} \leq\epsilon,
\end{equation*}
which implies that $K\eta\rightarrow\infty$ as $\epsilon\rightarrow 0$ and in particular 
\begin{equation}\label{T:lower:bound}
T=K\eta=\Omega(1), 
\end{equation}
(since under our assumptions on $f$ and $g$, $\mu(t)$ which is defined in \eqref{c:t:defn} is positive and continuous so that $\int_{0}^{t}\mu(s)\mathrm{d}s$ is finite for any $t\in(0,\infty)$ and strictly increasing
from $0$ to $\infty$ as $t$ increases from $0$ to $\infty$), 
and we also need
\begin{equation}\label{eqn:implies:that}
\frac{1-e^{-K\eta\max_{0\leq t\leq T}\mu(t)}}{\max_{0\leq t\leq T}\mu(t)}
\cdot
\eta\sqrt{d}\left(\int_{0}^{T}e^{-2\int_{s}^{T}f(v)\mathrm{d}v}(g(s))^{2}\mathrm{d}s\right)^{1/2}
\cdot\frac{1}{2}\left(\min_{0\leq t\leq T}(g(t))^{2}L(t)\right)^{2}
\leq\epsilon.
\end{equation}
Note that $\max_{0\leq t\leq T}\mu(t)=\max_{0\leq t\leq K\eta}\mu(t)$.
Together with $e^{-\int_{0}^{K\eta}\mu(t)\mathrm{d}t}=\mathcal{O}(\epsilon/\sqrt{d})$ (since $\Vert\mathbf{x}_{0}\Vert_{L_{2}}\leq\mathcal{O}(\sqrt{d})$ from \eqref{eq:L2-x0})
and the assumption that $\max_{0\leq s\leq t}\mu(s)\leq c_{1}\left(\int_{0}^{t}\mu(s)\mathrm{d}s\right)^{\rho}+c_{2}$
uniformly in $t$ for some $c_{1},c_{2},\rho>0$, 
it is easy to see that $\max_{0\leq t\leq K\eta}\mu(t)=\mathcal{O}\left(\left(\log\left(\sqrt{d}/\epsilon\right)\right)^{\rho}\right)$.
Moreover, since we assumed $\min_{t\geq 0}(g(t))^{2}L(t)>0$, we have $\mu(t)>0$ for any $t$.
Together with $T=K\eta = \Omega(1)$ from \eqref{T:lower:bound}, we have $\max_{0\leq t\leq T}\mu(t)\geq\Omega(1)$.
Since $K\eta\rightarrow\infty$ as $\epsilon\rightarrow 0$, we have $1-e^{-K\eta\max_{0\leq t\leq T}\mu(t)}=\Omega(1)$.
Therefore, it follows from \eqref{eqn:implies:that} that $\eta = \widetilde{\mathcal{O}}\left(\frac{\epsilon}{\sqrt{d}}\right)$,
where $\widetilde{\mathcal{O}}$ ignores the logarithmic dependence on $\epsilon$ and $d$
and we used the assumption $\min_{t\geq 0}(g(t))^{2}L(t)>0$ and $\liminf_{T\rightarrow\infty}\int_{0}^{T}e^{-2\int_{s}^{T}f(v)\mathrm{d}v}(g(s))^{2}\mathrm{d}s>0$.
Hence, we conclude that we have the following lower bound for the complexity: $K=\widetilde{\Omega}\left(\frac{\sqrt{d}}{\epsilon}\right)$, 
where $\widetilde{\Omega}$ ignores the logarithmic dependence on $\epsilon$ and $d$.
This completes the proof.
\end{proof}

\section{Additional Technical Proofs}\label{sec:additional:proofs}

\subsection{Proof of Lemma~\ref{lem:second:term}}\label{app:2ndterm}
\begin{proof}[Proof of Lemma~\ref{lem:second:term}]
We can compute that for any $(k-1)\eta\leq t\leq k\eta$,
\begin{align*}
\mathbf{y}_{t}-\mathbf{y}_{(k-1)\eta}
=\int_{(k-1)\eta}^{t}\left[f(T-s)\mathbf{y}_{s}+\frac{1}{2}(g(T-s))^{2}\nabla\log p_{T-s}(\mathbf{y}_{s})\right]\mathrm{d}s,
\end{align*}
and moreover
\begin{align}\label{tilde:x:discretization}
\tilde{\mathbf{x}}_{t}-\tilde{\mathbf{x}}_{(k-1)\eta}
=\int_{(k-1)\eta}^{t}\left[f(T-s)\tilde{\mathbf{x}}_{s}+\frac{1}{2}(g(T-s))^{2}\nabla\log p_{T-s}(\tilde{\mathbf{x}}_{s})\right]\mathrm{d}s,
\end{align}
so that
\begin{align*}
&\mathbf{y}_{t}-\mathbf{y}_{(k-1)\eta}
\\
&=\tilde{\mathbf{x}}_{t}-\tilde{\mathbf{x}}_{(k-1)\eta}
\\
&\qquad
+\int_{(k-1)\eta}^{t}\left[f(T-s)(\mathbf{y}_{s}-\tilde{\mathbf{x}}_{s})+\frac{1}{2}(g(T-s))^{2}\left(\nabla\log p_{T-s}(\mathbf{y}_{s})-\nabla\log p_{T-s}(\tilde{\mathbf{x}}_{s})\right)\right]\mathrm{d}s,
\end{align*}
and therefore
\begin{align*}
&\left\Vert\mathbf{y}_{t}-\mathbf{y}_{(k-1)\eta}\right\Vert_{L_{2}}
\\
&\leq\left\Vert\tilde{\mathbf{x}}_{t}-\tilde{\mathbf{x}}_{(k-1)\eta}\right\Vert_{L_{2}}
+\int_{(k-1)\eta}^{t}\left[f(T-s)+\frac{1}{2}(g(T-s))^{2}L(T-s)\right]\Vert\mathbf{y}_{s}-\tilde{\mathbf{x}}_{s}\Vert_{L_{2}}\mathrm{d}s.
\end{align*}
We obtained in the proof of Proposition~\ref{thm:1} that
\begin{equation*}
\Vert\mathbf{y}_{t}-\tilde{\mathbf{x}}_{t}\Vert_{L_{2}}
\leq e^{-\frac{1}{2}\int_{0}^{t}m(T-s)\mathrm{d}s}\Vert\mathbf{y}_{0}-\tilde{\mathbf{x}}_{0}\Vert_{L_{2}},
\end{equation*}
and moreover, from the proof of Proposition~\ref{thm:1}, we have $\Vert\mathbf{y}_{0}-\tilde{\mathbf{x}}_{0}\Vert_{L_{2}}\leq e^{-\int_{0}^{T}f(s)\mathrm{d}s}\Vert\mathbf{x}_{0}\Vert_{L_{2}}$
so that
\begin{align*}
\Vert\mathbf{y}_{t}-\tilde{\mathbf{x}}_{t}\Vert_{L_{2}}
\leq e^{-\frac{1}{2}\int_{0}^{t}m(T-s)\mathrm{d}s}e^{-\int_{0}^{T}f(s)\mathrm{d}s}\Vert\mathbf{x}_{0}\Vert_{L_{2}}.
\end{align*}
Therefore, we have
\begin{align} \label{eq:c1Tsource}
&\left\Vert\mathbf{y}_{t}-\mathbf{y}_{(k-1)\eta}\right\Vert_{L_{2}}
\nonumber \\
&\leq\left\Vert\tilde{\mathbf{x}}_{t}-\tilde{\mathbf{x}}_{(k-1)\eta}\right\Vert_{L_{2}}
+\theta(T)\int_{(k-1)\eta}^{k\eta}\left[f(T-s)+\frac{1}{2}(g(T-s))^{2}L(T-s)\right]\mathrm{d}s,
\end{align}
where $\theta(T)$ bounds $ \sup_{0 \le t \le T}\Vert\mathbf{y}_{t}-\tilde{\mathbf{x}}_{t}\Vert_{L_{2}}$ and it is given in \eqref{c:1:defn}. 

Next, it follows from \eqref{tilde:x:discretization} that
\begin{align}
\left\Vert\tilde{\mathbf{x}}_{t}-\tilde{\mathbf{x}}_{(k-1)\eta}\right\Vert_{L_{2}}
\leq
\int_{(k-1)\eta}^{t}\left[f(T-s)\Vert\tilde{\mathbf{x}}_{s}\Vert_{L_{2}}
+\frac{1}{2}(g(T-s))^{2}\Vert\nabla\log p_{T-s}(\tilde{\mathbf{x}}_{s})\Vert_{L_{2}}\right]\mathrm{d}s,
\end{align}
and we have
\begin{align}
\Vert\nabla\log p_{T-s}(\tilde{\mathbf{x}}_{s})\Vert_{L_{2}}
&
\leq
\left\Vert\nabla\log p_{T-s}(\tilde{\mathbf{x}}_{s})-\nabla\log p_{T-s}(\mathbf{0})\right\Vert_{L_{2}}
+\left\Vert\nabla\log p_{T-s}(\mathbf{0})\right\Vert_{L_{2}}
\nonumber
\\
&\leq L(T-s)\left\Vert\tilde{\mathbf{x}}_{s}\right\Vert_{L_{2}}
+\left\Vert\nabla\log p_{T-s}(\mathbf{0})\right\Vert
\nonumber
\\
&\leq L(T-s)\left\Vert\tilde{\mathbf{x}}_{s}\right\Vert_{L_{2}}+L_{1}T+\Vert\nabla\log p_{0}(\mathbf{0})\Vert,
\end{align}
where we applied Assumption~\ref{assump:M:1} to obtain the last inequality above.

Therefore, we have
\begin{align}
\left\Vert\tilde{\mathbf{x}}_{t}-\tilde{\mathbf{x}}_{(k-1)\eta}\right\Vert_{L_{2}}
&\leq
\omega(T)\int_{(k-1)\eta}^{t}\left[f(T-s)
+\frac{1}{2}(g(T-s))^{2}L(t-s)\right]\mathrm{d}s
\nonumber
\\
&\qquad\qquad
+\left(L_{1}T+\Vert\nabla\log p_{0}(\mathbf{0})\Vert\right)\int_{(k-1)\eta}^{t}\frac{1}{2}(g(T-s))^{2}\mathrm{d}s,
\end{align}
where we recall that
$\tilde{\mathbf{x}}_{s}$ has the same distribution
as $\mathbf{x}_{T-s}$
and we also recall from \eqref{c:2:source} that $\omega(T)=\sup_{0\leq t\leq T}\Vert\mathbf{x}_{t}\Vert_{L_{2}}$
with an explicit formula given in \eqref{c:2:defn} (see the derivation that leads
to \eqref{c:2:source} in the proof of Proposition~\ref{prop:iterates})
Hence, we conclude that uniformly for $(k-1)\eta\leq t\leq k\eta$,
\begin{align*}
\left\Vert\mathbf{y}_{t}-\mathbf{y}_{(k-1)\eta}\right\Vert_{L_{2}}
&\leq\left(\theta(T)+\omega(T)\right)\int_{(k-1)\eta}^{k\eta}\left[f(T-s)+\frac{1}{2}(g(T-s))^{2}L(T-s)\right]\mathrm{d}s
\\
&\qquad
+\left(L_{1}T+\Vert\nabla\log p_{0}(\mathbf{0})\Vert\right)\int_{(k-1)\eta}^{k\eta}\frac{1}{2}(g(T-s))^{2}\mathrm{d}s.
\end{align*}
This completes the proof.
\end{proof}



\section{Derivation of Results in Section~\ref{sec:examples}}\label{appendix:examples}

In this section, we prove the results that are summarized in Table~\ref{table:summary:complexity} in Section~\ref{sec:examples}. We discuss variance exploding SDEs in Appendix~\ref{app:VESDE}, variance preserving SDEs in Appendix~\ref{app:VPSDE}.

\subsection{Variance-Exploding SDEs} \label{app:VESDE}

In this section, we consider variance-exploding SDEs 
with $f(t)\equiv 0$ in the forward process \eqref{OU:SDE}. We can immediately obtain the following corollary of Theorem~\ref{thm:discrete:2}.

\begin{corollary}\label{cor:VE:4}
Assume that Assumptions~\ref{assump:p0}, \ref{assump:M:1}, and \ref{assump:M} hold
and $\eta\leq\bar{\eta}$, where $\bar{\eta}>0$ is defined in \eqref{bar:eta}.
Then, we have
\begin{align}
&\mathcal{W}_{2}(\mathcal{L}(\mathbf{u}_{K}),p_{0})
\\
&\leq e^{-\int_{0}^{K\eta}\mu(t)\mathrm{d}t} \Vert\mathbf{x}_{0}\Vert_{L_{2}} 
\nonumber
\\
&\qquad
+\sum_{k=1}^{K}\prod_{j=k+1}^{K}\left(1-\int_{(j-1)\eta}^{j\eta}\delta(T-t)\mathrm{d}t+\frac{L_{1}}{2}\eta\int_{(j-1)\eta}^{j\eta}(g(T-t))^{2}\mathrm{d}t\right)\nonumber
\\
&\quad
\cdot\Bigg(
\frac{L_{1}}{2}\eta\left(1+2  \Vert\mathbf{x}_{0}\Vert_{L_{2}} 
+\sqrt{d}\left(\int_{0}^{T}(g(t))^{2}\mathrm{d}t\right)^{1/2}\right)\int_{(k-1)\eta}^{k\eta}(g(T-t))^{2}\mathrm{d}t
\nonumber
\\
&\qquad
+\frac{M}{2}\int_{(k-1)\eta}^{k\eta}(g(T-t))^{2}\mathrm{d}t+\sqrt{\eta}\nu_{k,\eta}\left(\int_{(k-1)\eta}^{k\eta}\frac{1}{4}(g(T-t))^{4}(L(T-t))^{2}\mathrm{d}t\right)^{1/2}\Bigg),
\end{align}
where for any $0\leq t\leq T$:
\begin{align}\label{mu:definition:f:0}
\delta(T-t):=\frac{\frac{1}{2}(g(T-t))^{2}}{\frac{1}{m_{0}}+\int_{0}^{T-t}(g(s))^{2}\mathrm{d}s}
-\frac{\eta}{4}(g(T-t))^{4}(L(T-t))^{2},
\end{align}
where $L(T-t):=\min\left(\left(\int_{0}^{T-t}(g(s))^{2}\mathrm{d}s\right)^{-1},L_{0}\right)$ and
\begin{align}
\nu_{k,\eta}&:=\left(2\Vert\mathbf{x}_{0}\Vert_{L_{2}}+\sqrt{d}\left(\int_{0}^{T}(g(t))^{2}\mathrm{d}t\right)^{1/2}\right)\int_{(k-1)\eta}^{k\eta}
\frac{1}{2}(g(T-s))^{2}L(T-s)\mathrm{d}s
\nonumber
\\
&\qquad\qquad\qquad
+\left(L_{1}T+\Vert\nabla\log p_{0}(\mathbf{0})\Vert\right)\int_{(k-1)\eta}^{k\eta}\frac{1}{2}(g(T-s))^{2}\mathrm{d}s,\label{h:k:eta:main:f:0}
\end{align}
where $\mu(t)$ is defined as:
\begin{equation}\label{c:t:f:0}
\mu(t):=\frac{\frac{1}{2}(g(t))^{2}}{\frac{1}{m_{0}}+\int_{0}^{t}(g(s))^{2}\mathrm{d}s}.
\end{equation}
\end{corollary}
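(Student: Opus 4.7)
The plan is to derive Corollary~\ref{cor:VE:4} by direct substitution of $f(t) \equiv 0$ into Theorem~\ref{thm:discrete:2} and simplifying each of the quantities $\mu(t)$, $L(t)$, $\delta_j(T-t)$, $\phi_{k,\eta}$, $\psi_{k,\eta}$, $\gamma_{j,\eta}$, $\omega(T)$, $\theta(T)$ and $\nu_{k,\eta}$ defined in Appendix~\ref{sec:key:quantities}. Since all of these quantities are built out of integrals of $f$ through exponential factors of the form $e^{\int f}$, $f \equiv 0$ causes every such factor to collapse to $1$, which makes the bookkeeping straightforward.

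First I would verify the simplifications one by one. Setting $f \equiv 0$ in \eqref{c:t:defn} immediately gives the formula \eqref{c:t:f:0} for $\mu(t)$; setting $f \equiv 0$ in \eqref{eq:Lt} gives $L(t) = \min\{(\int_0^t (g(s))^2 ds)^{-1}, L_0\}$; and setting $f \equiv 0$ in \eqref{mu:definition} removes the $k$-dependence of $\delta_k(T-t)$ entirely (since the factor $e^{-\int_{(k-1)\eta}^{t} f(T-s)ds}$ becomes $1$), yielding the $\delta(T-t)$ in \eqref{mu:definition:f:0}. Similarly, the $e^{\int_t^{k\eta} f(T-s) ds}$ and $e^{\int_{(k-1)\eta}^{k\eta} f(T-t) dt}$ inside $\phi_{k,\eta}$, $\psi_{k,\eta}$, $\gamma_{j,\eta}$, and inside the error expression \eqref{eq:error-disc} collapse to $1$, which produces the outer product and inner integrals in the statement of the corollary.

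Next I would handle the two quantities in which the appropriate simplification is not an equality but a clean upper bound. From \eqref{c:2:defn} with $f \equiv 0$,
\begin{equation*}
\omega(T) \;=\; \sup_{0 \le t \le T} \left( \Vert \mathbf{x}_0 \Vert_{L_2}^2 + d \int_0^t (g(s))^2 ds \right)^{1/2} \;\le\; \Vert \mathbf{x}_0 \Vert_{L_2} + \sqrt{d} \left( \int_0^T (g(t))^2 dt \right)^{1/2},
\end{equation*}
using $\sqrt{a^2 + b^2} \le a + b$ for nonnegative $a,b$. This produces the factor $1 + 2\Vert \mathbf{x}_0 \Vert_{L_2} + \sqrt{d} (\int_0^T (g(t))^2 dt)^{1/2}$ appearing in the discretization term of the corollary (after combining with the additional $\Vert\mathbf{x}_0\Vert_{L_2}$ in \eqref{eq:error-disc}). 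For $\theta(T)$ defined in \eqref{c:1:defn}, substituting $f \equiv 0$ and observing from \eqref{eq:mt} that $m(t) = 2\mu(t) \ge 0$ gives $\theta(T) \le \Vert\mathbf{x}_0\Vert_{L_2}$. Plugging these bounds into the formula \eqref{h:k:eta:main} for $\nu_{k,\eta}$ (whose first line loses the $f(T-s)$ contribution entirely) and using $\theta(T) + \omega(T) \le 2 \Vert\mathbf{x}_0\Vert_{L_2} + \sqrt{d}(\int_0^T (g(t))^2 dt)^{1/2}$ yields the bound for $\nu_{k,\eta}$ stated in \eqref{h:k:eta:main:f:0}.

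No genuine obstacle is anticipated: the argument is purely mechanical substitution and one application of the triangle-type inequality $\sqrt{a^2+b^2}\le a+b$. The only mild care is in the ordering: one must first upper bound $\omega(T)$ and $\theta(T)$ by quantities not involving further suprema, since these then appear inside $\nu_{k,\eta}$ and inside the discretization factor multiplying $L_1 \eta / 2$ in \eqref{eq:error-disc}, and the resulting corollary is then stated as an upper bound (not an equality) consistent with replacing $\omega(T)$ by its explicit bound.
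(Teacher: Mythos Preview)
Your proposal is correct and matches the paper's approach: the paper presents Corollary~\ref{cor:VE:4} as an immediate specialization of Theorem~\ref{thm:discrete:2} to the case $f\equiv 0$, and your itemized substitution of $f\equiv 0$ into each of $\mu(t)$, $L(t)$, $\delta_{k}(T-t)$, $\phi_{k,\eta}$, $\psi_{k,\eta}$, $\gamma_{j,\eta}$, $\omega(T)$, $\theta(T)$ and $\nu_{k,\eta}$, together with the bound $\sqrt{a^{2}+b^{2}}\le a+b$ for $\omega(T)$ and the observation $m(t)=2\mu(t)\ge 0$ for $\theta(T)$, is exactly what is needed. One minor remark: with $f\equiv 0$ the supremum in $\theta(T)$ is attained at $t=0$, so in fact $\theta(T)=\Vert\mathbf{x}_{0}\Vert_{L_{2}}$ exactly, not merely $\le$; this does not affect the argument.
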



The term $\Vert\mathbf{x}_{0}\Vert_{L_{2}}$ in Corollary~\ref{cor:VE:4}
has square root dependence on the dimension $d$.
Indeed, by Lemma~11 in \cite{distMCMC}, we have
\begin{align}\label{eq:L2-x0}
\Vert\mathbf{x}_{0}\Vert_{L_{2}}\leq\sqrt{2d/m_{0}}+\Vert\mathbf{x}_{\ast}\Vert,
\end{align}
where $\mathbf{x}_{\ast}$ is the unique minimizer of $-\log p_{0}$.

In the next few sections, we consider special functions $g$ in Corollary~\ref{cor:VE:4} and
derive the corresponding results in Table~\ref{table:summary:complexity}.

\subsubsection{Example: $f(t)\equiv 0$ and $g(t)=ae^{bt}$}
When $g(t)=ae^{bt}$ for some $a,b>0$, 
we can obtain the following result from Corollary~\ref{cor:VE:4}.

\begin{corollary}[Restatement of Corollary~\ref{cor:VE:5:main:paper}]\label{cor:VE:5}
Let $g(t)=ae^{bt}$ for some $a,b>0$.
Then, we have $\mathcal{W}_{2}(\mathcal{L}(\mathbf{u}_{K}),p_{0})\leq\mathcal{O}(\epsilon)$
after $K=\mathcal{O}\left(\frac{d^{3/2}\log(d/\epsilon)}{\epsilon^{3}}\right)$ iterations
provided that $M\leq\frac{\epsilon^{2}}{\sqrt{d}}$ and
$\eta\leq\frac{\epsilon^{3}}{d^{3/2}}$.
\end{corollary}

\begin{proof}
Let $g(t)=ae^{bt}$ for some $a,b>0$.
First, we can compute that
\begin{align*}
(g(t))^{2}L(t)
=\min\left(\frac{(g(t))^{2}}{\int_{0}^{t}(g(s))^{2}\mathrm{d}s},L_{0}(g(t))^{2}\right)
=\min\left(\frac{2be^{2bt}}{e^{2bt}-1},L_{0}\frac{a^{2}}{4b^{2}}(e^{2bt}-1)^{2}\right).
\end{align*}
If $e^{2bt}\geq 2$, then $e^{2bt}-1\geq\frac{1}{2}e^{2bt}$ and
$(g(t))^{2}L(t)\leq 4b$. 
On the other hand, if $e^{2bt}<2$, then $(g(t))^{2}L(t)\leq L_{0}\frac{a^{2}}{4b^{2}}$. 
Therefore, for any $0\leq t\leq T$,
\begin{equation*}
(g(t))^{2}L(t)
\leq\max\left(4b,\frac{L_{0}a^{2}}{4b^{2}}\right).
\end{equation*}
By the definition of $\mu(t)$ in \eqref{c:t:f:0}, we can compute that
\begin{equation}\label{c:t:expression}
\mu(t)=\frac{\frac{1}{2}m_{0}(g(t))^{2}}{1+m_{0}\int_{0}^{t}(g(s))^{2}\mathrm{d}s}
=\frac{\frac{1}{2}m_{0}a^{2}e^{2bt}}{1+m_{0}\frac{a^{2}}{2b}(e^{2bt}-1)}.
\end{equation}
This implies that
\begin{equation}\label{c:t:integral}
\int_{0}^{t}\mu(s)\mathrm{d}s
=\frac{1}{2}\int_{0}^{t}\frac{2bm_{0}a^{2}e^{2bs}\mathrm{d}s}{2b-m_{0}a^{2}+m_{0}a^{2}e^{2bs}}
=\frac{1}{2}\log\left(\frac{2b-m_{0}a^{2}+m_{0}a^{2}e^{2bt}}{2b}\right).
\end{equation}
By letting $t=T=K\eta$ in \eqref{c:t:integral} and using \eqref{eq:L2-x0}, we obtain
\begin{equation*}
e^{-\int_{0}^{K\eta}\mu(t)\mathrm{d}t} \Vert\mathbf{x}_{0}\Vert_{L_{2}} 
\le \frac{\sqrt{2b}}{\sqrt{2b-m_{0}a^{2}+m_{0}a^{2}e^{2bK\eta}}}\left(\sqrt{2d/m_{0}}+\Vert\mathbf{x}_{\ast}\Vert\right).
\end{equation*}
Moreover,
\begin{align}
\nu_{k,\eta}&
\leq\left(2\sqrt{2d/m_{0}}+2\Vert\mathbf{x}_{\ast}\Vert+\sqrt{d}\frac{a}{\sqrt{2b}}\left(e^{2bT}-1\right)^{1/2}\right)
\max\left(4b,\frac{L_{0}a^{2}}{4b^{2}}\right)\frac{\eta}{2}
\nonumber
\\
&\qquad\qquad\qquad
+\left(L_{1}T+\Vert\nabla\log p_{0}(\mathbf{0})\Vert\right)
\frac{a^{2}}{4b}\left(e^{2b(T-(k-1)\eta)}-e^{2b(T-k\eta)}\right),
\end{align}
and for any $0\leq t\leq T$:
\begin{equation*}
0\leq1-\int_{(j-1)\eta}^{j\eta}
\delta(T-t)\mathrm{d}t+\frac{L_{1}}{2}\eta\int_{(j-1)\eta}^{j\eta}(g(T-t))^{2}\mathrm{d}t
\leq 1,
\end{equation*}
where $\delta(\cdot)$ is defined in \eqref{mu:definition:f:0}, provided that
the condition $\eta\leq\bar{\eta}$ (where $\bar{\eta}$ is defined in \eqref{bar:eta}) holds, that is:
\begin{equation}
\eta\leq
\min\left\{\min_{0\leq t\leq T}\left\{\frac{\frac{\frac{1}{2}(g(t))^{2}}{\frac{1}{m_{0}}+\int_{0}^{t}(g(s))^{2}\mathrm{d}s}}
{\frac{1}{4}(g(t))^{4}(L(t))^{2}+\frac{L_{1}}{2}(g(t))^{2}}\right\},
\min_{0\leq t\leq T}
\left\{\frac{\frac{1}{m_{0}}+\int_{0}^{t}(g(s))^{2}\mathrm{d}s}{\frac{1}{2}(g(t))^{2}}\right\}\right\},
\end{equation}
which holds provided that
\begin{equation}\label{implies:assump:stepsize}
\eta\leq
\min\left\{\min_{0\leq t\leq T}\left\{\frac{\frac{\frac{1}{2}a^{2}e^{2bt}}{\frac{1}{m_{0}}+\frac{a^{2}}{2b}(e^{2bt}-1)}}
{\frac{1}{4}\max\left(16b^{2},\frac{L_{0}^{2}a^{4}}{16b^{4}}\right)+\frac{L_{1}}{2}a^{2}e^{2bt}}\right\},
\min_{0\leq t\leq T}
\left\{\frac{\frac{1}{m_{0}}+\frac{a^{2}}{2b}(e^{2bt}-1)}{\frac{1}{2}a^{2}e^{2bt}}\right\}\right\}.
\end{equation}
Since $1-x\leq e^{-x}$ for any $0\leq x\leq 1$, 
we conclude that
\begin{align*}
&\prod_{j=k+1}^{K}\left(1-\int_{(j-1)\eta}^{j\eta}\delta(T-t)\mathrm{d}t+\frac{L_{1}}{2}\eta\int_{(j-1)\eta}^{j\eta}(g(T-t))^{2}\mathrm{d}t\right)
\nonumber
\\
&\leq
\prod_{j=k+1}^{K}e^{-\int_{(j-1)\eta}^{j\eta}\delta(T-t)\mathrm{d}t+\frac{L_{1}}{2}\eta\int_{(j-1)\eta}^{j\eta}(g(T-t))^{2}\mathrm{d}t}
=e^{-\int_{k\eta}^{K\eta}\delta(T-t)\mathrm{d}t+\frac{L_{1}}{2}\eta\int_{k\eta}^{K\eta}(g(T-t))^{2}\mathrm{d}t},
\end{align*}
where $\delta(\cdot)$ is defined in \eqref{mu:definition:f:0}.
Moreover,
\begin{align*}
\int_{k\eta}^{K\eta}
\delta(T-t)\mathrm{d}t
&\geq
\int_{k\eta}^{K\eta}\frac{\frac{1}{2}m_{0}a^{2}e^{2b(T-t)}\mathrm{d}t}{1+m_{0}\frac{a^{2}}{2b}(e^{2b(T-t)}-1)}
-\frac{1}{4}(K-k)\eta^{2}\max\left(16b^{2},\frac{L_{0}^{2}a^{4}}{16b^{4}}\right)
\nonumber
\\
&=\frac{1}{2}\log\left(\frac{2b-m_{0}a^{2}+m_{0}a^{2}e^{2b(T-k\eta)}}{2b-m_{0}a^{2}+m_{0}a^{2}e^{2b(T-K\eta)}}\right)
-\frac{1}{4}(K-k)\eta^{2}\max\left(16b^{2},\frac{L_{0}^{2}a^{4}}{16b^{4}}\right),
\end{align*}
and
\begin{equation*}
\frac{L_{1}}{2}\eta\int_{k\eta}^{K\eta}(g(T-t))^{2}\mathrm{d}t=\frac{L_{1}}{2}\eta\frac{a^{2}}{2b}\left(e^{2b(K-k)\eta}-1\right).
\end{equation*}
By applying Corollary~\ref{cor:VE:4} with $T=K\eta$, we conclude that
\begin{align*}
&\mathcal{W}_{2}(\mathcal{L}(\mathbf{u}_{K}),p_{0})
\nonumber
\\
&\leq\frac{\sqrt{2b}\left(\sqrt{2d/m_{0}}+\Vert\mathbf{x}_{\ast}\Vert\right)}{\sqrt{2b-m_{0}a^{2}+m_{0}a^{2}e^{2bK\eta}}}
\nonumber
\\
&\quad
+\sum_{k=1}^{K}
\frac{\sqrt{2b}}{\sqrt{2b-m_{0}a^{2}+m_{0}a^{2}e^{2b(K-k)\eta}}}e^{(K-k)\eta^{2}\frac{1}{4}\max\left(16b^{2},\frac{L_{0}^{2}a^{4}}{16b^{4}}\right)
+\frac{L_{1}}{2}\eta\frac{a^{2}}{2b}(e^{2b(K-k)\eta}-1)}\nonumber
\\
&\qquad
\cdot\Bigg(
\left(\frac{M}{2}+\frac{L_{1}}{2}\eta\left(1+2\left(\sqrt{2d/m_{0}}+\Vert\mathbf{x}_{\ast}\Vert\right)+\sqrt{d}\frac{a}{\sqrt{2b}}(e^{2bK\eta}-1)^{1/2}\right)\right)
\nonumber
\\
&\qquad\qquad\qquad\cdot\frac{a^{2}}{2b}\left(e^{2b(K-k+1)\eta}-e^{2b(K-k)\eta}\right)
\nonumber
\\
&\quad
+\frac{\eta}{2}\max\left(4b,\frac{L_{0}a^{2}}{4b^{2}}\right)\cdot\Bigg(\left(2\sqrt{2d/m_{0}}+2\Vert\mathbf{x}_{\ast}\Vert+\sqrt{d}\frac{a}{\sqrt{2b}}\left(e^{2bT}-1\right)^{1/2}\right)
\max\left(4b,\frac{L_{0}a^{2}}{4b^{2}}\right)\frac{\eta}{2}
\nonumber
\\
&\qquad\qquad\qquad
+\left(L_{1}T+\Vert\nabla\log p_{0}(\mathbf{0})\Vert\right)
\frac{a^{2}}{4b}\left(e^{2b(T-(k-1)\eta)}-e^{2b(T-k\eta)}\right)\Bigg)\Bigg).
\end{align*}

By the mean-value theorem, we have
\begin{equation*}
e^{2b(K-(k-1))\eta)}-e^{2b(K-k)\eta}
\leq 
2be^{2b(K-(k-1))\eta}\eta,
\end{equation*}
which implies that
\begin{align*}
&\mathcal{W}_{2}(\mathcal{L}(\mathbf{u}_{K}),p_{0})
\nonumber
\\
&\leq
\mathcal{O}\left(\frac{\sqrt{d}}{e^{bK\eta}}\right)
+\mathcal{O}\Bigg(e^{\frac{1}{4}K\eta^{2}\max\left(16b^{2},\frac{L_{0}^{2}a^{4}}{16b^{4}}\right)
+\frac{L_{1}}{2}\eta\frac{a^{2}}{2b}e^{2bK\eta}}
\nonumber
\\
&\qquad\cdot
\sum_{k=1}^{K}
\frac{1}{e^{b(K-k)\eta}}
\cdot\Bigg(
\left(M+L_{1}\eta\sqrt{d}e^{bK\eta}\right)e^{2b(K-(k-1))\eta}\eta+\eta e^{bK\eta}\eta\sqrt{d}
\nonumber
\\
&\qquad\qquad\qquad\qquad\qquad\qquad
+\eta^{2}e^{2b(K-(k-1))\eta}
+e^{2b(K-(k-1))\eta}\eta^{2}(K\eta)\Bigg)\Bigg)
\nonumber
\\
&\leq
\mathcal{O}\left(\frac{\sqrt{d}}{e^{bK\eta}}\right)
\\
&\qquad
+\mathcal{O}\left(e^{K\eta^{2}\max\left(16b^{2},\frac{L_{0}^{2}a^{4}}{16b^{4}}\right)}
\cdot\Bigg(
\left(M+L_{1}\eta\sqrt{d}e^{bK\eta}\right)e^{bK\eta}+K\eta^{2}\sqrt{d}
+\eta(K\eta)e^{bK\eta}\Bigg)\right)
\\
&\leq\mathcal{O}(\epsilon),
\end{align*}
and \eqref{implies:assump:stepsize} holds such that 
the condition $\eta\leq\bar{\eta}$ (where $\bar{\eta}$ is defined in \eqref{bar:eta}) holds
provided that
\begin{equation*}
K\eta=\frac{\log(\sqrt{d}/\epsilon)}{b},
\qquad
M\leq\frac{\epsilon^{2}}{\sqrt{d}},
\qquad
\eta\leq\frac{\epsilon^{3}}{d^{3/2}},
\end{equation*}
which implies that
$K\geq\mathcal{O}\left(\frac{d^{3/2}\log(d/\epsilon)}{\epsilon^{3}}\right)$.
This completes the proof.
\end{proof}

\subsubsection{Example: $f(t)\equiv 0$ and $g(t)=(b+at)^{c}$}

\cite{Karras2022} considers $f(t)\equiv 0, g(t)=\sqrt{2t}$
with non-uniform discretization time steps, where the time steps are defined according to a polynomial noise schedule. 
Inspired by \cite{Karras2022}, we next consider $g(t)=(b+at)^{c}$ for some $a,b,c>0$,
we can obtain the following result from Corollary~\ref{cor:VE:4}.


\begin{corollary}[Restatement of Corollary~\ref{cor:VE:8:main:paper}]\label{cor:VE:8}
Let $g(t)=(b+at)^{c}$ for some $a,b>0$, $c\geq 1/2$.
Then, we have $\mathcal{W}_{2}(\mathcal{L}(\mathbf{u}_{K}),p_{0})\leq\mathcal{O}(\epsilon)$
after $K=\mathcal{O}\left(\frac{d^{\frac{1}{(2c+1)}+\frac{3}{2}}}{\epsilon^{\frac{2}{2c+1}+3}}\right)$ iterations
provided that $M\leq\frac{\epsilon^{2}}{\sqrt{d}}$ and $\eta\leq\frac{\epsilon^{3}}{d^{\frac{3}{2}}}$.
\end{corollary}


\begin{proof}
When $g(t)=(b+at)^{c}$ for some $a,b,c>0$, we can compute that
\begin{align}\label{follow:step:1}
(g(t))^{2}L(t)
=\min\left(\frac{(b+at)^{c}}{\frac{1}{a(2c+1)}((b+at)^{2c+1}-b^{2c+1})},L_{0}(b+at)^{2c}\right).
\end{align}
If $t\geq\frac{b}{a}$, then
\begin{equation}\label{follow:step:2}
\frac{(b+at)^{c}}{\frac{1}{a(2c+1)}((b+at)^{2c+1}-b^{2c+1})}
\leq
\frac{(b+at)^{c}}{\frac{1}{a(2c+1)}(1-\frac{1}{2^{2c+1}})(b+at)^{2c+1}}
\leq\frac{a(2c+1)}{(1-\frac{1}{2^{2c+1}})b}.
\end{equation}
If $t\leq\frac{b}{a}$, then
\begin{equation}\label{follow:step:3}
L_{0}(b+at)^{2c}\leq L_{0}(2b)^{2c}.
\end{equation}
Therefore, it follows from \eqref{follow:step:1}, \eqref{follow:step:2} and \eqref{follow:step:3} that
\begin{align*}
(g(t))^{2}L(t)
\leq\max\left(\frac{a(2c+1)}{(1-\frac{1}{2^{2c+1}})b},L_{0}(2b)^{2c}\right).
\end{align*}
By \eqref{c:t:f:0}, we have
\begin{equation*}
e^{-\int_{0}^{K\eta}\mu(t)\mathrm{d}t}\left(\sqrt{2d/m_{0}}+\Vert\mathbf{x}_{\ast}\Vert\right)
=\frac{\sqrt{2d/m_{0}}+\Vert\mathbf{x}_{\ast}\Vert}{\sqrt{1+\frac{m_{0}}{a(2c+1)}((b+aK\eta)^{2c+1}-b^{2c+1})}}.
\end{equation*}

Furthermore,
\begin{align}
\nu_{k,\eta}&
\leq
\left(2\left(\sqrt{2d/m_{0}}+\Vert\mathbf{x}_{\ast}\Vert\right)+\sqrt{d}\left(\frac{(b+aT)^{2c+1}-b^{2c+1}}{a(2c+1)}\right)^{1/2}\right)
\nonumber
\\
&\qquad\qquad\qquad\cdot
\frac{1}{2}\max\left(\frac{a(2c+1)}{(1-\frac{1}{2^{2c+1}})b},L_{0}(2b)^{2c}\right)\eta
\nonumber
\\
&\qquad
+\left(L_{1}T+\Vert\nabla\log p_{0}(\mathbf{0})\Vert\right)
\frac{1}{2}\frac{(b+a(T-(k-1)\eta))^{2c+1}-(b+a(T-k\eta))^{2c+1}}{a(2c+1)},
\end{align}
and for any $0\leq t\leq T$:
\begin{align*}
\delta(t)
&=\frac{\frac{1}{2}(b+at)^{2c}}{\frac{1}{m_{0}}+\frac{1}{a(2c+1)}((b+at)^{2c+1}-b^{2c+1})}
\nonumber
\\
&\qquad
-\frac{1}{4}\eta\min\left(\frac{(b+at)^{2c}}{\frac{1}{a^{2}(2c+1)^{2}}((b+at)^{2c+1}-b^{2c+1})^{2}},L_{0}^{2}(b+at)^{4c}\right),
\end{align*}
(where $\delta(\cdot)$ is defined in \eqref{mu:definition:f:0}) satisfies
\begin{equation*}
0\leq 1-\int_{(j-1)\eta}^{j\eta}
\delta(T-t)\mathrm{d}t+\frac{L_{1}}{2}\eta\int_{(j-1)\eta}^{j\eta}(g(T-t))^{2}\mathrm{d}t
\leq 1,
\end{equation*}
provided that the condition $\eta\leq\bar{\eta}$ (where $\bar{\eta}$ is defined in \eqref{bar:eta}) holds.

Since $1-x\leq e^{-x}$ for any $0\leq x\leq 1$, 
we conclude that
\begin{align*}
&\prod_{j=k+1}^{K}\left(1-\int_{(j-1)\eta}^{j\eta}\delta(T-t)\mathrm{d}t+\frac{L_{1}}{2}\eta\int_{(j-1)\eta}^{j\eta}(g(T-t))^{2}\mathrm{d}t\right)
\nonumber
\\
&\leq
\prod_{j=k+1}^{K}e^{-\int_{(j-1)\eta}^{j\eta}\delta(T-t)\mathrm{d}t+\frac{L_{1}}{2}\eta\int_{(j-1)\eta}^{j\eta}(g(T-t))^{2}\mathrm{d}t}
=e^{-\int_{k\eta}^{K\eta}\delta(T-t)\mathrm{d}t+\frac{L_{1}}{2}\eta\int_{k\eta}^{K\eta}(g(T-t))^{2}\mathrm{d}t},
\end{align*}
where $\delta(\cdot)$ is defined in \eqref{mu:definition:f:0}.
Moreover,
\begin{align*}
\int_{k\eta}^{K\eta}
\delta(T-t)\mathrm{d}t
&\geq
\frac{1}{2}\log\left(1+\frac{m_{0}}{a(2c+1)}\left((b+a(K-k)\eta)^{2c+1}-b^{2c+1}\right)\right)
\nonumber
\\
&\qquad
-\frac{1}{4}(K-k)\eta^{2}\max\left(\frac{a^{2}(2c+1)^{2}}{(1-\frac{1}{2^{2c+1}})^{2}b^{2}},L_{0}^{2}(2b)^{4c}\right),
\end{align*}
and we can compute that
\begin{equation*}
\frac{1}{2}L_{1}\eta\int_{k\eta}^{K\eta}(g(T-t))^{2}\mathrm{d}t
=\frac{1}{2}\frac{L_{1}\eta}{a(2c+1)}\left((b+a(K-k)\eta)^{2c+1}-b^{2c+1}\right).
\end{equation*}
By applying Corollary~\ref{cor:VE:4} with $T=K\eta$ and \eqref{eq:L2-x0}, we conclude that
\begin{align*}
&\mathcal{W}_{2}(\mathcal{L}(\mathbf{u}_{K}),p_{0})
\nonumber
\\
&\leq
\frac{\sqrt{2d/m_{0}}+\Vert\mathbf{x}_{\ast}\Vert}{\sqrt{1+\frac{m_{0}}{a(2c+1)}((b+aK\eta)^{2c+1}-b^{2c+1})}}
\nonumber
\\
&\quad
+\sum_{k=1}^{K}\frac{e^{\frac{1}{4}(K-k)\eta^{2}\max\left(\frac{a^{2}(2c+1)^{2}}{(1-\frac{1}{2^{2c+1}})^{2}b^{2}},L_{0}^{2}(2b)^{4c}\right)+\frac{1}{2}\frac{L_{1}\eta}{a(2c+1)}((b+a(K-k)\eta)^{2c+1}-b^{2c+1})}}{\sqrt{1+\frac{m_{0}}{a(2c+1)}\left((b+a(K-k)\eta)^{2c+1}-b^{2c+1}\right)}}
\nonumber
\\
&\quad
\cdot\Bigg(
\left(\frac{L_{1}}{2}\eta\left(1+2\left(\sqrt{2d/m_{0}}+\Vert\mathbf{x}_{\ast}\Vert\right)
+\sqrt{d}\left(\frac{(b+aK\eta)^{2c+1}-b^{2c+1}}{a(2c+1)}\right)^{1/2}\right)+\frac{M}{2}\right)
\\
&\qquad\qquad\qquad
\cdot\frac{(b+a(K\eta-(k-1)\eta))^{2c+1}-(b+a(K\eta-k\eta))^{2c+1}}{a(2c+1)}
\nonumber
\\
&\qquad
+\frac{1}{2}\eta\max\left(\frac{a(2c+1)}{(1-\frac{1}{2^{2c+1}})b},L_{0}(2b)^{2c}\right)
\nonumber
\\
&\qquad
\cdot
\Bigg(
\left(2\left(\sqrt{2d/m_{0}}+\Vert\mathbf{x}_{\ast}\Vert\right)+\sqrt{d}\left(\frac{(b+aK\eta)^{2c+1}-b^{2c+1}}{a(2c+1)}\right)^{1/2}\right)
\nonumber
\\
&\qquad\qquad\qquad\cdot
\frac{1}{2}\max\left(\frac{a(2c+1)}{(1-\frac{1}{2^{2c+1}})b},L_{0}(2b)^{2c}\right)\eta
\nonumber
\\
&\qquad
+\left(L_{1}K\eta+\Vert\nabla\log p_{0}(\mathbf{0})\Vert\right)
\frac{1}{2}\frac{(b+a(K\eta-(k-1)\eta))^{2c+1}-(b+a(K\eta-k\eta))^{2c+1}}{a(2c+1)}
\Bigg)\Bigg).
\end{align*}
This implies that
\begin{align*}
&\mathcal{W}_{2}(\mathcal{L}(\mathbf{u}_{K}),p_{0})
\\
&\leq
\mathcal{O}\Bigg(\frac{\sqrt{d}}{(K\eta)^{\frac{2c+1}{2}}}+e^{\mathcal{O}((K\eta)\eta+(K\eta)^{2c+1}\eta)}
\sum_{k=1}^{K}\frac{1}{((K-k)\eta)^{\frac{2c+1}{2}}}
\\
&\qquad
\cdot\left(\left(\sqrt{d}(K\eta)^{\frac{2c+1}{2}}L_{1}\eta+M\right)((K-k)\eta)^{2c}\eta
+\eta\left(\eta\sqrt{d}(K\eta)^{\frac{2c+1}{2}}+(K\eta)\eta((K-k)\eta)^{2c}\right)\right)\Bigg)
\\
&\leq
\mathcal{O}\Bigg(\frac{\sqrt{d}}{(K\eta)^{\frac{2c+1}{2}}}+e^{\mathcal{O}((K\eta)\eta+(K\eta)^{2c+1}\eta)}
\nonumber
\\
&\qquad\qquad
\cdot\left(\left(\sqrt{d}(K\eta)^{\frac{2c+1}{2}}L_{1}\eta+M\right)(K\eta)^{c+\frac{1}{2}}
+\eta\left(K\eta\sqrt{d}+(K\eta)(K\eta)^{c+\frac{1}{2}}\right)\right)\Bigg)
\\
&\leq\mathcal{O}(\epsilon),
\end{align*}
and the condition $\eta\leq\bar{\eta}$ (where $\bar{\eta}$ is defined in \eqref{bar:eta}) holds
provided that
\begin{equation*}
K\eta=\frac{d^{\frac{1}{(2c+1)}}}{\epsilon^{\frac{2}{2c+1}}},
\qquad
M\leq\frac{\epsilon^{2}}{\sqrt{d}},
\qquad
\eta\leq\frac{\epsilon^{3}}{d^{\frac{3}{2}}},
\end{equation*}
so that
$K\geq\mathcal{O}\left(\frac{d^{\frac{1}{(2c+1)}+\frac{3}{2}}}{\epsilon^{\frac{2}{2c+1}+3}}\right)$.
This completes the proof.
\end{proof}

\subsection{Variance-Preserving SDEs} \label{app:VPSDE}

In this section, we consider  Variance-Preserving SDEs with $f(t)=\frac{1}{2}\beta(t)$ and $g(t)=\sqrt{\beta(t)}$ in the forward process \eqref{OU:SDE}, where $\beta(t)$ is often chosen as some non-decreasing function in practice. 
 We can obtain the following corollary of Theorem~\ref{thm:discrete:2}. 



\begin{corollary}\label{cor:VP:4}
Under the assumptions of Theorem~\ref{thm:discrete:2}, we have
\begin{align}
&\mathcal{W}_{2}(\mathcal{L}(\mathbf{u}_{K}),p_{0})
\\
&\leq\frac{\Vert\mathbf{x}_{0}\Vert_{L_{2}}}{\sqrt{m_{0}e^{\int_{0}^{K\eta}\beta(s)\mathrm{d}s}+1-m_{0}}}
\nonumber
\\
&\quad+\sum_{k=1}^{K}
\frac{e^{\int_{k\eta}^{K\eta}\frac{1}{2}\beta(K\eta-t)\mathrm{d}t+\int_{k\eta}^{K\eta}\eta\max(1,L_{0}^{2})(\beta(K\eta-t))^{2}\mathrm{d}t+\frac{1}{2}L_{1}\eta\int_{k\eta}^{K\eta}\beta(K\eta-t)\mathrm{d}t}}
{\left(m_{0}e^{\int_{0}^{(K-k)\eta}\beta(s)\mathrm{d}s}+1-m_{0}\right)^{\frac{1}{2}e^{-\frac{1}{2}\eta\max_{0\leq t\leq T}\beta(t)}}}
\nonumber
\\
&\qquad
\cdot
\Bigg(\left(\frac{L_{1}}{2}\eta\left(1+\Vert\mathbf{x}_{0}\Vert_{L_{2}}+\left(\Vert\mathbf{x}_{0}\Vert_{L_{2}}^{2}+d\right)^{1/2}\right)
+\frac{M}{2}\right)2\left(e^{\int_{(k-1)\eta}^{k\eta}\frac{1}{2}\beta(T-v)\mathrm{d}v}-1\right)
\nonumber
\\
&\quad
+\sqrt{\eta}\max(1,L_{0})\max_{(k-1)\eta\leq t\leq k\eta}\beta(K\eta-t)\left(e^{\int_{(k-1)\eta}^{k\eta}\beta(T-v)\mathrm{d}v}-1\right)^{1/2}
\nonumber
\\
&\qquad\qquad\cdot
\Bigg(\left(\Vert\mathbf{x}_{0}\Vert_{L_{2}}+\left(\Vert\mathbf{x}_{0}\Vert_{L_{2}}^{2}+d\right)^{1/2}\right)
\left(\frac{1}{2}+\max(1,L_{0})\right)
\int_{T-k\eta}^{T-(k-1)\eta}\beta(s)\mathrm{d}s
\nonumber
\\
&\qquad\qquad\qquad\qquad
+\left(L_{1}T+\Vert\nabla\log p_{0}(\mathbf{0})\Vert\right)\int_{T-k\eta}^{T-(k-1)\eta}\frac{1}{2}\beta(s)\mathrm{d}s\Bigg)
\Bigg).
\end{align}
\end{corollary}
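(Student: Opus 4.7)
The plan is to specialize Theorem~\ref{thm:discrete:2} to the VP-SDE setting $f(t)=\tfrac{1}{2}\beta(t)$, $g(t)^2=\beta(t)$ by simplifying every quantity from Appendix~\ref{sec:key:quantities} to the closed form appearing in the corollary. The identity driving every simplification is $e^{-2\int_s^t f(v)dv}(g(s))^2 = -\tfrac{d}{ds}e^{-\int_s^t \beta(v)dv}$, from which $\int_0^t e^{-2\int_s^t f(v)dv}(g(s))^2\,ds = 1-e^{-\int_0^t \beta(s)ds}$. In particular, $\omega(T)$ collapses to $\sup_t(e^{-\int_0^t\beta}\|\mathbf{x}_0\|_{L_2}^2 + d(1-e^{-\int_0^t\beta}))^{1/2} \le (\|\mathbf{x}_0\|_{L_2}^2+d)^{1/2}$, and $\theta(T) \le \|\mathbf{x}_0\|_{L_2}$ since $e^{-\int_0^T f(s)ds} \le 1$.

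First I would handle the initialization term. Writing $w(t)=e^{\int_0^t\beta(s)ds}$, a short calculation gives $2\mu(t)=\tfrac{d}{dt}\log\bigl((1-m_0)+m_0 w(t)\bigr)$, so $e^{-\int_0^{K\eta}\mu(t)dt}=\bigl(m_0 e^{\int_0^{K\eta}\beta(s)ds}+1-m_0\bigr)^{-1/2}$, matching the first term of the corollary exactly. Next, the substitution $\tfrac{d}{dt}\,e^{\tfrac{1}{2}\int_t^{k\eta}\beta(T-s)ds} = -\tfrac{1}{2}\beta(T-t)\,e^{\tfrac{1}{2}\int_t^{k\eta}\beta(T-s)ds}$ yields $\phi_{k,\eta}=2\bigl(e^{\int_{(k-1)\eta}^{k\eta}\tfrac{1}{2}\beta(T-v)dv}-1\bigr)$, accounting for the $2\bigl(e^{\int_{(k-1)\eta}^{k\eta}\tfrac{1}{2}\beta(T-v)dv}-1\bigr)$ factor multiplying the $L_1$-discretization and score-matching terms. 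For $\psi_{k,\eta}$ I would use the crude bound $L(t) \le 2\max(1,L_0)$ (split on $e^{\int_0^t\beta}\gtrless 2$ and invoke the appropriate branch of the min in \eqref{eq:Lt}); then $\beta(T-t)^2 L(T-t)^2 \le 4\max(1,L_0^2)\max_t\beta(T-t)\cdot\beta(T-t)$, and the identity $\int_{(k-1)\eta}^{k\eta} e^{\int_t^{k\eta}\beta(T-s)ds}\beta(T-t)\,dt = e^{\int_{(k-1)\eta}^{k\eta}\beta(T-v)dv}-1$ produces the claimed $\max(1,L_0)\max_t\beta(T-t)\bigl(e^{\int_{(k-1)\eta}^{k\eta}\beta(T-v)dv}-1\bigr)^{1/2}$ factor after absorbing numerical constants. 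For $\nu_{k,\eta}$, one expands $f(T-s)+\tfrac{1}{2}(g(T-s))^2L(T-s) \le \bigl(\tfrac{1}{2}+\max(1,L_0)\bigr)\beta(T-s)$ in Lemma~\ref{lem:second:term} and inserts $\theta(T)\le\|\mathbf{x}_0\|_{L_2}$.

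The main obstacle is bounding the product $\prod_{j=k+1}^K \gamma_{j,\eta}$ and combining it with the prefactor $e^{\int_{k\eta}^{K\eta}f(T-t)dt}=e^{\int_{k\eta}^{K\eta}\tfrac{1}{2}\beta(K\eta-t)dt}$ so as to produce the exponent $\tfrac{1}{2}e^{-\tfrac{1}{2}\eta\max_{0\le t\le T}\beta(t)}$ on $m_0 e^{\int_0^{(K-k)\eta}\beta(s)ds}+1-m_0$. The strategy is the inequality $1-x\le e^{-x}$ applied to $\gamma_{j,\eta}$: the $L_1$ part of $\gamma_{j,\eta}$ and the negative $-\tfrac{\eta}{4}(g)^4L^2$ part of $\delta_j$ produce the two extra exponential factors $e^{\tfrac{1}{2}L_1\eta\int\beta}$ and $e^{\int\eta\max(1,L_0^2)\beta^2}$ in the numerator, while the remaining positive piece of $\delta_j$, once the factor $e^{-\int_{(j-1)\eta}^t f(T-s)ds}\ge e^{-\tfrac{1}{2}\eta\max_{0\le t\le T}\beta(t)}$ is pulled out, becomes $e^{-\tfrac{1}{2}\eta\max\beta}\cdot 2\mu(T-t)$. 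Using the closed-form $\int\mu = \tfrac{1}{2}\log(m_0 e^{\int\beta}+1-m_0)$ from the first paragraph then produces the stated denominator with its fractional exponent. Collecting all four contributions and substituting back into \eqref{eq:error-disc} yields Corollary~\ref{cor:VP:4}.
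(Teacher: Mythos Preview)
Your proposal is correct and follows essentially the same route as the paper's own proof: specialize every quantity in Theorem~\ref{thm:discrete:2} to the VP setting via the identity $\int_0^t e^{-\int_s^t\beta}\beta\,ds=1-e^{-\int_0^t\beta}$, bound $L(t)\le 2\max(1,L_0)$ by splitting on $e^{\int_0^t\beta}\gtrless 2$, compute $\int\mu$ in closed form, and control $\prod_j\gamma_{j,\eta}$ by $1-x\le e^{-x}$ together with pulling $e^{-\tfrac12\eta\max\beta}$ out of the positive part of $\delta_j$. One small correction: after extracting that exponential factor the remainder equals $\mu(T-t)$, not $2\mu(T-t)$; the extra $\tfrac12$ in the exponent of the denominator comes instead from the closed-form $\int_0^{(K-k)\eta}\mu=\tfrac12\log\bigl(m_0 e^{\int_0^{(K-k)\eta}\beta}+1-m_0\bigr)$.
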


\begin{proof}
We apply Theorem~\ref{thm:discrete:2} applied to the variance-preserving SDE ($f(t)=\frac{1}{2}\beta(t)$ and $g(t)=\sqrt{\beta(t)}$).
First, we can compute that
\begin{align*}
L(T-t)
&=\min\left(\left(\int_{0}^{T-t}e^{-2\int_{s}^{T-t}f(v)\mathrm{d}v}(g(s))^{2}\mathrm{d}s\right)^{-1},
\left(e^{\int_{0}^{T-t}f(s)\mathrm{d}s}\right)^{2}L_{0}\right)
\nonumber
\\
&=\min\left(\frac{1}{1-e^{-\int_{0}^{T-t}\beta(s)\mathrm{d}s}},e^{\int_{0}^{T-t}\beta(s)\mathrm{d}s}L_{0}\right).
\end{align*}
If $e^{\int_{0}^{T-t}\beta(s)\mathrm{d}s}\geq 2$, then $\frac{1}{1-e^{-\int_{0}^{T-t}\beta(s)\mathrm{d}s}}\leq 2$
and otherwise $e^{\int_{0}^{T-t}\beta(s)\mathrm{d}s}L_{0}\leq 2L_{0}$. 
Therefore, for any $0\leq t\leq T$, 
\begin{equation*}
L(T-t)\leq 2\max(1,L_{0}).
\end{equation*}

By applying Theorem~\ref{thm:discrete:2}, we have
\begin{align*}
&\mathcal{W}_{2}(\mathcal{L}(\mathbf{u}_{K}),p_{0})
\\
&\leq e^{-\int_{0}^{K\eta}\mu(t)\mathrm{d}t}\Vert\mathbf{x}_{0}\Vert_{L_{2}}
+\sum_{k=1}^{K}
\prod_{j=k+1}^{K}\left(1-\int_{(j-1)\eta}^{j\eta}\delta_{j}(T-t)\mathrm{d}t+\frac{L_{1}}{2}\eta\int_{(j-1)\eta}^{j\eta}(g(T-t))^{2}\mathrm{d}t\right)\nonumber
\\
&\quad\cdot e^{\int_{k\eta}^{K\eta}f(T-t)\mathrm{d}t}\Bigg(\frac{L_{1}}{2}\eta\left(1+\Vert\mathbf{x}_{0}\Vert_{L_{2}}
+\omega(T)\right)\int_{(k-1)\eta}^{k\eta}e^{\int_{t}^{k\eta}f(T-s)\mathrm{d}s}(g(T-t))^{2}\mathrm{d}t
\nonumber
\\
&+\frac{M}{2}\int_{(k-1)\eta}^{k\eta}e^{\int_{t}^{k\eta}f(T-s)\mathrm{d}s}(g(T-t))^{2}\mathrm{d}t
\\
&\qquad\qquad
+\sqrt{\eta}\nu_{k,\eta}\left(\int_{(k-1)\eta}^{k\eta}\left[\frac{1}{2}e^{\int_{t}^{k\eta}f(T-s)\mathrm{d}s}(g(T-t))^{2}L(T-t)\right]^{2}\mathrm{d}t\right)^{1/2}\Bigg),
\end{align*}
where 
\begin{align*}
\nu_{k,\eta}&:=\left(\theta(T)+\omega(T)\right)\int_{(k-1)\eta}^{k\eta}\left[f(T-s)+\frac{1}{2}(g(T-s))^{2}L(T-s)\right]\mathrm{d}s
\\
&\qquad
+\left(L_{1}T+\Vert\nabla\log p_{0}(\mathbf{0})\Vert\right)\int_{(k-1)\eta}^{k\eta}\frac{1}{2}(g(T-s))^{2}\mathrm{d}s,
\end{align*}
where
\begin{align*}
\theta(T)=e^{-\int_{0}^{T}\frac{1}{2}\beta(s)\mathrm{d}s}\Vert\mathbf{x}_{0}\Vert_{L_{2}}
\leq\Vert\mathbf{x}_{0}\Vert_{L_{2}},
\end{align*}
and
\begin{align*}
\omega(T)
&=\sup_{0\leq t\leq T}\left(e^{-2\int_{0}^{t}f(s)\mathrm{d}s}\Vert\mathbf{x}_{0}\Vert_{L_{2}}^{2}
+d\int_{0}^{t}e^{-2\int_{s}^{t}f(v)\mathrm{d}v}(g(s))^{2}\mathrm{d}s\right)^{1/2}\nonumber
\\
&=\sup_{0\leq t\leq T}\left(e^{-\int_{0}^{t}\beta(s)\mathrm{d}s}\Vert\mathbf{x}_{0}\Vert_{L_{2}}^{2}
+d\left(1-e^{-\int_{0}^{t}\beta(s)\mathrm{d}s}\right)\right)^{1/2}
\leq\left(\Vert\mathbf{x}_{0}\Vert_{L_{2}}^{2}+d\right)^{1/2}.
\end{align*}
We can compute that
\begin{align*}
\int_{(k-1)\eta}^{k\eta}e^{\int_{t}^{k\eta}f(T-s)\mathrm{d}s}(g(T-t))^{2}\mathrm{d}t
&=\int_{(k-1)\eta}^{k\eta}e^{\int_{s}^{k\eta}\frac{1}{2}\beta(T-v)\mathrm{d}v}\beta(T-s)\mathrm{d}s
\\
&=2\left(e^{\int_{(k-1)\eta}^{k\eta}\frac{1}{2}\beta(T-v)\mathrm{d}v}-1\right).
\end{align*}
Furthermore, we have
\begin{align*}
\nu_{k,\eta}&\leq
\left(\Vert\mathbf{x}_{0}\Vert_{L_{2}}+\left(\Vert\mathbf{x}_{0}\Vert_{L_{2}}^{2}+d\right)^{1/2}\right)
\left(\frac{1}{2}+\max(1,L_{0})\right)
\int_{(k-1)\eta}^{k\eta}\beta(T-s)\mathrm{d}s
\nonumber
\\
&\qquad
+\left(L_{1}T+\Vert\nabla\log p_{0}(\mathbf{0})\Vert\right)\int_{T-k\eta}^{T-(k-1)\eta}\frac{1}{2}\beta(s)\mathrm{d}s
\\
&=\left(\Vert\mathbf{x}_{0}\Vert_{L_{2}}+\left(\Vert\mathbf{x}_{0}\Vert_{L_{2}}^{2}+d\right)^{1/2}\right)
\left(\frac{1}{2}+\max(1,L_{0})\right)
\int_{T-k\eta}^{T-(k-1)\eta}\beta(s)\mathrm{d}s
\nonumber
\\
&\qquad
+\left(L_{1}T+\Vert\nabla\log p_{0}(\mathbf{0})\Vert\right)\int_{T-k\eta}^{T-(k-1)\eta}\frac{1}{2}\beta(s)\mathrm{d}s.
\end{align*}

Next, for VP-SDE, 
we have $f(t)=\frac{1}{2}\beta(t)$ and $g(t)=\sqrt{\beta(t)}$ so that we can compute:
\begin{equation}
\mu(t)=\frac{\frac{1}{2}m_{0}\beta(t)}{e^{-\int_{0}^{t}\beta(s)\mathrm{d}s}+m_{0}\int_{0}^{t}e^{-\int_{s}^{t}\beta(v)\mathrm{d}v}\beta(s)\mathrm{d}s}
=\frac{\frac{1}{2}m_{0}\beta(t)}{e^{-\int_{0}^{t}\beta(s)\mathrm{d}s}+m_{0}(1-e^{-\int_{0}^{t}\beta(s)\mathrm{d}s})}.
\end{equation}
It follows that
\begin{equation}
\int_{0}^{T}\mu(t)\mathrm{d}t
=\frac{1}{2}\int_{0}^{\int_{0}^{T}\beta(s)\mathrm{d}s}\frac{m_{0}dx}{m_{0}+(1-m_{0})e^{-x}}
=\frac{1}{2}\log\left(m_{0}e^{\int_{0}^{T}\beta(s)\mathrm{d}s}+1-m_{0}\right).
\end{equation}
Hence, we obtain
\begin{align}
e^{-\int_{0}^{T}\mu(t)\mathrm{d}t}\Vert\mathbf{x}_{0}\Vert_{L_{2}}
=\frac{\Vert\mathbf{x}_{0}\Vert_{L_{2}}}{\sqrt{m_{0}e^{\int_{0}^{T}\beta(s)\mathrm{d}s}+1-m_{0}}}.
\end{align}

Under the condition $\eta\leq\bar{\eta}$ (where $\bar{\eta}$ is defined in \eqref{bar:eta}), 
\begin{equation*}
0\leq1-\int_{(j-1)\eta}^{j\eta}
\delta_{j}(T-t)\mathrm{d}t+\frac{L_{1}}{2}\eta\int_{(j-1)\eta}^{j\eta}(g(T-t))^{2}\mathrm{d}t
\leq 1.
\end{equation*}
Since $1-x\leq e^{-x}$ for any $0\leq x\leq 1$, 
we conclude that
\begin{align*}
&\prod_{j=k+1}^{K}\left(1-\int_{(j-1)\eta}^{j\eta}\delta_{j}(T-t)\mathrm{d}t+\frac{L_{1}}{2}\eta\int_{(j-1)\eta}^{j\eta}(g(T-t))^{2}\mathrm{d}t\right)
\nonumber
\\
&\leq
\prod_{j=k+1}^{K}e^{-\int_{(j-1)\eta}^{j\eta}\delta_{j}(T-t)\mathrm{d}t+\frac{L_{1}}{2}\eta\int_{(j-1)\eta}^{j\eta}(g(T-t))^{2}\mathrm{d}t}
\\
&=e^{-\sum_{j=k+1}^{K}\int_{(j-1)\eta}^{j\eta}\delta_{j}(T-t)\mathrm{d}t+\frac{L_{1}}{2}\eta\int_{k\eta}^{K\eta}(g(T-t))^{2}\mathrm{d}t},
\end{align*}
where 
\begin{align*}
&\sum_{j=k+1}^{K}\int_{(j-1)\eta}^{j\eta}\delta_{j}(T-t)\mathrm{d}t
\\
&=\sum_{j=k+1}^{K}\int_{(j-1)\eta}^{j\eta}\frac{\frac{1}{2}e^{-\int_{(j-1)\eta}^{t}f(T-s)\mathrm{d}s}(g(T-t))^{2}}{\frac{1}{m_{0}}e^{-2\int_{0}^{T-t}f(s)\mathrm{d}s}+\int_{0}^{T-t}e^{-2\int_{s}^{T-t}f(v)\mathrm{d}v}(g(s))^{2}\mathrm{d}s}\mathrm{d}t
\\
&\qquad\qquad\qquad\qquad\qquad\qquad\qquad
-\sum_{j=k+1}^{K}\int_{(j-1)\eta}^{j\eta}\frac{\eta}{4}(g(T-t))^{4}(L(T-t))^{2}\mathrm{d}t
\\
&=\sum_{j=k+1}^{K}\int_{(j-1)\eta}^{j\eta}\frac{\frac{1}{2}e^{-\frac{1}{2}\int_{(j-1)\eta}^{t}\beta(T-s)\mathrm{d}s}\beta(T-t)}{\frac{1}{m_{0}}e^{-\int_{0}^{T-t}\beta(s)\mathrm{d}s}+\int_{0}^{T-t}e^{-\int_{s}^{T-t}\beta(v)\mathrm{d}v}\beta(s)\mathrm{d}s}\mathrm{d}t
\\
&\qquad\qquad\qquad\qquad\qquad\qquad\qquad
-\int_{k\eta}^{K\eta}\frac{\eta}{4}(\beta(T-t))^{2}(L(T-t))^{2}\mathrm{d}t
\\
&=\sum_{j=k+1}^{K}\int_{(j-1)\eta}^{j\eta}\frac{\frac{1}{2}e^{-\frac{1}{2}\int_{(j-1)\eta}^{t}\beta(T-s)\mathrm{d}s}\beta(T-t)}{\left(\frac{1}{m_{0}}-1\right)e^{-\int_{0}^{T-t}\beta(s)\mathrm{d}s}+1}\mathrm{d}t
-\int_{k\eta}^{K\eta}\frac{\eta}{4}(\beta(T-t))^{2}(L(T-t))^{2}\mathrm{d}t.
\end{align*}
We can further compute that
\begin{align*}
&\sum_{j=k+1}^{K}\int_{(j-1)\eta}^{j\eta}\frac{\frac{1}{2}e^{-\frac{1}{2}\int_{(j-1)\eta}^{t}\beta(T-s)\mathrm{d}s}\beta(T-t)}{\left(\frac{1}{m_{0}}-1\right)e^{-\int_{0}^{T-t}\beta(s)\mathrm{d}s}+1}\mathrm{d}t
\\
&\geq
e^{-\frac{1}{2}\eta\max_{0\leq t\leq T}\beta(t)\mathrm{d}t}
\int_{k\eta}^{K\eta}\frac{\frac{1}{2}\beta(T-t)}{\left(\frac{1}{m_{0}}-1\right)e^{-\int_{0}^{T-t}\beta(s)\mathrm{d}s}+1}\mathrm{d}t
\\
&=\frac{1}{2}e^{-\frac{1}{2}\eta\max_{0\leq t\leq T}\beta(t)\mathrm{d}t}
\log\left(m_{0}e^{\int_{0}^{(K-k)\eta}\beta(s)\mathrm{d}s}+1-m_{0}\right).
\end{align*}
Moreover, we can compute that
\begin{align*}
&\left(\int_{(k-1)\eta}^{k\eta}\left[\frac{1}{2}e^{\int_{t}^{k\eta}f(T-s)\mathrm{d}s}(g(T-t))^{2}L(T-t)\right]^{2}\mathrm{d}t\right)^{1/2}
\\
&\leq\max(1,L_{0})\left(\int_{(k-1)\eta}^{k\eta}e^{\int_{t}^{k\eta}\beta(T-v)\mathrm{d}v}(\beta(K\eta-t))^{2}\mathrm{d}t\right)^{1/2}
\\
&\leq\max(1,L_{0})\max_{(k-1)\eta\leq t\leq k\eta}\beta(K\eta-t)
\left(\int_{(k-1)\eta}^{k\eta}e^{\int_{t}^{k\eta}\beta(T-v)\mathrm{d}v}\beta(K\eta-t)\mathrm{d}t\right)^{1/2}
\\
&=\max(1,L_{0})\max_{(k-1)\eta\leq t\leq k\eta}\beta(K\eta-t)\left(e^{\int_{(k-1)\eta}^{k\eta}\beta(T-v)\mathrm{d}v}-1\right)^{1/2}.
\end{align*}

By using $T=K\eta$, we conclude that
\begin{align}
&\mathcal{W}_{2}(\mathcal{L}(\mathbf{u}_{K}),p_{0})
\nonumber
\\
&\leq\frac{\Vert\mathbf{x}_{0}\Vert_{L_{2}}}{\sqrt{m_{0}e^{\int_{0}^{K\eta}\beta(s)\mathrm{d}s}+1-m_{0}}}
\nonumber
\\
&\quad+\sum_{k=1}^{K}
\frac{e^{\int_{k\eta}^{K\eta}\frac{1}{2}\beta(K\eta-t)\mathrm{d}t+\int_{k\eta}^{K\eta}\eta\max(1,L_{0}^{2})(\beta(K\eta-t))^{2}\mathrm{d}t+\frac{1}{2}L_{1}\eta\int_{k\eta}^{K\eta}\beta(K\eta-t)\mathrm{d}t}}
{\left(m_{0}e^{\int_{0}^{(K-k)\eta}\beta(s)\mathrm{d}s}+1-m_{0}\right)^{\frac{1}{2}e^{-\frac{1}{2}\eta\max_{0\leq t\leq T}\beta(t)}}}
\nonumber
\\
&\qquad
\cdot
\Bigg(\left(\frac{L_{1}}{2}\eta\left(1+\Vert\mathbf{x}_{0}\Vert_{L_{2}}+\left(\Vert\mathbf{x}_{0}\Vert_{L_{2}}^{2}+d\right)^{1/2}\right)
+\frac{M}{2}\right)2\left(e^{\int_{(k-1)\eta}^{k\eta}\frac{1}{2}\beta(T-v)\mathrm{d}v}-1\right)
\nonumber
\\
&\quad
+\sqrt{\eta}\max(1,L_{0})\max_{(k-1)\eta\leq t\leq k\eta}\beta(K\eta-t)\left(e^{\int_{(k-1)\eta}^{k\eta}\beta(T-v)\mathrm{d}v}-1\right)^{1/2}
\nonumber
\\
&\qquad\qquad\cdot
\Bigg(\left(\Vert\mathbf{x}_{0}\Vert_{L_{2}}+\left(\Vert\mathbf{x}_{0}\Vert_{L_{2}}^{2}+d\right)^{1/2}\right)
\left(\frac{1}{2}+\max(1,L_{0})\right)
\int_{T-k\eta}^{T-(k-1)\eta}\beta(s)\mathrm{d}s
\nonumber
\\
&\qquad\qquad\qquad\qquad
+\left(L_{1}T+\Vert\nabla\log p_{0}(\mathbf{0})\Vert\right)\int_{T-k\eta}^{T-(k-1)\eta}\frac{1}{2}\beta(s)\mathrm{d}s\Bigg)
\Bigg).
\end{align}
This completes the proof.
\end{proof}


\subsubsection{Example: $\beta(t)\equiv b$}
We consider the special case 
$\beta(t)\equiv b$ for some $b>0$. 
This includes the special case $\beta(t)\equiv 2$
that is studied in Chen et al. (2023) \cite{chen2023probability}.

\begin{corollary}[Restatement of Corollary~\ref{cor:VP:const:main:paper}]\label{cor:VP:const}
Assume $\beta(t)\equiv b$. 
Then, we have $\mathcal{W}_{2}(\mathcal{L}(\mathbf{u}_{K}),p_{0})\leq\mathcal{O}(\epsilon)$
after
$K=\mathcal{O}\left(\frac{\sqrt{d}}{\epsilon}(\log(\frac{d}{\epsilon}))^{2}\right)$
iterations provided that $M\leq\frac{\epsilon}{\log(\sqrt{d}/\epsilon)}$ and $\eta\leq\frac{\epsilon}{\sqrt{d}\log(\sqrt{d}/\epsilon)}$.
\end{corollary}


\begin{proof}
When $\beta(t)\equiv b$ for some $b>0$, by Corollary~\ref{cor:VP:4} and \eqref{eq:L2-x0}, we can compute that
\begin{align*}
&\mathcal{W}_{2}(\mathcal{L}(\mathbf{u}_{K}),p_{0})
\\
&\leq\frac{\Vert\mathbf{x}_{0}\Vert_{L_{2}}}{\sqrt{m_{0}e^{bK\eta}+1-m_{0}}}
+\sum_{k=1}^{K}
\frac{e^{(K-k)\eta\frac{1}{2}b+(K-k)\eta^{2}\max(1,L_{0}^{2})b^{2}+\frac{1}{2}L_{1}\eta^{2}(K-k)b}}
{\left(m_{0}e^{b(K-k)\eta}+1-m_{0}\right)^{\frac{1}{2}e^{-\frac{1}{2}\eta b}}}
\nonumber
\\
&\qquad
\cdot
\Bigg(\left(\frac{L_{1}}{2}\eta\left(1+\Vert\mathbf{x}_{0}\Vert_{L_{2}}+\left(\Vert\mathbf{x}_{0}\Vert_{L_{2}}^{2}+d\right)^{1/2}\right)
+\frac{M}{2}\right)2\left(e^{\frac{1}{2}b\eta}-1\right)
\nonumber
\\
&\quad
+\sqrt{\eta}\max(1,L_{0})b\left(e^{b\eta}-1\right)^{1/2}
\cdot
\Bigg(\left(\Vert\mathbf{x}_{0}\Vert_{L_{2}}+\left(\Vert\mathbf{x}_{0}\Vert_{L_{2}}^{2}+d\right)^{1/2}\right)
\left(\frac{1}{2}+\max(1,L_{0})\right)
b\eta
\nonumber
\\
&\qquad\qquad\qquad\qquad
+\left(L_{1}T+\Vert\nabla\log p_{0}(\mathbf{0})\Vert\right)\frac{1}{2}b\eta\Bigg)
\Bigg),
\end{align*}
which implies that
\begin{align*}
\mathcal{W}_{2}(\mathcal{L}(\mathbf{u}_{K}),p_{0})
&\leq\mathcal{O}\Bigg(\frac{\sqrt{d}}{e^{\frac{1}{2}bK\eta}}
+e^{K\eta^{2}\max(1,L_{0}^{2})b^{2}+\frac{1}{2}L_{1}\eta^{2}Kb}
\cdot e^{\frac{1}{2}bK\eta\left(e^{\frac{1}{2}\eta b}-1\right)}
\nonumber
\\
&\qquad\qquad\qquad\qquad\qquad
\cdot
K\Bigg(\left(M+L_{1}\eta\sqrt{d}\right)\eta+\eta^{2}\left(\sqrt{d}+K\eta\right)\Bigg)\Bigg)
\nonumber
\\
&\leq\mathcal{O}(\epsilon),
\end{align*}
and the condition $\eta\leq\bar{\eta}$ (where $\bar{\eta}$ is defined in \eqref{bar:eta}) holds
provided that
\begin{equation*}
K\eta=\frac{2}{b}\log\left(\frac{\sqrt{d}}{\epsilon}\right), 
\qquad
M\leq\eta\sqrt{d},
\qquad
\eta\leq\frac{\epsilon}{\sqrt{d}(K\eta)},
\end{equation*}
which implies that 
\begin{equation*}
M\leq\frac{\epsilon}{\log(\sqrt{d}/\epsilon)},
\qquad
\eta\leq\frac{\epsilon}{\sqrt{d}\log(\sqrt{d}/\epsilon)}
\end{equation*}
and $K\geq\mathcal{O}\left(\frac{\sqrt{d}}{\epsilon}(\log(\frac{d}{\epsilon}))^{2}\right)$.
This completes the proof.
\end{proof}

\subsubsection{Example: $\beta(t)=(b+at)^{\rho}$}
We consider the special case 
$\beta(t)=(b+at)^{\rho}$. 
This includes the special case $\beta(t)=b+at$ when $\rho=1$
that is studied in \cite{Ho2020}.
Then we can obtain the following result from Corollary~\ref{cor:VP:4}. 

\begin{corollary}[Restatement of Corollary~\ref{cor:VP:6:main:paper}]\label{cor:VP:6}
Assume $\beta(t)=(b+at)^{\rho}$. 
Then, we have $\mathcal{W}_{2}(\mathcal{L}(\mathbf{u}_{K}),p_{0})\leq\mathcal{O}(\epsilon)$
after
$K=\mathcal{O}\left(\frac{\sqrt{d}}{\epsilon}(\log(\frac{d}{\epsilon}))^{\frac{\rho+2}{\rho+1}}\right)$
iterations provided that $M\leq\frac{\epsilon}{\log(\sqrt{d}/\epsilon)}$ and $\eta\leq\frac{\epsilon}{\sqrt{d}\log(\sqrt{d}/\epsilon)}$.
\end{corollary}


\begin{proof}
When $\beta(t)=(b+at)^{\rho}$, by Corollary~\ref{cor:VP:4} and \eqref{eq:L2-x0}, we can compute that
\begin{align*}
&\mathcal{W}_{2}(\mathcal{L}(\mathbf{u}_{K}),p_{0})
\\
&\leq\frac{\sqrt{2d/m_{0}}+\Vert\mathbf{x}_{\ast}\Vert}{\sqrt{m_{0}e^{\frac{1}{a(\rho+1)}((b+aK\eta)^{\rho+1}-b^{\rho+1})}+1-m_{0}}}
\nonumber
\\
&\qquad
+\sum_{k=1}^{K}
\frac{e^{\frac{1+L_{1}\eta}{2a(\rho+1)}((b+a(K-k)\eta)^{\rho+1}-b^{\rho+1})+\eta\max(1,L_{0}^{2})\frac{1}{a(2\rho+1)}((b+a(K-k)\eta)^{2\rho+1}-b^{2\rho+1})}}{\left(e^{\frac{1}{a(\rho+1)}((b+a(K-k)\eta)^{\rho+1}-b^{\rho+1})}+1-m_{0}\right)^{\frac{1}{2}e^{-\frac{1}{2}\eta(b+aK\eta)^{\rho}}}}
\nonumber
\\
&\qquad
\cdot
\Bigg(\left(\frac{M}{2}+\frac{L_{1}}{2}\eta\left(1+2\left(\sqrt{2d/m_{0}}+\Vert\mathbf{x}_{\ast}\Vert\right)+\sqrt{d}\right)\right)
\nonumber
\\
&\qquad\qquad\qquad\qquad\cdot 
2\left(e^{\frac{1}{2}\frac{\left((b+a(K-k+1)\eta)^{\rho+1}-(b+a(K-k)\eta)^{\rho+1}\right)}{a(\rho+1)}}-1\right)
\nonumber
\\
&\quad+\sqrt{\eta}
\left(\frac{1}{2}+\max(1,L_{0})\right)
\cdot
\left(e^{\frac{\left((b+a(K-k+1)\eta)^{\rho+1}-(b+a(K-k)\eta)^{\rho+1}\right)}{a(\rho+1)}}-1\right)^{1/2}
\nonumber
\\
&\cdot
\Bigg(\left(2\left(\sqrt{2d/m_{0}}+\Vert\mathbf{x}_{\ast}\Vert\right)+\sqrt{d}\right)
\left(\frac{1}{2}+\max(1,L_{0})\right)
\frac{\left((b+a(K-k+1)\eta)^{\rho+1}-(b+a(K-k)\eta)^{\rho+1}\right)}{a(\rho+1)}
\\
&\qquad\qquad
+\left(L_{1}K\eta+\Vert\nabla\log p_{0}(\mathbf{0})\Vert\right)
\frac{1}{2}\frac{\left((b+a(K-k+1)\eta)^{\rho+1}-(b+a(K-k)\eta)^{\rho+1}\right)}{a(\rho+1)}\Bigg)\Bigg).
\end{align*}

We can compute that
\begin{align*}
&\mathcal{W}_{2}(\mathcal{L}(\mathbf{u}_{K}),p_{0})
\nonumber
\\
&\leq\mathcal{O}\Bigg(\frac{\sqrt{d}}{e^{\frac{1}{2a(\rho+1)}(b+aK\eta)^{\rho+1}}}
\nonumber
\\
&\qquad
+\sum_{k=1}^{K}
\frac{e^{\frac{1+L_{1}\eta}{2a(\rho+1)}((b+a(K-k)\eta)^{\rho+1}-b^{\rho+1})+\eta\max(1,L_{0}^{2})\frac{1}{a(2\rho+1)}(b+aK\eta)^{2\rho+1}}}{\left(m_{0}e^{\frac{1}{a(\rho+1)}((b+a(K-k)\eta)^{\rho+1}-b^{\rho+1})}+1-m_{0}\right)^{\frac{1}{2}e^{-\frac{1}{2}\eta(b+aK\eta)^{\rho}}}}
\nonumber
\\
&\qquad\qquad
\cdot
\Bigg(\left(M+L_{1}\eta\left(1+2\left(\sqrt{2d/m_{0}}+\Vert\mathbf{x}_{\ast}\Vert\right)+\sqrt{d}\right)\right)\left(e^{\frac{1}{2}(b+a(K-k+1)\eta)^{\rho}\eta}-1\right)
\nonumber
\\
&\qquad
+\sqrt{\eta}\left(e^{(b+a(K-k+1)\eta)^{\rho}\eta}-1\right)^{1/2}
\cdot
\left(\left(\sqrt{d}+K\eta\right)((K-k+1)\eta)^{\rho}\eta\right)\Bigg)\Bigg)
\nonumber
\\
&\leq\mathcal{O}\Bigg(\frac{\sqrt{d}}{e^{\frac{1}{2a(\rho+1)}(b+aK\eta)^{\rho+1}}}
+e^{\eta\max(1,L_{0}^{2})\frac{1}{a(2\rho+1)}(b+aK\eta)^{2\rho+1}}
\cdot e^{\frac{1}{2a(\rho+1)}(b+aK\eta)^{\rho+1}\left(e^{\frac{1}{2}\eta(b+aK\eta)^{\rho}}-1\right)}
\nonumber
\\
&\qquad\qquad\qquad\qquad\qquad
\cdot
K\Bigg(\left(M+L_{1}\eta\sqrt{d}\right)(K\eta)^{\rho}\eta+\eta^{2}(K\eta)^{2\rho}\left(\sqrt{d}+K\eta\right)\Bigg)\Bigg)
\nonumber
\\
&\leq\mathcal{O}(\epsilon),
\end{align*}
and the condition $\eta\leq\bar{\eta}$ (where $\bar{\eta}$ is defined in \eqref{bar:eta}) holds
provided that
\begin{equation*}
K\eta=\frac{(2a(\rho+1))^{\frac{1}{\rho+1}}}{a}\left(\log\left(\sqrt{d}/\epsilon\right)\right)^{\frac{1}{\rho+1}}-\frac{b}{a}, 
\qquad
M\leq\eta\sqrt{d},
\qquad
\eta\leq\frac{\epsilon}{\sqrt{d}(K\eta)^{\rho+1}},
\end{equation*}
which implies that 
\begin{equation*}
M\leq\frac{\epsilon}{\log(\sqrt{d}/\epsilon)},
\qquad
\eta\leq\frac{\epsilon}{\sqrt{d}\log(\sqrt{d}/\epsilon)}
\end{equation*}
and $K\geq\mathcal{O}\left(\frac{\sqrt{d}}{\epsilon}(\log(\frac{d}{\epsilon}))^{\frac{\rho+2}{\rho+1}}\right)$.
This completes the proof.
\end{proof}

\end{document}